\newenvironment{proof}{\par\noindent{\bf Proof\ }}{\hfill\BlackBox\\[2mm]}
\begin{document}

\title{Identifiability and Asymptotics in Learning Homogeneous Linear ODE Systems from Discrete Observations}

\author{\name Yuanyuan Wang\email yuanyuanw2@student.unimelb.edu.au\\
\name Wei Huang \email wei.huang@unimelb.edu.au\\
\name Mingming Gong \thanks{Corresponding author.} \email mingming.gong@unimelb.edu.au\\
\name Xi Geng\email xi.geng@unimelb.edu.au\\
\addr School of Mathematics and Statistics\\
     University of Melbourne, Melbourne, Australia
\AND
\name Tongliang Liu \email tongliang.liu@sydney.edu.au\\
\addr School of Computer Science, Faculty of Engineering\\ 
The University of Sydney, Sydney, Australia
\AND
\name Kun Zhang \email kunz1@cmu.edu\\
\addr Carnegie Mellon University, Pittsburgh, PA, USA\\
Mohamed bin Zayed University of Artificial Intelligence, Abu Dhabi, UAE
\AND
\name Dacheng Tao \email dacheng.tao@sydney.edu.au\\
\addr School of Computer Science, Faculty of Engineering\\ 
The University of Sydney, Sydney, Australia
}

\editor{Lorenzo Rosasco}

\maketitle

\begin{abstract} 
\noindent  Ordinary Differential Equations (ODEs) have recently gained a lot of attention in machine learning. However, the theoretical aspects, for example, identifiability and asymptotic properties of statistical estimation are still obscure. This paper derives a sufficient condition for the identifiability of homogeneous linear ODE systems from a sequence of equally-spaced error-free observations sampled from a single trajectory. When observations are disturbed by measurement noise, we prove that under mild conditions, the parameter estimator based on the Nonlinear Least Squares (NLS) method is consistent and asymptotic normal with $n^{-1/2}$ convergence rate. Based on the asymptotic normality property, we construct confidence sets for the unknown system parameters and propose a new method to infer the causal structure of the ODE system, that is, inferring whether there is a causal link between system variables. Furthermore, we extend the results to degraded observations, including aggregated and time-scaled ones. To the best of our knowledge, our work is the first systematic study of the identifiability and asymptotic properties in learning linear ODE systems. We also construct simulations with various system dimensions to illustrate the established theoretical results.
\end{abstract}

\begin{keywords}
  identifiability, linear ODEs, asymptotic analysis, nonlinear least squares, causal discovery
\end{keywords}

\section{Introduction} \label{Introduction}
Ordinary Differential Equations (ODEs) have been widely used to model dynamic systems in various scientific fields such as physics \citep{ferrell2011modeling,xiu2003oscillation,massatt1983limiting}, biology \citep{hemker1972numerical,huang2006hierarchical,li2011large,lu2011high}, and economics \citep{tu2012dynamical,weber2011optimal,norton1981market}. In recent years, ODEs are also attracting increasing attention in the machine learning community. For instance, ODEs have been used to build new families of deep neural networks \citep{chen2018neural,yan2019robustness,zhang2019approximation} and the connection between ODEs and structural causal models has been established \citep{mooij2013ordinary, rubenstein2018deterministic}.

Existing works mostly focus on the parameter estimation of ODEs  \citep{hemker1972numerical, li2005parameter,xue2010sieve,varah1982spline,liang2008parameter,ramsay1996principal, ramsay2007parameter,poyton2006parameter}. However, before estimating the unknown parameters, it is essential to perform an identifiability analysis of an ODE system; that is, uncovering the mathematical conditions under which the parameters can be uniquely determined from noise-free observations. If a system is not identifiable, the estimation procedure may produce erroneous and misleading parameter estimates \citep{qiu2022identifiability}. This is detrimental in many applications; for example, the estimated parameter can easily lead to wrong causal structures of non-identifiable ODE systems.

The contribution of this paper are summarized as follows:

{\bf Derive identifiability condition for linear ODEs from discrete observations.} We derive the condition for the identifiability of homogeneous linear ODE systems from discrete observations (collected at discrete time points). Specifically, we consider the setting where the observations are sampled from a single trajectory generated from one initial condition. This setting is prevalent in applications that can only access a single trajectory due to the unrepeatable process of the measurements collection. Our identifiability analysis is built upon the work of \citet{stanhope2014identifiability}, which constructs a systematic study of the identifiability of homogeneous linear ODE systems from a continuous trajectory with known initial conditions. Our research extends this framework to more practical scenarios where we only have discrete observations and do not know the initial conditions.

{\bf Derive asymptotic properties for NLS estimator.} Based on our identifiability results, we study the asymptotic properties of parameter estimation from data with measurement noises. We focus on the estimator obtained by the Nonlinear Least Squares (NLS) method, which is simple and widely used in dynamical systems \citep{marquardt1963algorithm, johnson1981analysis,wu1981asymptotic, johnson198516,maeder1990nonlinear}. However, the asymptotic properties of NLS based estimators for ODE systems have not been systematically studied due to the lack of identifiability conditions. We prove that under mild conditions, the NLS estimator is consistent and asymptotic normal with $n^{-1/2}$ convergence rate. In addition, based on the established asymptotic normality theory, we construct the confidence sets of unknown parameters and propose a new method to infer the causal structure of ODE systems, that is, inferring whether there is a causal link between system variables. 

{\bf Extend theoretical results to degraded observations.} We extend the consistency and asymptotic normality results to the observations with degraded quality, including aggregated and time-scaled observations. The aggregated observations are usually caused by time aggregation in the data collection process \citep{silvestrini2008temporal}. The time-scaled observations result from data preprocessing to fit the ODE model. We prove that the ODE model generating the original observations is identifiable from the degraded observations. The asymptotic properties can be naturally extended given the identifiability results. Simulations with various system dimensions are constructed to verify the developed theoretical results.

\section{Identifiability Condition of Homogeneous Linear ODE Systems}\label{sec:identifiability}

Linear ODE systems hold significant importance in modelling and comprehending the dynamics of various real-world phenomena across diverse disciplines. For instance, well-established examples such as the Spring-Mass-Damper system \citep{sharma2014modeling}, the simple population growth model \citep{anisiu2014lotka}, and the heat cooling problem model \citep{langtangen2016scaling} all belong to the category of linear ODE systems. These systems often admit closed-form analytical solutions, thereby facilitating precise predictions of system behaviour and enabling detailed analysis. Moreover, linear ODE systems serve as a foundation for approximating nonlinear systems through linearization techniques, providing valuable insights into the more intricate nonlinear system behaviour. In this paper, we focus on an important special case of linear ODE systems: the homogeneous linear ODE system.

A homogeneous linear ODE system can be defined as:
\begin{equation}\label{eq:ODE model}
\begin{split}
    \Dot{\boldsymbol{x}}(t) &= A \boldsymbol{x}(t) \, , \\
    \boldsymbol{x}(0) &= \boldsymbol{x}_{0}\, ,
\end{split}
\end{equation}
where $t \in [0,\infty)$ denotes the independent variable time, $\boldsymbol{x}(t) \in \mathbb{R}^d $ denotes the state of the ODE system at time $t$, $\Dot{\boldsymbol{x}}(t)$ denotes the first derivative of $\boldsymbol{x}(t)$ with respect to time t, and we refer to both the parameter matrix $A\in \mathbb{R}^{d \times d}$ and the initial condition  $\boldsymbol{x}_0 \in \mathbb{R}^d$ as the system parameters. In this paper we focus on ODE systems with complete observation, that is, all state variables are observable. Therefore, the measurement model can be described as:
\begin{equation}
\boldsymbol{y}(t) = \boldsymbol{x}(t)\, .
\end{equation}
The solution of ODE system (\ref{eq:ODE model}) can be explicitly expressed as:
\begin{equation}\label{eq:ODE solution}
    \boldsymbol{x}(t; \boldsymbol{x}_0, A) = e^{At}\boldsymbol{x}_0 \, ,
\end{equation}
which is also called a trajectory. The symbol $e$ in Equation \eqref{eq:ODE solution} denotes the matrix exponential function. In this paper, we focus on identifiability analysis of the ODEs from observations sampled from a \textbf{single} $d$-dimensional trajectory generated with an initial condition $\boldsymbol{x}_0$. Under the setting of our paper, the term identifiability means that given the error-free observation from a single trajectory of the ODE system, whether the system parameters $A$ and $\boldsymbol{x}_0$ can be uniquely determined.

\subsection{Identifiability Condition from a Whole Trajectory}
Given a fixed initial condition $\boldsymbol{x}_0$, \citet{stanhope2014identifiability} derived a necessary and sufficient condition for identifying the ODE system (\ref{eq:ODE model}) from a whole trajectory, $\{e^{At}\boldsymbol{x}_0\}_{t\in[0,\infty)}$, that is,
$\{\boldsymbol{x}_0, A\boldsymbol{x}_0, \ldots, A^{d-1}\boldsymbol{x}_0\}$ are linearly independent. The notation $A^k$ denotes the $k$th power of the matrix $A$, represented as the product of $k$ copies of matrix $A$, that is, $A^k = A \times A \times \ldots \times A$. However, in practice, we cannot usually observe the initial condition $\boldsymbol{x}_0$. Under this practical circumstance, we need to treat the initial condition also as a system parameter and identify it from the data. In the following, we extend the identifiability definition and condition in \citep{stanhope2014identifiability} to the case where both parameter matrix $A$ and initial condition $\boldsymbol{x}_0$ are system parameters.

\begin{definition}\label{def:identifiability for A and b}
Let $(M^0, \Omega)$ be given parameter spaces,  with $M^0 \subset \mathbb{R}^d$ and $\Omega \subset \mathbb{R}^{d \times d}$. The ODE system $(\ref{eq:ODE model})$ is said to be identifiable in $(M^0, \Omega)$, if for all $\boldsymbol{x}_{0}, \boldsymbol{x}_{0}' \in M^0$ and all $A, A' \in \Omega$,  with ($\boldsymbol{x}_{0}, A) \neq  (\boldsymbol{x}_{0}',  A'$), it holds that $\boldsymbol{x}(\cdot; \boldsymbol{x}_{0}, A) \neq \boldsymbol{x}(\cdot; \boldsymbol{x}_{0}', A')$.
\end{definition}

Here $\boldsymbol{x}(\cdot; \boldsymbol{x}_{0}, A) \neq \boldsymbol{x}(\cdot; \boldsymbol{x}_{0}', A')$ means that there exists at least one $t \geq 0$ such that $\boldsymbol{x}(t; \boldsymbol{x}_{0}, A) \neq \boldsymbol{x}(t; \boldsymbol{x}_{0}', A')$. Then we establish the condition for identifiability of the ODE system based on Definition \ref{def:identifiability for A and b}.
\begin{lemma}\label{theorem:identifiability for A and b} 
Suppose that $M^0 \subset \mathbb{R}^d$ and $\Omega \subset \mathbb{R}^{d \times d}$ are both open subsets. Then the ODE system $(\ref{eq:ODE model})$ is identifiable in $(M^0, \Omega)$ if and only if $\{\boldsymbol{x}_0, A\boldsymbol{x}_0, \ldots, A^{d-1}\boldsymbol{x}_0\}$ are linearly independent for all $\boldsymbol{x}_0 \in M^0$ and all $A\in \Omega$.
\end{lemma}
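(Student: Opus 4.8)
The plan is to reduce the statement for the pair $(\boldsymbol{x}_0, A)$ to the known result of Stanhope et al. for fixed initial condition, using the openness of $M^0$ and $\Omega$ to upgrade "identifiability of $A$ for each fixed $\boldsymbol{x}_0$" into "joint identifiability of $(\boldsymbol{x}_0, A)$." Both directions hinge on the elementary observation that the trajectory $\boldsymbol{x}(\cdot;\boldsymbol{x}_0,A) = e^{At}\boldsymbol{x}_0$ determines, by repeated differentiation at $t=0$, the whole sequence $\boldsymbol{x}_0, A\boldsymbol{x}_0, A^2\boldsymbol{x}_0,\ldots$; conversely, by the Cayley--Hamilton theorem, the first $d$ of these vectors together with $A$ acting on their span determine all higher ones and hence the full trajectory via the power series for $e^{At}$.

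For the "if" direction, suppose the Krylov vectors $\{\boldsymbol{x}_0, A\boldsymbol{x}_0,\ldots,A^{d-1}\boldsymbol{x}_0\}$ are linearly independent for every $\boldsymbol{x}_0\in M^0$ and every $A\in\Omega$. Take $(\boldsymbol{x}_0,A)\neq(\boldsymbol{x}_0',A')$ with equal trajectories. Equality of trajectories at $t=0$ gives $\boldsymbol{x}_0=\boldsymbol{x}_0'$; differentiating $k$ times at $t=0$ gives $A^k\boldsymbol{x}_0 = (A')^k\boldsymbol{x}_0$ for all $k\geq 0$. So $A$ and $A'$ agree on the Krylov subspace generated by $\boldsymbol{x}_0$, which by the independence hypothesis is all of $\mathbb{R}^d$; hence $A=A'$, a contradiction. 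Note this direction does not even use openness.

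For the "only if" direction I would argue by contraposition: suppose there exist $\boldsymbol{x}_0\in M^0$, $A\in\Omega$ with $\{\boldsymbol{x}_0,A\boldsymbol{x}_0,\ldots,A^{d-1}\boldsymbol{x}_0\}$ linearly dependent, and produce a distinct pair $(\boldsymbol{x}_0',A')$ in $(M^0,\Omega)$ with the same trajectory. The natural route is to keep $\boldsymbol{x}_0'=\boldsymbol{x}_0$ and perturb $A$: since the Krylov vectors span a proper subspace $V\subsetneq\mathbb{R}^d$, pick a nonzero linear map $N$ vanishing on $V$ and whose image avoids... more carefully, choose $N$ with $N V = 0$ but $N\neq 0$; then for small $\varepsilon$, $A' = A+\varepsilon N$ satisfies $A'\boldsymbol{x}_0 = A\boldsymbol{x}_0$ and inductively $(A')^k\boldsymbol{x}_0 = A^k\boldsymbol{x}_0$ (because $A^k\boldsymbol{x}_0\in V$ whenever $k\le d-1$, and Cayley--Hamilton extends this to all $k$, so $N$ never "sees" a Krylov vector), whence $e^{A't}\boldsymbol{x}_0 = e^{At}\boldsymbol{x}_0$ for all $t$. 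Here openness of $\Omega$ is exactly what guarantees $A'=A+\varepsilon N\in\Omega$ for $\varepsilon$ small; and $A'\neq A$ since $N\neq 0$, so $(\boldsymbol{x}_0,A')\neq(\boldsymbol{x}_0,A)$ and the system is not identifiable. The openness of $M^0$ is not needed for this construction but is harmless in the statement.

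The main obstacle is the inductive step $(A')^k\boldsymbol{x}_0 = A^k\boldsymbol{x}_0$ for all $k$, not just $k\le d-1$: one must verify that $A^k\boldsymbol{x}_0$ stays inside the $A$-invariant subspace $V=\mathrm{span}\{\boldsymbol{x}_0,\ldots,A^{d-1}\boldsymbol{x}_0\}$ for every $k$, which follows from Cayley--Hamilton ($A^d\boldsymbol{x}_0$ is a combination of lower powers), and then propagate the equality $(A')^k\boldsymbol{x}_0 = A^k\boldsymbol{x}_0$ through the recursion $(A')^{k+1}\boldsymbol{x}_0 = A'(A')^k\boldsymbol{x}_0 = A' A^k\boldsymbol{x}_0 = A A^k\boldsymbol{x}_0$, the last equality because $A^k\boldsymbol{x}_0\in V$ and $A'$ agrees with $A$ on $V$. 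Once that is in place, summing the matrix exponential series term by term against $\boldsymbol{x}_0$ closes the argument.
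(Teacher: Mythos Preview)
Your proof is correct and follows essentially the same route as the paper: both observe that equality of trajectories at $t=0$ forces $\boldsymbol{x}_0=\boldsymbol{x}_0'$, reducing to the fixed-initial-condition case. Where the paper simply cites Stanhope et al.'s Theorem~2.5 for that case, you spell out the Krylov-subspace/perturbation argument directly; your remark that only openness of $\Omega$ (not of $M^0$) is actually used in the construction $A'=A+\varepsilon N$ is a correct sharpening.
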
 

The proof of Lemma~\ref{theorem:identifiability for A and b} is a straightforward extension of the proof of Theorem 2.5 in \citep{stanhope2014identifiability} and can be found in Appendix~\ref{proof:theorem2.1}. From the Lemma, we can see that the condition for identifying the system parameters $(A,\boldsymbol{x}_0)$ is the same as that for only identifying $A$, except that the linear independence condition needs to hold for all possible $\boldsymbol{x}_0$ in $M^0$.

\subsection{Identifiability Condition from Discrete Observations}
In practice, typically, we can only access a sequence of discrete observations sampled from a trajectory instead of knowing the whole trajectory. Thus, from now on, we focus on the case where only discrete observations from a trajectory are available. In particular, We extend the identifiability definition of the ODE system \eqref{eq:ODE model} as follows.
\begin{definition}[$(\boldsymbol{x}_0, A)$-identifiability]\label{def:identifiability for discrete observations}
For $\boldsymbol{x}_0 \in \mathbb{R}^d$ and $A \in \mathbb{R}^{d \times d}$, for any $n_0 \geq 1$, let $t_j, j = 1, \ldots, n_0$ be any $n_0$ time points and $\boldsymbol{x}_j := \boldsymbol{x}(t_j;\boldsymbol{x}_0, A)$ be the error-free observation of the trajectory $\boldsymbol{x}(t; \boldsymbol{x}_0, A)$ at time $t_j$. We say the ODE system $(\ref{eq:ODE model})$ is $(\boldsymbol{x}_0, A)$ identifiable from $\boldsymbol{x}_1, \ldots, \boldsymbol{x}_{n_0}$, if for all $\boldsymbol{x}_{0}'\in \mathbb{R}^d$ and all $A'\in \mathbb{R}^{d \times d}$, with ($\boldsymbol{x}_{0}', A') \neq  (\boldsymbol{x}_{0}, A$), it holds that $\exists j$ for $j=1,\ldots,n_0$, such that $\boldsymbol{x}(t_j; \boldsymbol{x}_{0}', A') \neq \boldsymbol{x}(t_j; \boldsymbol{x}_{0}, A)$.
\end{definition} 

This definition is inspired by \citep[Definition 1.6]{qiu2022identifiability}, and it is not a simple extension of Definition~\ref{def:identifiability for A and b} to discrete observations when $M^0 := \mathbb{R}^d$ and $\Omega := \mathbb{R}^{d \times d}$. The reason is that in Definition~\ref{def:identifiability for discrete observations}, the initial condition $\boldsymbol{x}_0$ and parameter matrix $A$ are fixed, and $\boldsymbol{x}_0'$ and $A'$ are an arbitrary vector and an arbitrary matrix in $M^0$ and $\Omega$ respectively. However, in Definition~\ref{def:identifiability for A and b}, both $\boldsymbol{x}_0$ and $\boldsymbol{x}_0'$ are arbitrary vectors in $M^0$ and both $A$ and $A'$ are arbitrary matrices in $\Omega$. In other words, Definition~\ref{def:identifiability for discrete observations} describes an intrinsic property of a single system instead of a collective property of a set of systems. In dealing with the identifiability problem of an ODE system, we aim to check whether the true underlying system parameter $(\boldsymbol{x}_0, A)$ is uniquely determined by error-free observations. Therefore, $(\boldsymbol{x}_0, A)$-identifiability described in Definition~\ref{def:identifiability for discrete observations} is a more natural way to define the identifiability of the ODE system from the practical perspective, and all of the other relevant definitions and theorems in the rest of this paper are derived based on Definition~\ref{def:identifiability for discrete observations}.

To derive the identifiability condition from discrete observations, we focus on the equally-spaced observations. Specifically, data are collected on an equally-spaced time grid, that is, $t_{j+1}-t_j = \Delta t$ for a constant $\Delta t>0$ and $j=1, 2, \ldots$. The motivation is that a time series is most commonly collected equally spaced in practice, which follows the standard rules of collecting data either from a scientific experiment or a natural phenomenon.
Then, based on Definition \ref{def:identifiability for discrete observations}, we derive a sufficient condition for the identifiability of the ODE system from discrete observations. 

\begin{theorem}\label{theorem:identifiability from discrete observations}
For $ \boldsymbol{x}_0 \in$ $\mathbb{R}^d$ and $A \in \mathbb{R}^{d \times d}$, the ODE system \textnormal{(\ref{eq:ODE model})} is $(\boldsymbol{x}_0, A)$ identifiable from \textbf{any} $d+1$ equally-spaced error-free observations $\boldsymbol{x}_1, \boldsymbol{x}_2, \cdots, \boldsymbol{x}_{d+1}$, if the following two conditions are satisfied. 
\begin{enumerate}
    \item[\textnormal{A1}] $\{\boldsymbol{x}_0, A\boldsymbol{x}_0, \ldots, A^{d-1}\boldsymbol{x}_0\}$ are linearly independent.
    \item[\textnormal{A2}] Parameter matrix $A$ has $d$ distinct real eigenvalues.
\end{enumerate}
\end{theorem}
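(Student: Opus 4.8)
The plan is to prove injectivity of the map $(\boldsymbol{x}_0,A)\mapsto(\boldsymbol{x}_1,\dots,\boldsymbol{x}_{d+1})$ on the set of parameters satisfying A1--A2 by reconstructing the parameters from the observations in three stages: first the one-step transition matrix $P:=e^{A\Delta t}$, then $A$, and finally $\boldsymbol{x}_0$. Write $\boldsymbol{x}_j=\boldsymbol{x}(t_j;\boldsymbol{x}_0,A)=e^{At_1}P^{\,j-1}\boldsymbol{x}_0$, and suppose $(\boldsymbol{x}_0',A')$ produces the same $d+1$ observations; with $P':=e^{A'\Delta t}$ this means $e^{A't_1}P'^{\,j-1}\boldsymbol{x}_0'=\boldsymbol{x}_j$ for $j=1,\dots,d+1$. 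The first thing I would establish is that A1 and A2 together force $\{\boldsymbol{x}_1,\dots,\boldsymbol{x}_d\}$ to be a basis of $\mathbb{R}^d$. Since $[\boldsymbol{x}_1\mid\cdots\mid\boldsymbol{x}_d]=e^{At_1}\,[\boldsymbol{x}_0\mid P\boldsymbol{x}_0\mid\cdots\mid P^{d-1}\boldsymbol{x}_0]$ and $e^{At_1}$ is invertible, it suffices that $\boldsymbol{x}_0$ be a cyclic vector for $P$. By A2, $A$ is diagonalizable with distinct real eigenvalues $\lambda_1,\dots,\lambda_d$ and a real eigenbasis $\{\boldsymbol{v}_i\}$; then $P$ has the same eigenbasis with eigenvalues $e^{\lambda_i\Delta t}$, which are distinct because $t\mapsto e^t$ is injective on $\mathbb{R}$. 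Writing $\boldsymbol{x}_0=\sum_i c_i\boldsymbol{v}_i$, a Vandermonde determinant computation shows that A1 (cyclicity of $\boldsymbol{x}_0$ for $A$) holds iff $c_i\neq0$ for all $i$, which is the same condition as cyclicity of $\boldsymbol{x}_0$ for $P$ (same eigenvectors, distinct eigenvalues).

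Next I would carry out the reconstruction. With $X:=[\boldsymbol{x}_1\mid\cdots\mid\boldsymbol{x}_d]$ invertible, the relation $\boldsymbol{x}_{j+1}=P\boldsymbol{x}_j=P'\boldsymbol{x}_j$ (which follows from $\boldsymbol{x}_{j+1}'=P'\boldsymbol{x}_j'$ together with $\boldsymbol{x}_j'=\boldsymbol{x}_j$) gives $PX=[\boldsymbol{x}_2\mid\cdots\mid\boldsymbol{x}_{d+1}]=P'X$, hence $P'=P$. To recover $A$, note that $A'$ commutes with $e^{A'\Delta t}=P$, and since $P$ has distinct eigenvalues every matrix commuting with $P$ is a polynomial in $P$ and hence is diagonalized by the same real eigenbasis; thus $A'=SD'S^{-1}$ with $S$ real and $D'=\mathrm{diag}(\mu_1,\dots,\mu_d)$, and realness of $A'$ forces $D'$ to be real. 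From $e^{D'\Delta t}=S^{-1}PS=\mathrm{diag}(e^{\lambda_i\Delta t})$ we get $e^{\mu_i\Delta t}=e^{\lambda_i\Delta t}$ with both exponents real, so $\mu_i=\lambda_i$ and $A'=A$. Finally $e^{At_1}\boldsymbol{x}_0'=\boldsymbol{x}_1=e^{At_1}\boldsymbol{x}_0$ together with invertibility of $e^{At_1}$ yields $\boldsymbol{x}_0'=\boldsymbol{x}_0$, so $(\boldsymbol{x}_0',A')=(\boldsymbol{x}_0,A)$ and the claimed identifiability follows.

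I expect the main obstacle to be the step $e^{A'\Delta t}=e^{A\Delta t}\Rightarrow A'=A$: the matrix logarithm is in general badly non-unique, and this is exactly where A2 does the decisive work, whereas A1 only governs the "span" part in the first stage. The points that need care are that $A'$ must commute with $P$, that the commutant of a matrix with distinct eigenvalues consists of polynomials in that matrix (so $A'$ inherits $P$'s real eigenvectors), and that a real diagonalizing similarity forces the candidate eigenvalues $\mu_i$ to be real, after which injectivity of the real exponential closes the argument. It is worth noting explicitly that the hypothesis A2 is used twice and in two different ways — "distinct" makes the relevant Vandermonde matrix nonsingular in stage one and pins down the commutant of $P$ in stage two, while "real" eliminates the $2\pi\mathrm{i}/\Delta t$ periodicity ambiguity of the logarithm — which also indicates why neither half of A2 can simply be dropped, and why no constraint on $\Delta t$ is needed.
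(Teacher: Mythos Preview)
Your argument is correct and follows the same three-stage skeleton as the paper's proof (recover $P=e^{A\Delta t}$ from the observation matrix, then $A$ from $P$, then $\boldsymbol{x}_0$). The first stage is essentially identical in content: both proofs reduce invertibility of $[\boldsymbol{x}_1\mid\cdots\mid\boldsymbol{x}_d]$ to a Vandermonde computation in the eigenbasis of $A$, though you package it conceptually as ``$\boldsymbol{x}_0$ is cyclic for $A$ iff all eigen-coefficients are nonzero iff $\boldsymbol{x}_0$ is cyclic for $P$,'' whereas the paper runs a contradiction argument directly. The genuine difference is in the second stage. The paper invokes an external result (Stanhope et al., Theorem~6.3) characterizing when a real square matrix has a unique real logarithm. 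You instead argue directly: $A'$ commutes with $e^{A'\Delta t}=P$, the commutant of a matrix with simple spectrum consists of polynomials in it, hence $A'$ is diagonalized by the (real) eigenbasis of $P$, and injectivity of the real exponential finishes. This is more self-contained and makes transparent exactly where each half of A2 is consumed --- ``distinct'' to pin down the commutant, ``real'' to kill the $2\pi i/\Delta t$ ambiguity --- which the paper's citation leaves implicit.

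One small framing correction: your opening sentence describes the goal as injectivity of the observation map \emph{on the set of parameters satisfying A1--A2}. That is weaker than what Definition~\ref{def:identifiability for discrete observations} requires, namely that no \emph{arbitrary} $(\boldsymbol{x}_0',A')\in\mathbb{R}^d\times\mathbb{R}^{d\times d}$ reproduces the observations. Fortunately your actual argument never imposes A1 or A2 on $(\boldsymbol{x}_0',A')$ --- the commutant step only uses that $P$ (built from the true $A$) has simple spectrum --- so the proof itself delivers the stronger statement; just adjust the opening sentence accordingly.
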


The proof of Theorem~\ref{theorem:identifiability from discrete observations} can be found in Appendix~\ref{proof:theorem2.2}. Here, \textbf{any} $d+1$ equally-spaced error-free observations means that the time interval between two consecutive observations: $\Delta t$ can take any positive value. In other words, the identifiability of the ODE system will not be influenced by the time-space between consecutive observations.

Now, in addition to the identifiability condition from a whole trajectory (condition~A1), when only discrete observations are available, we further require that $A$ has $d$ distinct real eigenvalues (condition A2). This condition seems to be restrictive, however, due to the limited observations (a set of equally-spaced observations sampled from a single trajectory), the condition cannot be relaxed. The reasons are as follows: (1) \textbf{Distinct eigenvalues}: as discussed in \citep{qiu2022identifiability}, almost every $A\in \mathbb{R}^{d\times d}$ (with respect to the Lebesgue measure on $\mathbb{R}^{d\times d}$) has $d$ distinct eigenvalues based on random matrix theory \citep{lehmann1991eigenvalue, tao2012topics}. Therefore, $A$ has $d$ distinct eigenvalues is a natural and reasonable assumption. In addition, to guarantee the ODE system \eqref{eq:ODE model} is $(\boldsymbol{x}_0, A)$ identifiable from any $d+1$ equally-spaced observations sampled from a single trajectory, we need $d$ consecutive observations of them to be linearly independent. To ensure any d equally-spaced observations sampled from a single trajectory are linearly independent, matrix A has $d$ distinct eigenvalues is a necessary condition. Moreover, in Section~\ref{section:asymptotic normality}, to derive the explicit formula of the asymptotic covariance matrix of the NLS estimator's asymptotic normal distribution, we require the parameter matrix has $d$ distinct eigenvalues. (2) \textbf{Real eigenvalues}: according to \citep[Corollary 6.4]{stanhope2014identifiability}, a matrix with complex eigenvalues is not identifiable from any set of equally-spaced observations sampled from a single trajectory. Therefore, we require the eigenvalues to be real. 

Our identifiability condition here is sufficient but not necessary. However, it allows us to derive explicit expressions of $A$ and $\boldsymbol{x}_0$ in terms of the observations. To see this, we set $\Phi(t) := e^{At}$, and let $\boldsymbol{X}_j$ denote the matrix $(\boldsymbol{x}_j, \boldsymbol{x}_{j+1},\ldots,\boldsymbol{x}_{j+d-1})\in\mathbb{R}^{d\times d}$ for $j=1,2$. Then $\boldsymbol{X}_2 = \Phi({\Delta t})\boldsymbol{X}_1$. We show in the proof that $\boldsymbol{X}_1$ is nonsingular if $A$ has $d$ distinct eigenvalues, and thus,  $\Phi(\Delta t) = \boldsymbol{X}_2 \boldsymbol{X}_1^{-1}$. Finally, we can obtain a unique real $A$ by taking logarithm of $e^{A \Delta t}=\boldsymbol{X}_2\boldsymbol{X}_1^{-1}$ if $A$ has $d$ distinct real eigenvalues \citep[Theorem 6.3]{stanhope2014identifiability}. The initial condition $\boldsymbol{x}_0$ can then be calculated by $e^{-At_1}\boldsymbol{x}_1$. Please refer to Appendix~\ref{proof:theorem2.2} for the detail.

Worth mentioning that we can not check whether the proposed identifiability conditions are satisfied in practice since we do not have access to the real underlying system parameters in practical applications. Nevertheless, studying the identifiability conditions of the dynamical system provides us with a better understanding of the system. Moreover, in practice, we can use models satisfying the conditions (through constrained parameter estimation) to learn the real-world data to ensure the learned model's identifiability.

\noindent\textit{\bf Remark} If the ODE system is identifiable from a set of discrete observations sampled from a single trajectory, the system can also be identifiable from the whole corresponding trajectory.

\subsection{Identifiability Condition from Degraded Observations}
In practice, there are cases where we have no records of the original observations of the ODE system and can only access the degraded observations instead. Is the ODE system still identifiable? Furthermore, will the parameter change? We are interested in addressing the problem of identifiability under degraded observations. In particular, we are interested in the aggregated and time-scaled observations. 

According to Theorem~\ref{theorem:identifiability from discrete observations}, under assumptions A1 and A2, the ODE system (\ref{eq:ODE model}) is $(\boldsymbol{x}_0, A)$ identifiable from $n$ equally-spaced error-free observations $\boldsymbol{X}:=\{\boldsymbol{x}_1, \boldsymbol{x}_2, \ldots, \boldsymbol{x}_n\}$, for any $n > d$. Based on these $n$ observations, in the following, we define the aggregated and time-scaled observations.

\begin{definition}[aggregated observations]\label{def:aggregated} 
Let $k\geq 2$ be an integer and $\tilde{n} = \lfloor n/k \rfloor$, where $\lfloor \cdot \rfloor$ denotes the floor function. For each $j=1,2,\ldots,\tilde{n}$, we define $\Tilde{\boldsymbol{x}}_j := \big(\boldsymbol{x}_{(j-1)k+1} + \boldsymbol{x}_{(j-1)k+2} + \cdots + \boldsymbol{x}_{jk}\big)/k$, the average values of $k$ consecutive, non-overlapping observations in $\boldsymbol{X}$, starting from time $\tilde{t}_j:=t_{(j-1)k+1}$. We call $\Tilde{\boldsymbol{X}} := \{\Tilde{\boldsymbol{x}}_1,\Tilde{\boldsymbol{x}}_{2},\ldots,\Tilde{\boldsymbol{x}}_{\tilde{n}}\}$ a set of aggregated observations from $\boldsymbol{X}$.
\end{definition}

For notational simplicity, we use the same notation for aggregated and time-scaled observations from here on. 

\begin{definition}[time-scaled observations]\label{def:time-scaled}
Let $k>0$ be a constant and $\tilde{n}=n$. For each $j=1,2,\ldots,\tilde{n}$, defining $\Tilde{t}_j := kt_j$ be the scaled time, the time-scaled observation at time $\Tilde{t}_j$, denoted by $\Tilde{\boldsymbol{x}}_j$, equals $\boldsymbol{x}_j$. We call $\Tilde{\boldsymbol{X}} := \{\Tilde{\boldsymbol{x}}_1,\Tilde{\boldsymbol{x}}_{2},\ldots,\Tilde{\boldsymbol{x}}_{\tilde{n}}\}$ the set of time-scaled observations from $\boldsymbol{X}$ at the scaled time grid $\{\tilde{t}_1,\ldots,\tilde{t}_{\tilde{n}}\}$.
\end{definition}

\noindent\textit{\bf Remark} Aggregated/time-scaled observations $\Tilde{\boldsymbol{X}}$ follow new ODE systems as (\ref{eq:ODE model}) but with different initial condition and parameter matrix, denoted respectively by $\tilde{\boldsymbol{x}}_0$ and $\tilde{A}$,  and we call the new ODE systems aggregated/time-scaled ODE systems.

Having defined the aggregated/time-scaled observations, we now derive the conditions for identifying the original ODE system with parameters $\boldsymbol{x}_0$ and $A$ from them. In addition to conditions A1 and A2, a common identifiability condition from aggregated/time-scaled observations is
\begin{enumerate}
    \item[A3] The sample size of the aggregated/time-scaled observations $\tilde{n} > d$.
\end{enumerate}


\subsubsection{Identifiability condition from aggregated observations}

Though the identifiability of vector auto-regressive model from aggregated observations has been studied \citep{silvestrini2008temporal,gong2017causal}, the identifiability condition of ODEs from aggregated observations remains unknown. Based on Theorem \ref{theorem:identifiability from discrete observations}, we derive the following corollary.

\begin{corollary}[aggregated observations]\label{corollary:aggregate}
If conditions \textnormal{A1-A3} are satisfied, where $\boldsymbol{x}_0 \in \mathbb{R}^d$ and $A\in \mathbb{R}^{d\times d}$, then the aggregated ODE system parameters $\Tilde{\boldsymbol{x}}_0$ and $\Tilde{A}$ can be uniquely determined by the aggregated observations $\Tilde{\boldsymbol{X}}$, and the original  ODE system \eqref{eq:ODE model} (with parameters $\boldsymbol{x}_0$ and $A$) is $(\boldsymbol{x}_0, A)$ identifiable from the aggregated observations $\Tilde{\boldsymbol{X}}$, with $\boldsymbol{x}_0 = k(I+e^{A\Delta_t} + \cdots + e^{A(k-1)\Delta_t})^{-1}\tilde{\boldsymbol{x}}_0$ and $A = \Tilde{A}$.
\end{corollary}

The proof of Corollary~\ref{corollary:aggregate} can be found in Appendix~\ref{proof:corollary2.2.1}. This corollary implies that the ODE system (\ref{eq:ODE model}) is still identifiable from the aggregated observations under mild conditions. Moreover, the new parameter matrix corresponding to the aggregated ODE system is the same as that of the actual model, that is, $\tilde{A} = A$.

\subsubsection{Identifiability condition from time-scaled observations}
Time plays a critical role in ODE systems. However, in practical applications, using the actual time of the system directly may cause inconvenience. A common practice is to use the method defined in Definition~\ref{def:time-scaled} to scale the actual timeline into a fixed one, such as [0,1], to simplify the calculation \citep{rubanova2019latent, langtangen2016scaling}. How will the time scaling affect the causal relationship between variables, that is, parameter matrix $A$? This motivates us to derive the following corollary. 

\begin{corollary}[time-scaled observations]\label{corollary:timescaled}
If conditions \textnormal{A1-A3} are satisfied, where $\boldsymbol{x}_0 \in \mathbb{R}^d$ and $A\in \mathbb{R}^{d\times d}$, then the time-scaled ODE system parameters $\Tilde{\boldsymbol{x}}_0$ and $\Tilde{A}$ can be uniquely determined by the time-scaled observations $\Tilde{\boldsymbol{X}}$, and the original ODE system \eqref{eq:ODE model} (with parameters $\boldsymbol{x}_0$ and $A$) is $(\boldsymbol{x}_0, A)$ identifiable from the time-scaled observations $\Tilde{\boldsymbol{X}}$, with $\boldsymbol{x}_0 = \Tilde{\boldsymbol{x}}_0$ and $A = k\Tilde{A}$.
\end{corollary}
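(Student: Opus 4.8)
The plan is to reduce the claim to a direct invocation of Theorem~\ref{theorem:identifiability from discrete observations} applied to the \emph{rescaled} system, after reading off its parameters by substitution into the closed form \eqref{eq:ODE solution}. The point to recognize at the outset is that time-scaling by $k$ is nothing but replacing the parameter matrix $A$ by $A/k$ while leaving the initial condition untouched; once this is observed, everything else is bookkeeping.

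\emph{Step 1 (identify $\tilde{\boldsymbol{x}}_0$ and $\tilde A$).} By Definition~\ref{def:time-scaled} we have $\tilde t_j = k t_j$ and $\tilde{\boldsymbol{x}}_j = \boldsymbol{x}_j = e^{At_j}\boldsymbol{x}_0$. Writing $t_j = \tilde t_j/k$ gives $\tilde{\boldsymbol{x}}_j = e^{(A/k)\tilde t_j}\boldsymbol{x}_0$, so setting $\tilde A := A/k$ and $\tilde{\boldsymbol{x}}_0 := \boldsymbol{x}_0$ we get $\tilde{\boldsymbol{x}}_j = \boldsymbol{x}(\tilde t_j;\tilde{\boldsymbol{x}}_0,\tilde A)$ for every $j$; that is, $\tilde{\boldsymbol{X}}$ is exactly the set of error-free observations of the ODE system \eqref{eq:ODE model} with parameters $(\tilde{\boldsymbol{x}}_0,\tilde A)$ on the grid $\{\tilde t_1,\dots,\tilde t_{\tilde n}\}$. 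Since $t_{j+1}-t_j=\Delta t$, this grid is equally spaced with step $k\Delta t>0$, so Theorem~\ref{theorem:identifiability from discrete observations} is applicable to the system $(\tilde{\boldsymbol{x}}_0,\tilde A)$.

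\emph{Step 2 (transfer of A1 and A2).} For $i=0,\dots,d-1$ one has $\tilde A^{i}\tilde{\boldsymbol{x}}_0 = k^{-i}A^{i}\boldsymbol{x}_0$, and scaling each vector of a linearly independent family by a nonzero constant preserves linear independence, so A1 for $(\boldsymbol{x}_0,A)$ yields A1 for $(\tilde{\boldsymbol{x}}_0,\tilde A)$. The eigenvalues of $\tilde A = A/k$ are $\lambda_i/k$ with $\lambda_i$ the eigenvalues of $A$, and since $k>0$ is real, distinctness and realness are preserved, so A2 for $A$ yields A2 for $\tilde A$; condition A3, $\tilde n = n > d$, is assumed. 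Hence Theorem~\ref{theorem:identifiability from discrete observations} gives that the rescaled system is $(\tilde{\boldsymbol{x}}_0,\tilde A)$-identifiable from $\tilde{\boldsymbol{X}}$, i.e., $(\tilde{\boldsymbol{x}}_0,\tilde A)$ is uniquely determined by $\tilde{\boldsymbol{X}}$. To finish, note the reparametrization $\psi:(\boldsymbol{x}_0,A)\mapsto(\boldsymbol{x}_0,A/k)$ is a bijection of $\mathbb{R}^d\times\mathbb{R}^{d\times d}$, and by the substitution of Step~1 a candidate $(\boldsymbol{x}_0',A')$ reproduces the observed values iff $\psi(\boldsymbol{x}_0',A')$ does on the scaled grid; therefore uniqueness of $(\tilde{\boldsymbol{x}}_0,\tilde A)$ is equivalent to uniqueness of $(\boldsymbol{x}_0,A) = (\tilde{\boldsymbol{x}}_0,k\tilde A)$, which is the asserted $(\boldsymbol{x}_0,A)$-identifiability from $\tilde{\boldsymbol{X}}$.

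\emph{Main obstacle.} There is essentially no analytic difficulty: the argument is a change of variables followed by the application of an already-proved theorem. The only points needing (minor) care are verifying that A1 and A2 are invariant under multiplying $A$ by a positive constant—both immediate—and phrasing the last step so that identifiability of the rescaled system translates cleanly into identifiability of the original parameters through the bijection $\psi$, so that the formulas $\tilde{\boldsymbol{x}}_0 = \boldsymbol{x}_0$ and $\tilde A = A/k$ and the identifiability claim are obtained simultaneously.
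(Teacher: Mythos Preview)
Your proposal is correct and follows essentially the same approach as the paper: identify $(\tilde{\boldsymbol{x}}_0,\tilde A)=(\boldsymbol{x}_0,A/k)$ from the closed-form solution, verify that A1 and A2 transfer to the rescaled system, invoke Theorem~\ref{theorem:identifiability from discrete observations} on the equally-spaced grid $\{\tilde t_j\}$, and then invert the bijection $(\boldsymbol{x}_0,A)\mapsto(\boldsymbol{x}_0,A/k)$ to recover identifiability of the original parameters. Your phrasing of the last step via the bijection $\psi$ is slightly more explicit than the paper's, but the content is identical.
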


The proof of Corollary~\ref{corollary:timescaled} can be found in Appendix~\ref{proof:corollary2.2.2}. This corollary implies that the ODE system (\ref{eq:ODE model}) is identifiable from the time-scaled observations, and the new parameter matrix corresponding to the time-scaled ODE system is reduced by a factor of $k$ from the parameter matrix corresponding to the actual model, that is, $\Tilde{A} = A/k$. This corollary provides theoretical support for time scaling and implies that one can safely scale the actual timeline into a fixed one for the ODE system (\ref{eq:ODE model}).

\section{Asymptotic Properties of the NLS Estimator}\label{section:large_sample}

Now that we have established the sufficient condition for the identifiability of the ODE system from discrete error-free observations. However, in practical applications, the observations are typically disturbed by measurement noise. In this case, one can not calculate the true unknown parameters explicitly from  $e^{A \Delta t}=\boldsymbol{X}_2\boldsymbol{X}_1^{-1}$ and $\boldsymbol{x}_0 = e^{-At_1}\boldsymbol{x}_1$. Instead, we resort to parameter estimation procedures to estimate the unknown parameters. In this section, we will investigate the asymptotic properties of the parameter estimator based on the Nonlinear Least Squares (NLS) method. First, we introduce the measurement model and the NLS method.

\noindent{\bf Measurement model.} Suppose the system state $\boldsymbol{x}(t)$ in ODE system (\ref{eq:ODE model}) is measured with noise at time points $t_1,\ldots,t_n$, with $t_i \in [0,T]$ for all $i=1, \ldots, n$, and $0<T<+\infty$. Abusing notation a bit, from now on, we use $\boldsymbol{\theta} := (\boldsymbol{x}_0, A)\in \mathbb{R}^{d+d^2}$ to vectorize the parameters $\boldsymbol{x}_0, A$. We further let $\Theta := (M^0, \Omega)$ to denote the parameter space,  where $M^0 \subset \mathbb{R}^d$ and $\Omega \subset \mathbb{R}^{d\times d}$. The true parameter is denoted as $\boldsymbol{\theta}^* := ( \boldsymbol{x}_0^*, A^*)$. Then the measurement model can be described as: 
\begin{equation}\label{eq:measurement model}
\begin{split}
    \boldsymbol{y}_i &= \boldsymbol{x}(t_i;\boldsymbol{\theta}^*) + \boldsymbol{\epsilon}_i= e^{A^*t_i}\boldsymbol{x}_0^* + \boldsymbol{\epsilon}_i\,,
\end{split}
\end{equation}
for all $i=1,\ldots,n$, where $\boldsymbol{y}_i \in \mathbb{R}^d$ denotes the noisy observation at time $t_i$ and $\boldsymbol{\epsilon}_i \in \mathbb{R}^d$ is the measurement error at time $t_i$.

\noindent{\bf Nonlinear least squares (NLS) method} is well used for parameter estimation in nonlinear regression models including ODEs \citep{jennrich1969asymptotic,xue2010sieve}. In the following, based on the identifiability condition we build in Section \ref{sec:identifiability}, with mild assumptions, we show the consistency and asymptotic normality of the NLS estimator.

Suppose the ODE system \eqref{eq:ODE model} is $(\boldsymbol{x}_0^*, A^*)$ identifiable from a set of equally-spaced error-free observations sampled from a single trajectory: $\boldsymbol{x}_1, \ldots, \boldsymbol{x}_n$, with $\boldsymbol{x}_i = \boldsymbol{x}(t_i; \boldsymbol{\theta}^*) = e^{A^* t_i}\boldsymbol{x}_0^*$ and $t_i \in [0,T]$ for all $i=1,\ldots,n$, then the true parameter $\boldsymbol{\theta}^* = ( \boldsymbol{x}_0^*, A^*)$ uniquely minimizes:
\begin{equation}\label{eq:M(theta)}
    M(\boldsymbol{\theta}) = \cfrac{1}{T}\int_0^T \parallel e^{A^*t}\boldsymbol{x}_0^*-e^{At}\boldsymbol{x}_0\parallel_2^2 dt\, ,
\end{equation}
where $\parallel \cdot \parallel_2$ denotes the Euclidean norm. The proof is straightforward, since the ODE system \eqref{eq:ODE model} is $(\boldsymbol{x}_0^*, A^*)$ identifiable from $\boldsymbol{x}_1, \ldots, \boldsymbol{x}_n$, the ODE system \eqref{eq:ODE model} is also $(\boldsymbol{x}_0^*, A^*)$ identifiable from the corresponding trajectory at time $[0,T]$, which implies that $M(\boldsymbol{\theta})$ attains its unique global minimum at $\boldsymbol{\theta}^*$.
In practical applications, typically, one can only access the noisy observation $\boldsymbol{y}_i$'s as described in \eqref{eq:measurement model}. Therefore, we propose to estimate $\boldsymbol{\theta}^*$ by minimizing the empirical version of $M(\boldsymbol{\theta})$, which is
\begin{equation}\label{eq:Mn(theta)}
    M_n(\boldsymbol{\theta}) 
    = \cfrac{1}{n} \sum_{i=1}^n \parallel \boldsymbol{y}_i - e^{At_i}\boldsymbol{x}_0 \parallel_2^2 \, .
\end{equation}
That is, the NLS estimator of $\boldsymbol{\theta}^*$ is defined as
\begin{equation*} 
	\hat{\boldsymbol{\theta}}_n = \arg\min_{\boldsymbol{\theta}\in \Theta}M_n(\boldsymbol{\theta})\,.
\end{equation*}

\noindent{\bf Assumptions.} Now we investigate the asymptotic properties of the NLS estimator $\hat{\boldsymbol{\theta}}_n$. We first list all the required assumptions:
\begin{enumerate}
    \item[A4] Parameter space $\Theta$ is a compact subset of $\mathbb{R}^{d+d^2}$.
    \item[A5] Error terms $\{\boldsymbol{\epsilon}_i\}$ for $i=1,\ldots, n$ are independent and identically distributed random vectors with mean zero and covariance matrix $\Sigma = \text{diag}(\sigma_1^2, \ldots, \sigma_d^2)$, where $0<\sigma_j^2 < \infty$ for all $j=1,\ldots, d$.
    \item[A6] We have $n$ equally-spaced observations $\boldsymbol{Y} := \{\boldsymbol{y}_1, \ldots, \boldsymbol{y}_n\}$, where $\boldsymbol{y}_i$ is defined by measurement model (\ref{eq:measurement model}). Without loss of generality, we assume observation time starts with $t_1 = 0$, ends with $t_n = T$, and thus the equal time space $\Delta t=
   T/(n-1)$.
    \item[A7] $\boldsymbol{\theta}^*$ is an interior point of the parameter space $\Theta$.
\end{enumerate}

In addition to the aforementioned assumptions A4-A7, assumptions A1 and A2 stated in Theorem \ref{theorem:identifiability from discrete observations} are required with respect to the true parameter $\boldsymbol{\theta}^* = \{ \boldsymbol{x}_0^*, A^*\}$. These two assumptions guarantee the ODE system (\ref{eq:ODE model}) is $(\boldsymbol{x}_0^*, A^*)$ identifiable from any $d+1$ equally-spaced error-free observations sampled from the trajectory $\boldsymbol{x}(\cdot; \boldsymbol{x}_0^*, A^*)$. A1 and A2 are needed because the identifiability of the system is a prerequisite for obtaining a consistent parameter estimator. A4 is commonly used in deriving consistency for parameter estimators, as demonstrated in references such as \citep[Thm 2.1, Thm 2.5, Thm 2.6]{newey1994large}. While alternative conditions for consistency exist without the compactness assumption, they typically require the objective function to be convex \citep{newey1994large}. Given that our objective function $M_n(\boldsymbol{\theta})$ is not convex, the compactness assumption remains indispensable. The compactness assumption implicitly requires having known bounds on the true parameter values, which can be challenging to check in real-world situations since the true parameters of the ODE systems are unknown. However, since our derived confidence sets do not depend on the bounds of the parameter values, we may safely assume enormously large parameter boundaries. Consequently, in practical applications, there is no need to verify this assumption, and the derived statistical inferences will remain unaffected. A5 is a common way to define measurement noise, note that we do not require the error terms follow a normal distribution. It is worth mentioning that the error terms are not necessarily identically distributed. In other words, one can easily generate our theoretical results for cases that only require the error terms being independently distributed. A6 ensures the observations are collected at equally-spaced time points, following the rules of observations collection required by Theorem \ref{theorem:identifiability from discrete observations}. Assumption A7 is a standard condition for proving asymptotic normality \citep{kundu1993asymptotic,mira1995nonlinear,xue2010sieve}.

\subsection{Consistency}
In this subsection, we study the consistency of the NLS estimator. An estimator is said to be consistent, if it converges in probability to the true value of the parameter.
\begin{theorem}\label{theorem:consistency}
Suppose assumptions \textnormal{A1, A2} are satisfied with respect to $\boldsymbol{\theta}^*$ and assumptions \textnormal{A4-A6} hold, the NLS estimator $\hat{\boldsymbol{\theta}}_n \xrightarrow{p} \boldsymbol{\theta}^*$, as $n\rightarrow \infty$.
\end{theorem}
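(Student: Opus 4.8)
The plan is to apply the standard M-estimation consistency theorem (e.g.\ van der Vaart, \emph{Asymptotic Statistics}, Theorem 5.7): if $M_n(\boldsymbol{\theta})$ converges in probability to a limit function $M(\boldsymbol{\theta})$ uniformly over the compact set $\Theta$, and if $M(\boldsymbol{\theta})$ has a well-separated unique minimum at $\boldsymbol{\theta}^*$, then $\hat{\boldsymbol{\theta}}_n \xrightarrow{p} \boldsymbol{\theta}^*$. Thus the proof breaks into three pieces: (i) identifying the pointwise limit of $M_n$; (ii) upgrading to uniform convergence; (iii) verifying the well-separated-minimum property using the identifiability hypothesis.

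For step (i), I would expand $M_n(\boldsymbol{\theta}) = \frac1n\sum_i \|\boldsymbol{y}_i - e^{At_i}\boldsymbol{x}_0\|_2^2$ by substituting $\boldsymbol{y}_i = e^{A^* t_i}\boldsymbol{x}_0^* + \boldsymbol{\epsilon}_i$, giving three groups of terms: $\frac1n\sum_i\|e^{A^* t_i}\boldsymbol{x}_0^* - e^{At_i}\boldsymbol{x}_0\|_2^2$, a cross term $-\frac2n\sum_i \boldsymbol{\epsilon}_i^\top(e^{A^* t_i}\boldsymbol{x}_0^* - e^{At_i}\boldsymbol{x}_0)$, and $\frac1n\sum_i\|\boldsymbol{\epsilon}_i\|_2^2$. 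Since the $t_i$ are equally spaced on $[0,T]$ (assumption A6), the first group is a Riemann sum converging to $M(\boldsymbol{\theta}) = \frac1T\int_0^T\|e^{A^*t}\boldsymbol{x}_0^* - e^{At}\boldsymbol{x}_0\|_2^2\,dt$ as defined in \eqref{eq:M(theta)}; the cross term has mean zero and variance of order $1/n$ (using A5 and boundedness of $e^{At_i}\boldsymbol{x}_0$ over compact $\Theta$ and $t\in[0,T]$), hence vanishes in probability; and $\frac1n\sum_i\|\boldsymbol{\epsilon}_i\|_2^2 \xrightarrow{p} \mathrm{tr}(\Sigma)$ by the weak law of large numbers. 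So $M_n(\boldsymbol{\theta}) \xrightarrow{p} M(\boldsymbol{\theta}) + \mathrm{tr}(\Sigma)$, and since the additive constant $\mathrm{tr}(\Sigma)$ does not affect the argmin, it is harmless.

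For step (iii), the identifiability argument is already sketched in the excerpt: assumptions A1 and A2 on $\boldsymbol{\theta}^*$ make the system $(\boldsymbol{x}_0^*, A^*)$-identifiable from the trajectory on $[0,T]$, so $t\mapsto e^{A^*t}\boldsymbol{x}_0^* - e^{At}\boldsymbol{x}_0$ is not identically zero whenever $\boldsymbol{\theta}\neq\boldsymbol{\theta}^*$; since this is a real-analytic function of $t$, it is nonzero on a set of positive measure, so $M(\boldsymbol{\theta}) > 0 = M(\boldsymbol{\theta}^*)$ for all $\boldsymbol{\theta}\neq\boldsymbol{\theta}^*$. To get the \emph{well-separated} minimum, i.e.\ $\inf_{\boldsymbol{\theta}:\,\|\boldsymbol{\theta}-\boldsymbol{\theta}^*\|\geq\varepsilon} M(\boldsymbol{\theta}) > 0$ for every $\varepsilon>0$, I would use compactness of $\Theta$ (A4) together with continuity of $\boldsymbol{\theta}\mapsto M(\boldsymbol{\theta})$: the infimum over the compact set $\{\boldsymbol{\theta}\in\Theta : \|\boldsymbol{\theta}-\boldsymbol{\theta}^*\|\geq\varepsilon\}$ is attained and is strictly positive. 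Continuity of $M$ follows from continuity of $(\boldsymbol{\theta},t)\mapsto e^{At}\boldsymbol{x}_0$ and dominated convergence on $[0,T]$.

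The main obstacle is step (ii), the uniform convergence $\sup_{\boldsymbol{\theta}\in\Theta}|M_n(\boldsymbol{\theta}) - M(\boldsymbol{\theta}) - \mathrm{tr}(\Sigma)| \xrightarrow{p} 0$. The Riemann-sum part can be controlled uniformly because, on the compact set $\Theta\times[0,T]$, the map $(\boldsymbol{\theta},t)\mapsto\|e^{A^*t}\boldsymbol{x}_0^* - e^{At}\boldsymbol{x}_0\|_2^2$ is uniformly continuous, so the Riemann-sum error is bounded by a deterministic modulus of continuity times $\Delta t = T/(n-1)\to 0$, uniformly in $\boldsymbol{\theta}$. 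The genuinely stochastic part is the cross term $g_n(\boldsymbol{\theta}) := \frac1n\sum_i\boldsymbol{\epsilon}_i^\top(e^{A^*t_i}\boldsymbol{x}_0^* - e^{At_i}\boldsymbol{x}_0)$; here I would establish a uniform law of large numbers by a bracketing/stochastic-equicontinuity argument: the family $\{\boldsymbol{\theta}\mapsto e^{At_i}\boldsymbol{x}_0\}$ is Lipschitz in $\boldsymbol{\theta}$ with a constant uniform in $t\in[0,T]$ (since $\|A\|$ is bounded on $\Theta$, so $\|\partial_{\boldsymbol{\theta}} e^{At}\boldsymbol{x}_0\|$ is bounded), which makes the index class a Glivenko--Cantelli class; combined with $E\|\boldsymbol{\epsilon}_i\|<\infty$ from A5, a standard argument (pointwise LLN at a finite $\varepsilon$-net plus the Lipschitz bound to fill in between net points) gives $\sup_{\boldsymbol{\theta}}|g_n(\boldsymbol{\theta})| \xrightarrow{p} 0$. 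Assembling (i)--(iii) into the M-estimator theorem then yields $\hat{\boldsymbol{\theta}}_n\xrightarrow{p}\boldsymbol{\theta}^*$.
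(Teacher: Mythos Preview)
Your proposal is correct and follows essentially the same approach as the paper: both invoke van der Vaart's Theorem~5.7, establish the same three-term decomposition of $M_n$ for pointwise convergence, and use identifiability plus compactness of $\Theta$ for the well-separated minimum. The only cosmetic difference is in the uniform-convergence step: the paper packages it via a Lipschitz-in-$\boldsymbol{\theta}$ bound $\|M_n(\boldsymbol{\theta}_1)-M_n(\boldsymbol{\theta}_2)\|\le B_n\|\boldsymbol{\theta}_1-\boldsymbol{\theta}_2\|$ with $B_n=O_p(1)$ and then invokes Newey--McFadden's Lemma~2.9, whereas you treat the Riemann-sum and stochastic cross-term pieces separately; both routes rest on the same Lipschitz estimate for $(\boldsymbol{\theta},t)\mapsto e^{At}\boldsymbol{x}_0$ over the compact set $\Theta\times[0,T]$.
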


The notation $\xrightarrow{p}$ stands for convergence in probability. The proof of Theorem~\ref{theorem:consistency} can be found in Appendix~\ref{proof:theorem3.1}. This theorem shows the consistency of our NLS estimator $\hat{\boldsymbol{\theta}}_n$ to the true $\boldsymbol{\theta}^*$. That is, as $n$ goes to infinity, the NLS estimator $\hat{\boldsymbol{\theta}}_n$ converges to the true system parameters $\boldsymbol{\theta}^*$ with probability approaching 1. Note that the consistency of the estimator is a necessary condition for the estimator's asymptotic normality.

In the case where we only observe degraded data, we let $\Tilde{\boldsymbol{Y}} := (\Tilde{\boldsymbol{y}}_1, \Tilde{\boldsymbol{y}}_2, \ldots, \Tilde{\boldsymbol{y}}_{\tilde{n}})$ denote the noisy aggregated/time-scaled data, which are collected from the original observations $\boldsymbol{Y}$ in the same way as the corresponding error-free observations $\Tilde{\boldsymbol{X}}$ from $\boldsymbol{X}$ defined in Definition~\ref{def:aggregated}/\ref{def:time-scaled}. Then we can estimate the corresponding parameters $\tilde{\boldsymbol{\theta}}^*:=(\tilde{\boldsymbol{x}}^*_0, \tilde{A}^*)$ by minimizing the NLS objective function in \eqref{eq:Mn(theta)}, with the data replaced by $\tilde{\boldsymbol{Y}}$. Let such an estimator be $\hat{\tilde{\boldsymbol{\theta}}}:= (\hat{\tilde{\boldsymbol{x}}}_0,\hat{\tilde{A}})$. Using the relationship between $\tilde{\boldsymbol{\theta}}^*$ and $\boldsymbol{\theta}^*$ found in Corollary~\ref{corollary:aggregate}/\ref{corollary:timescaled}, we can define a mapping $g:\mathbb{R}^{d+d^2}\rightarrow\mathbb{R}^{d+d^2}$ from $\tilde{\boldsymbol{\theta}}^*$ to $\boldsymbol{\theta}^*$, and obtain an estimator of $\boldsymbol{\theta}^*$ by $\hat{\boldsymbol{\theta}}_{\tilde{n}}:=g(\hat{\tilde{\boldsymbol{\theta}}})$.
In the following, we present the expressions of $g$ and the consistency of the NLS estimators by using the aggregated and time-scaled observations, respectively.

\begin{corollary}[aggregated observations]\label{corollary:consistency aggregate}
Suppose assumptions \textnormal{A1, A2} are satisfied with respect to $\boldsymbol{\theta}^*$ and assumptions \textnormal{A3-A6} hold, the NLS estimator $\hat{\boldsymbol{\theta}}_{\tilde{n}}\xrightarrow{p} \boldsymbol{\theta}^*$, as $\tilde{n}\rightarrow \infty$, where $\hat{\boldsymbol{\theta}}_{\tilde{n}}:=g(\hat{\tilde{\boldsymbol{\theta}}}) := \big(k(I+e^{\hat{\tilde{A}}\Delta_t} + \cdots + e^{\hat{\tilde{A}}(k-1)\Delta_t})^{-1}\hat{\tilde{\boldsymbol{x}}}_0, \hat{\tilde{A}}\big)$.
\end{corollary}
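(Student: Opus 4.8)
The plan is to deduce Corollary~\ref{corollary:consistency aggregate} from Theorem~\ref{theorem:consistency} and Corollary~\ref{corollary:aggregate} via the continuous mapping theorem. First I would verify that the hypotheses of Theorem~\ref{theorem:consistency} apply to the \emph{aggregated} system rather than the original one. By the Remark after Definition~\ref{def:time-scaled}, the aggregated observations $\tilde{\boldsymbol{x}}_j$ follow an ODE of the form \eqref{eq:ODE model} with parameters $(\tilde{\boldsymbol{x}}_0^*,\tilde A^*)$, and by Corollary~\ref{corollary:aggregate} we have $\tilde A^* = A^*$ and $\tilde{\boldsymbol{x}}_0^* = (I + e^{A^*\Delta_t} + \cdots + e^{A^*(k-1)\Delta_t})\boldsymbol{x}_0^*/k$. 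Assumption~A2 for $\tilde A^*$ is immediate since $\tilde A^* = A^*$. Assumption~A1 for $(\tilde{\boldsymbol{x}}_0^*, \tilde A^*)$ needs a short argument: one must check that $\{\tilde{\boldsymbol{x}}_0^*, \tilde A^* \tilde{\boldsymbol{x}}_0^*,\ldots,(\tilde A^*)^{d-1}\tilde{\boldsymbol{x}}_0^*\}$ are linearly independent; this is exactly what is established inside the proof of Corollary~\ref{corollary:aggregate} (the matrix $P := (I + e^{A^*\Delta_t} + \cdots + e^{A^*(k-1)\Delta_t})/k$ is nonsingular when $A^*$ has $d$ distinct eigenvalues — because $P = q(e^{A^*\Delta_t})$ for the polynomial $q(z)=(1+z+\cdots+z^{k-1})/k$, whose value at each eigenvalue $e^{\lambda_i\Delta_t}$ is nonzero as the $e^{\lambda_i\Delta_t}$ are distinct positive reals — and $P$ commutes with $A^*$, so $(\tilde A^*)^j \tilde{\boldsymbol{x}}_0^* = P (A^*)^j \boldsymbol{x}_0^*$, and a nonsingular $P$ maps a linearly independent set to a linearly independent set). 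Together with A3, which plays the role of "$\tilde n > d$" needed to run the identifiability argument on the aggregated trajectory, and A4--A6 (with the noisy aggregated data $\tilde{\boldsymbol{Y}}$ and the sample size $\tilde n$), Theorem~\ref{theorem:consistency} yields $\hat{\tilde{\boldsymbol{\theta}}} \xrightarrow{p} \tilde{\boldsymbol{\theta}}^*$ as $\tilde n \to \infty$.

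Next I would address the mapping $g$. From Corollary~\ref{corollary:aggregate}, $\tilde{\boldsymbol{x}}_0^* = P(A^*)\,\boldsymbol{x}_0^*$ with $P(A) := (I + e^{A\Delta_t} + \cdots + e^{A(k-1)\Delta_t})/k$ and $\tilde A^* = A^*$, so the inverse relation is $\boldsymbol{x}_0^* = P(A^*)^{-1}\tilde{\boldsymbol{x}}_0^* = P(\tilde A^*)^{-1}\tilde{\boldsymbol{x}}_0^*$ and $A^* = \tilde A^*$. This gives precisely
\[
g(\tilde{\boldsymbol{x}}_0, \tilde A) = \Big( k\big(I + e^{\tilde A\Delta_t} + \cdots + e^{\tilde A(k-1)\Delta_t}\big)^{-1}\tilde{\boldsymbol{x}}_0,\ \tilde A\Big),
\]
matching the statement. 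I would then check that $g$ is continuous on a neighborhood of $\tilde{\boldsymbol{\theta}}^*$: the matrix exponential $A \mapsto e^{A s}$ is continuous (indeed analytic), finite sums and products preserve continuity, and matrix inversion is continuous at any point where the matrix is invertible; since $P(\tilde A^*) = P(A^*)$ is invertible by the argument above, there is an open neighborhood of $\tilde A^*$ on which $P(\cdot)$ stays invertible, so $g$ is continuous there. Finally, the continuous mapping theorem applied to $\hat{\tilde{\boldsymbol{\theta}}} \xrightarrow{p} \tilde{\boldsymbol{\theta}}^*$ gives $\hat{\boldsymbol{\theta}}_{\tilde n} = g(\hat{\tilde{\boldsymbol{\theta}}}) \xrightarrow{p} g(\tilde{\boldsymbol{\theta}}^*) = \boldsymbol{\theta}^*$, which is the claim.

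The main obstacle, such as it is, is bookkeeping rather than depth: one has to be careful that Theorem~\ref{theorem:consistency} is being invoked with the \emph{correct} tri(parameter, data, sample size)—namely $(\tilde{\boldsymbol{\theta}}^*, \tilde{\boldsymbol{Y}}, \tilde n)$—and that the hypotheses A1, A2 genuinely transfer from $\boldsymbol{\theta}^*$ to $\tilde{\boldsymbol{\theta}}^*$, which is where the nonsingularity of $P(A^*)$ and its commutation with $A^*$ are used. A secondary subtlety is that the aggregated noise $\tilde{\boldsymbol{\epsilon}}_j = (\boldsymbol{\epsilon}_{(j-1)k+1} + \cdots + \boldsymbol{\epsilon}_{jk})/k$ is still i.i.d. across $j$ (disjoint blocks of i.i.d. errors) with mean zero and a finite diagonal covariance $\Sigma/k$, so A5 is preserved for the aggregated model; I would note this explicitly so that the invocation of Theorem~\ref{theorem:consistency} is clean. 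Everything else—continuity of $g$, the continuous mapping theorem—is routine.
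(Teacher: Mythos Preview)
Your proposal is correct and follows essentially the same route as the paper: verify that the hypotheses of Theorem~\ref{theorem:consistency} transfer to the aggregated system (A1--A2 for $\tilde{\boldsymbol{\theta}}^*$ via the proof of Corollary~\ref{corollary:aggregate}, A4 by compactness, A5 for the block-averaged errors with covariance $\Sigma/k$, A6 for the new equally-spaced grid), apply that theorem to obtain $\hat{\tilde{\boldsymbol{\theta}}}\xrightarrow{p}\tilde{\boldsymbol{\theta}}^*$, and then push through $g$ by the continuous mapping theorem. You are in fact more explicit than the paper about the invertibility of $P(A^*)$ (via its eigenvalues $q(e^{\lambda_i\Delta_t})>0$) and the continuity of $g$ near $\tilde{\boldsymbol{\theta}}^*$, which is all to the good.
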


\begin{corollary}[time-scaled observations]\label{corollary:consistency timescaled}
Suppose assumptions \textnormal{A1, A2} are satisfied with respect to $\boldsymbol{\theta}^*$ and assumptions \textnormal{A3-A6} hold, the NLS estimator $\hat{\boldsymbol{\theta}}_{\tilde{n}}\xrightarrow{p} \boldsymbol{\theta}^*$, as $\tilde{n}\rightarrow \infty$, where $\hat{\boldsymbol{\theta}}_{\tilde{n}}:=g(\hat{\tilde{\boldsymbol{\theta}}}) := (\hat{\tilde{\boldsymbol{x}}}_0, k\hat{\tilde{A}})$.
\end{corollary}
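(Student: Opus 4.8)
The plan is to deduce the statement from Theorem~\ref{theorem:consistency} applied to the \emph{time-scaled} ODE system, combined with the continuous mapping theorem; no new estimates are needed.

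First I would record the structure of the time-scaled problem. By the Remark after Definition~\ref{def:time-scaled} and by Corollary~\ref{corollary:timescaled}, the time-scaled data $\tilde{\boldsymbol{x}}_j=\boldsymbol{x}_j$ observed at $\tilde t_j=kt_j$ form a single trajectory of a homogeneous linear ODE of type \eqref{eq:ODE model} with true parameter $\tilde{\boldsymbol{\theta}}^*=(\tilde{\boldsymbol{x}}_0^*,\tilde A^*)=(\boldsymbol{x}_0^*,A^*/k)$, and the noisy time-scaled observations are $\tilde{\boldsymbol{y}}_j=\boldsymbol{y}_j=e^{\tilde A^*\tilde t_j}\tilde{\boldsymbol{x}}_0^*+\boldsymbol{\epsilon}_j$, i.e.\ they obey the measurement model \eqref{eq:measurement model} with the \emph{same} error sequence $\{\boldsymbol{\epsilon}_j\}$.

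Next I would check that assumptions A1, A2 and A4--A6 hold for the time-scaled problem at $\tilde{\boldsymbol{\theta}}^*$. Because $(A^*/k)^i\boldsymbol{x}_0^*=k^{-i}(A^*)^i\boldsymbol{x}_0^*$ and $k>0$, linear independence of $\{\boldsymbol{x}_0^*,(A^*/k)\boldsymbol{x}_0^*,\dots,(A^*/k)^{d-1}\boldsymbol{x}_0^*\}$ is equivalent to A1 for $\boldsymbol{\theta}^*$; and since the eigenvalues of $A^*/k$ are $k^{-1}$ times those of $A^*$, they remain distinct and real, giving A2. The scaled times are equally spaced with spacing $k\,\Delta t$ and, by A3, $\tilde n>d$, so A6 holds; A5 is inherited verbatim; and A4 holds on the compact parameter space $\tilde\Theta:=\{(\boldsymbol{x}_0,A/k):(\boldsymbol{x}_0,A)\in\Theta\}$, the continuous image of the compact set $\Theta$, which contains $\tilde{\boldsymbol{\theta}}^*$ and over which the NLS estimator $\hat{\tilde{\boldsymbol{\theta}}}$ is defined.

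Finally, Theorem~\ref{theorem:consistency} applied to the time-scaled system gives $\hat{\tilde{\boldsymbol{\theta}}}=(\hat{\tilde{\boldsymbol{x}}}_0,\hat{\tilde A})\xrightarrow{p}(\boldsymbol{x}_0^*,A^*/k)$ as $\tilde n\to\infty$. Since $g(\boldsymbol{v},B)=(\boldsymbol{v},kB)$ is continuous, the continuous mapping theorem yields $\hat{\boldsymbol{\theta}}_{\tilde n}=g(\hat{\tilde{\boldsymbol{\theta}}})\xrightarrow{p}g(\boldsymbol{x}_0^*,A^*/k)=(\boldsymbol{x}_0^*,A^*)=\boldsymbol{\theta}^*$. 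The only substantive point — rather than a routine one — is the verification in the previous paragraph that every hypothesis of Theorem~\ref{theorem:consistency} is invariant under the rescaling $A^*\mapsto A^*/k$, in particular that the error assumption A5 is preserved exactly (unlike the aggregated case, where the noise covariance is instead rescaled to $\Sigma/k$). Once this reduction is in place the corollary follows immediately.
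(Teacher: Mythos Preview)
Your proposal is correct and follows essentially the same approach as the paper's proof: verify that assumptions A1, A2 and A4--A6 transfer to the time-scaled system with parameter $\tilde{\boldsymbol{\theta}}^*=(\boldsymbol{x}_0^*,A^*/k)$, apply Theorem~\ref{theorem:consistency} to obtain $\hat{\tilde{\boldsymbol{\theta}}}\xrightarrow{p}\tilde{\boldsymbol{\theta}}^*$, and then invoke the continuous mapping theorem for $g(\boldsymbol{v},B)=(\boldsymbol{v},kB)$. Your explicit verification of A1 and A2 for $A^*/k$ and your remark contrasting the preservation of A5 here with the $\Sigma/k$ rescaling in the aggregated case are nice touches, but the argument is otherwise identical to the paper's.
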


The proofs of Corollary~\ref{corollary:consistency aggregate} and Corollary~\ref{corollary:consistency timescaled} can be found in Appendix~\ref{proof:corollary3.1.1} and Appendix~\ref{proof:corollary3.1.2}, respectively. Now in addition to the assumptions mentioned in Theorem~\ref{theorem:consistency}, assumption A3 is also required for the consistency from degraded observations. These two corollaries show that the NLS estimators obtained from degraded observations are consistent to the true parameters of the original ODE \eqref{eq:ODE model}. Since we have established the identifiability conditions for the original system parameters of the ODE (\ref{eq:ODE model}) from the degraded observations in Corollary~\ref{corollary:aggregate}/\ref{corollary:timescaled} in Section \ref{sec:identifiability}, the consistency of their NLS estimators is a natural result from Theorem~\ref{theorem:consistency}. To see this, we first show that the NLS estimator from aggregated/time-scaled observations converges to the true system parameter corresponding to the new ODE system, that is, $\hat{\tilde{\boldsymbol{\theta}}}\xrightarrow{p} \tilde{\boldsymbol{\theta}}^*$, as $\tilde{n}\rightarrow \infty$. Since we have derived the mapping $g$ such that $g(\tilde{\boldsymbol{\theta}}^*) = \boldsymbol{\theta}^*$ in Corollary~\ref{corollary:aggregate}/\ref{corollary:timescaled}, then by multivariate continuous mapping theorem, one takes the function $g(\cdot)$ with respect to $\hat{\tilde{\boldsymbol{\theta}}}$ and $\tilde{\boldsymbol{\theta}}^*$, respectively, one can reach the conclusion $g(\hat{\tilde{\boldsymbol{\theta}}})\xrightarrow{p} g(\tilde{\boldsymbol{\theta}}^*)$, as $\tilde{n}\rightarrow \infty$. That is, $\hat{\boldsymbol{\theta}}_{\tilde{n}}\xrightarrow{p} \boldsymbol{\theta}^*$, as $\tilde{n}\rightarrow \infty$.

\subsection{Asymptotic Normality}\label{section:asymptotic normality}
After establishing the consistency of our NLS estimators, we can study their asymptotic distributions.

\begin{theorem}\label{theorem:Asymptotic normality}
Suppose assumptions \textnormal{A1, A2} are satisfied with respect to $\boldsymbol{\theta}^*$ and assumptions \textnormal{A4-A7} hold, we have $\sqrt{n}(\hat{\boldsymbol{\theta}}_n -\boldsymbol{\theta}^* )\xrightarrow{d} N(\boldsymbol{0}, H^{-1}VH^{-1}) $, as $n\rightarrow \infty$, where $H= \nabla_{\boldsymbol{\theta}}^2M(\boldsymbol{\theta}^*)$, is the Hessian matrix of $M(\boldsymbol{\theta})$ at $\boldsymbol{\theta}^*$ and $ V = \lim_{n\rightarrow \infty}var(\sqrt{n}\nabla_{\boldsymbol{\theta}} M_n(\boldsymbol{\theta}^*))$,  with $\nabla_{\boldsymbol{\theta}} M_n(\boldsymbol{\theta}^*)$ the gradient of $M_n(\boldsymbol{\theta})$ at $\boldsymbol{\theta}^*$.  $M(\boldsymbol{\theta})$ and $M_n(\boldsymbol{\theta})$ are defined in Equations \textnormal{(\ref{eq:M(theta)})} and \textnormal{(\ref{eq:Mn(theta)})}, respectively.
\end{theorem}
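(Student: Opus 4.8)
The plan is to run the classical Taylor-expansion argument for asymptotic normality of $M$-estimators, with the identifiability hypothesis A1 supplying the one non-generic ingredient (invertibility of the limiting Hessian). Throughout I abbreviate $f(t;\boldsymbol{\theta}):=e^{At}\boldsymbol{x}_0$, so that $M_n(\boldsymbol{\theta})=\frac{1}{n}\sum_{i=1}^n\|\boldsymbol{y}_i-f(t_i;\boldsymbol{\theta})\|_2^2$ and, by Equation~(\ref{eq:M(theta)}), $M(\boldsymbol{\theta})=\frac{1}{T}\int_0^T\|f(t;\boldsymbol{\theta}^*)-f(t;\boldsymbol{\theta})\|_2^2\,dt$. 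Since the matrix exponential is real-analytic in $A$ and both $t\in[0,T]$ and $\boldsymbol{\theta}\in\Theta$ range over compact sets (A4, A6), all partial derivatives of $f$ in $\boldsymbol{\theta}$ up to third order are jointly continuous, hence uniformly bounded, on $[0,T]\times\Theta$; this uniform boundedness powers every limit used below.

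First I would record the first-order condition: by Theorem~\ref{theorem:consistency}, $\hat{\boldsymbol{\theta}}_n\xrightarrow{p}\boldsymbol{\theta}^*$, and by A7, $\boldsymbol{\theta}^*$ is an interior point, so with probability tending to one $\hat{\boldsymbol{\theta}}_n$ is interior and, $M_n$ being smooth, $\nabla_{\boldsymbol{\theta}}M_n(\hat{\boldsymbol{\theta}}_n)=\boldsymbol{0}$. Applying the mean value theorem to each coordinate of the map $\boldsymbol{\theta}\mapsto\nabla_{\boldsymbol{\theta}}M_n(\boldsymbol{\theta})$ between $\hat{\boldsymbol{\theta}}_n$ and $\boldsymbol{\theta}^*$ produces a matrix $\bar H_n$, whose $k$-th row is the $k$-th row of $\nabla_{\boldsymbol{\theta}}^2M_n$ evaluated at some point on the segment joining $\hat{\boldsymbol{\theta}}_n$ and $\boldsymbol{\theta}^*$, such that $\boldsymbol{0}=\nabla_{\boldsymbol{\theta}}M_n(\boldsymbol{\theta}^*)+\bar H_n(\hat{\boldsymbol{\theta}}_n-\boldsymbol{\theta}^*)$; all these intermediate points converge in probability to $\boldsymbol{\theta}^*$ because each is sandwiched between $\boldsymbol{\theta}^*$ and $\hat{\boldsymbol{\theta}}_n$. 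Hence, once $\bar H_n$ is invertible, $\sqrt{n}(\hat{\boldsymbol{\theta}}_n-\boldsymbol{\theta}^*)=-\bar H_n^{-1}\sqrt{n}\,\nabla_{\boldsymbol{\theta}}M_n(\boldsymbol{\theta}^*)$, and it remains to determine the limits of the two factors.

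For the score, at $\boldsymbol{\theta}^*$ the residuals are exactly the noise, $\boldsymbol{y}_i-f(t_i;\boldsymbol{\theta}^*)=\boldsymbol{\epsilon}_i$, so with $J(t):=\nabla_{\boldsymbol{\theta}}f(t;\boldsymbol{\theta}^*)\in\mathbb{R}^{d\times(d+d^2)}$ one gets $\sqrt{n}\,\nabla_{\boldsymbol{\theta}}M_n(\boldsymbol{\theta}^*)=-\frac{2}{\sqrt{n}}\sum_{i=1}^nJ(t_i)^\top\boldsymbol{\epsilon}_i$. The summands are independent with mean zero (A5) and aggregate covariance $\frac{4}{n}\sum_{i=1}^nJ(t_i)^\top\Sigma J(t_i)$, which converges, by the Riemann-sum property of the equally-spaced grid (A6), to $V=\frac{4}{T}\int_0^TJ(t)^\top\Sigma J(t)\,dt$; in particular the limit defining $V$ exists. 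Uniform boundedness of $J$ together with $E\|\boldsymbol{\epsilon}_1\|_2^2<\infty$ gives the Lindeberg condition for this triangular array, so the Lindeberg--Feller CLT yields $\sqrt{n}\,\nabla_{\boldsymbol{\theta}}M_n(\boldsymbol{\theta}^*)\xrightarrow{d}N(\boldsymbol{0},V)$. For the Hessian, differentiating $M_n$ twice gives $\nabla_{\boldsymbol{\theta}}^2M_n(\boldsymbol{\theta})=\frac{2}{n}\sum_iJ(t_i;\boldsymbol{\theta})^\top J(t_i;\boldsymbol{\theta})-\frac{2}{n}\sum_i\sum_k\big(\boldsymbol{y}_i-f(t_i;\boldsymbol{\theta})\big)_k\nabla_{\boldsymbol{\theta}}^2f_k(t_i;\boldsymbol{\theta})$; splitting the residual as $\boldsymbol{\epsilon}_i+\big(f(t_i;\boldsymbol{\theta}^*)-f(t_i;\boldsymbol{\theta})\big)$, the noise part is a bounded-kernel average of independent mean-zero terms and vanishes in probability, uniformly over a neighborhood of $\boldsymbol{\theta}^*$, while the deterministic Riemann sums converge uniformly to the integrals defining $\nabla_{\boldsymbol{\theta}}^2M$. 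With continuity of $\nabla_{\boldsymbol{\theta}}^2M$ and the convergence of the intermediate points, this yields $\bar H_n\xrightarrow{p}H=\nabla_{\boldsymbol{\theta}}^2M(\boldsymbol{\theta}^*)$.

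It remains to verify that $H$ is invertible, which is where assumption A1 is genuinely used. At $\boldsymbol{\theta}^*$ the second-derivative term of $\nabla_{\boldsymbol{\theta}}^2M$ drops out, since its deterministic factor is $f(t;\boldsymbol{\theta}^*)-f(t;\boldsymbol{\theta}^*)=0$, so $H=\frac{2}{T}\int_0^TJ(t)^\top J(t)\,dt$ is positive semidefinite and $v^\top Hv=0$ forces $J(t)v=\boldsymbol{0}$ for all $t\in[0,T]$ by continuity. Writing $v=(\delta\boldsymbol{x}_0,\delta A)$, the curve $\boldsymbol{z}(t):=J(t)v$ solves the sensitivity equation $\dot{\boldsymbol{z}}=A^*\boldsymbol{z}+\delta A\,e^{A^*t}\boldsymbol{x}_0^*$ with $\boldsymbol{z}(0)=\delta\boldsymbol{x}_0$; if $\boldsymbol{z}\equiv\boldsymbol{0}$ then $\delta\boldsymbol{x}_0=\boldsymbol{0}$ and $\delta A\,e^{A^*t}\boldsymbol{x}_0^*\equiv\boldsymbol{0}$, and differentiating the latter $j$ times at $t=0$ gives $\delta A\,(A^*)^j\boldsymbol{x}_0^*=\boldsymbol{0}$ for $j=0,\dots,d-1$. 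By A1 these vectors span $\mathbb{R}^d$, hence $\delta A=0$ and $v=\boldsymbol{0}$, so $H$ is strictly positive definite and invertible. Consequently $\bar H_n^{-1}\xrightarrow{p}H^{-1}$ (with $\bar H_n$ invertible with probability tending to one), and Slutsky's theorem combined with the CLT gives $\sqrt{n}(\hat{\boldsymbol{\theta}}_n-\boldsymbol{\theta}^*)=-\bar H_n^{-1}\sqrt{n}\,\nabla_{\boldsymbol{\theta}}M_n(\boldsymbol{\theta}^*)\xrightarrow{d}N(\boldsymbol{0},H^{-1}VH^{-1})$. I expect the main obstacle to be exactly this last point: translating the \emph{global} identifiability condition A1 into the \emph{local} nondegeneracy of $H$ through the sensitivity (variational) equation. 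The uniform-convergence bookkeeping for $\nabla_{\boldsymbol{\theta}}^2M_n$ (controlling the noise-weighted second-derivative term) and the Lindeberg verification are routine though tedious, and the first-order-condition and Slutsky steps are immediate.
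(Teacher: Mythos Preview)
Your proposal is correct and follows the same overall architecture as the paper's proof: mean-value expansion of the score, Lindeberg--Feller CLT for $\sqrt{n}\,\nabla_{\boldsymbol{\theta}} M_n(\boldsymbol{\theta}^*)$, convergence of the empirical Hessian to $H$, and Slutsky. The one genuine difference is in the positive-definiteness argument for $H$. The paper writes $H=\tfrac{2}{T}\int_0^T W(t)^\top W(t)\,dt$ with $W(t)=\big(e^{A^*t},\,Z_{11}^*(t)\boldsymbol{x}_0^*,\dots,Z_{dd}^*(t)\boldsymbol{x}_0^*\big)$, and from $W(t)\boldsymbol{\xi}\equiv\boldsymbol{0}$ it computes $W^{(m)}(0)\boldsymbol{\xi}$ explicitly for $m=0,1,\dots,d$, identifies the $A$-block of $\boldsymbol{\xi}$ with a $d\times d$ matrix $\Xi$, and uses a recursive subtraction scheme on the resulting system to conclude $\Xi\,(A^*)^{l}\boldsymbol{x}_0^*=\boldsymbol{0}$ for $l=0,\dots,d-1$, hence $\Xi=0$ by A1. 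Your sensitivity-equation route reaches the same conclusion more transparently: once $\boldsymbol{z}(t)=J(t)v$ solves $\dot{\boldsymbol{z}}=A^*\boldsymbol{z}+\delta A\,e^{A^*t}\boldsymbol{x}_0^*$ and vanishes identically, the forcing term must vanish, and differentiating it at $t=0$ gives $\delta A\,(A^*)^{j}\boldsymbol{x}_0^*=\boldsymbol{0}$ directly. This buys conciseness and makes the role of A1 explicit without the paper's derivative bookkeeping; on the other hand, the paper's explicit formulas for $Z_{jk}^*$, $S$, and $R$ (obtained under A2) are needed anyway for the plug-in inference in Sections~\ref{confidence sets}--\ref{hypothesis test}, so that computation is not wasted. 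Your coordinatewise mean-value theorem and your decomposition of $\nabla_{\boldsymbol{\theta}}^2 M_n$ into a Gauss--Newton part plus a residual-weighted second-derivative part are also slightly more careful than the paper's corresponding steps, which jump from uniform convergence of $M_n$ to convergence of its Hessian.
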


The notation $\xrightarrow{d}$ stands for convergence in distribution. The proof of Theorem~\ref{theorem:Asymptotic normality} can be found in Appendix~\ref{proof:theorem3.2}. From the theorem, we see that the NLS estimator is asymptotically normal, and the convergence rate is $n^{-1/2}$. Here, the rate meets the one of the standard parametric NLS estimator \citep{jennrich1969asymptotic, wu1981asymptotic}. This result is reasonable because our model in \eqref{eq:measurement model} is a parametric one.

Now, with the asymptotic normality result, if we can estimate the asymptotic covariance matrix $\Sigma^*:= H^{-1}VH^{-1}$, we can perform statistical inference for the unknown system parameters. In particular, we derive the explicit expressions of matrices $H$ and $V$ in \eqref{eq:H} and \eqref{eq:V_RR^T}. According to their formulae, they are functions of the true system parameter $\boldsymbol{\theta}^*$, which are unknown in practice. Therefore, we approximate $H$ and $V$ by substituting $\boldsymbol{\theta}^*$ with the NLS parameter estimate $\hat{\boldsymbol{\theta}}_n$ in their formulae. Then the inference can be performed based on the approximated covariance matrix, denoted by $\hat{\Sigma}_n$. In the following two subsections, we introduce the details of the inference. For those who are not familiar with statistical inference, please refer to \citep{borokov2018mathematical, shao2003mathematical} for relevant concepts and methods.

\subsubsection{Confidence sets for unknown parameters}\label{confidence sets}
Based on Theorem \ref{theorem:Asymptotic normality}, we can derive the approximate confidence sets for the unknown true parameters $\boldsymbol{\theta}^*$. In our paper, we employ the term ``confidence set" as a general descriptor for a range of values within which we have a certain level of confidence that the true parameters reside. This umbrella term encompasses two specific types of confidence sets: confidence intervals (CIs) and confidence regions (CRs). When the parameter under consideration is one-dimensional, we refer to the confidence set as a confidence interval (CI). When the parameter is multidimensional, we use the term confidence region (CR) to describe the confidence set. For the detailed theoretical definition, please refer to \citep{borokov2018mathematical}.

We first construct the simultaneous confidence region (CR) for all the $d+d^2$ parameters $\boldsymbol{\theta}^*_i$, where $\boldsymbol{\theta}^*_i$ denotes the $i$th entry of $\boldsymbol{\theta}^*$, and $i=1,\ldots,d+d^2$. According to Theorem \ref{theorem:Asymptotic normality}, we have
\begin{equation*}  
	n(\hat{\boldsymbol{\theta}}_n -  \boldsymbol{\theta}^*)^{\top} \{\Sigma^*\}^{-1} (\hat{\boldsymbol{\theta}}_n -  \boldsymbol{\theta}^*) \xrightarrow{d} \chi_{d+d^2}^2\,, \mbox{ as } n\rightarrow \infty\,.
\end{equation*}

Therefore, we approximate the $1-\alpha$ CR by the set 
\begin{equation}\label{eq:CR}
  \bigg\{\boldsymbol{\theta}: \quad n(\hat{\boldsymbol{\theta}}_{n} -  \boldsymbol{\theta})^{\top} \hat{\Sigma}_n^{-1} (\hat{\boldsymbol{\theta}}_{n} -  \boldsymbol{\theta}) \leq \chi_{d+d^2}^2(1-\alpha)\bigg\}\,,
\end{equation}
where $\chi_{m}^2(1-\alpha)$ denotes the upper-tail critical value of $\chi^2$ distribution with $m$ degrees of freedom at significance level $\alpha$. Please note that in statistical hypothesis testing, the significance level $\alpha$ represents the probability of rejecting the null hypothesis when it is actually true. Typically set to $5\%$ or lower depending on the field of study, $\alpha$ is chosen by the experimenter. In the meantime, $1-\alpha$ denotes the corresponding confidence level, where CI/CR contains the true parameter value $(1-\alpha)\%$ of the time.

We then construct a pointwise confidence interval (CI) for each $\boldsymbol{\theta}^*_i$, $i=1,\ldots,d+d^2$. Based on Theorem $\ref{theorem:Asymptotic normality}$, we can derive that 
\begin{equation*}
	\sqrt{n}(\hat{\boldsymbol{\theta}}_{ni} - \boldsymbol{\theta}^*_i) \xrightarrow{d} N(0, D_i(\Sigma^*))\,, \mbox{ as } n\rightarrow \infty\,,
\end{equation*}
where $\hat{\boldsymbol{\theta}}_{ni}$ denotes the $i$th entry of $\hat{\boldsymbol{\theta}}_n$ and $D_i(M)$ denotes the $i$th diagonal entry of matrix $M$. Then, the $1-\alpha$ CI for each $\boldsymbol{\theta}_i^*$ can be estimated by
\begin{equation}\label{eq:CI}
    \bigg[\hat{\boldsymbol{\theta}}_{ni} - z_{\alpha/2}\sqrt{D_i(\hat{\Sigma}_n)/n}\,, \hspace{0.3
    cm} \hat{\boldsymbol{\theta}}_{ni} + z_{\alpha/2}\sqrt{D_i(\hat{\Sigma}_n)/n}\bigg]\,,
\end{equation}
where $z_{\alpha/2}$ is the two-tailed critical value of the standard normal distribution at significance level $\alpha$. That is, if $Z\sim N(0, 1)$, then $\mathbb{P}(Z < -z_{\alpha/2}) + \mathbb{P}(Z >z_{\alpha/2}) = \alpha$.

\subsubsection{Infer the causal structure of the ODE system}\label{hypothesis test}
Another application of Theorem \ref{theorem:Asymptotic normality} is to infer the causal structure among variables within the ODE system, specifically by testing the hypothesis $a_{jk}^* = 0$, where $a_{jk}^*$ denotes the $jk$-th entry of the true parameter matrix $A^*$, with $j,k=1,\ldots,d$. When $a_{jk}^* \neq 0$, as delineated by the expression of the ODE system \eqref{eq:ODE model}, the derivative of $x_j(t)$ is influenced by $x_k(t)$, implying a causal link from variable $x_k$ to variable $x_j$, as referenced in \citet{scholkopf2021toward}. Here, $x_j$ denotes the $j$-th variable of the ODE system, and $x_j(t)$ denotes the state of the $j$-th variable at time $t$. Note that the ODE system is fully observable, that is, there are no latent variables interacting with the system.

Then based on the derived $1-\alpha$ CI for each parameter in \eqref{eq:CI}, we propose to conduct a hypothesis test:
\begin{equation}
    H_0: a_{jk}^* = 0 \hspace{0.2cm} \mbox{vs} \hspace{0.2cm} H_1: a_{jk}^* \neq 0 
\end{equation}
by assessing the following inequality:
\begin{equation}\label{eq:test ajk}
    |\hat{a}_{jk}| > z_{\alpha/2}\sqrt{D_{d+(j-1) d + k}(\hat{\Sigma}_n)/n}\,,
\end{equation}
where $\hat{a}_{jk}$ denotes the estimator of $a_{jk}^*$, which is the test statistic and equals the $d+(j-1)d+k$-th entry of $ \hat{\boldsymbol{\theta}}_n$.
If \eqref{eq:test ajk} holds, we have significant evidence to reject the null hypothesis at the significance level $\alpha$, and conclude that $a_{jk}^* \neq 0$, thereby affirming a causal link from variable $x_k$ to variable $x_j$.

\subsubsection{Asymptotic normality of NLS estimators from degraded observations}

Here we present the asymptotic normality of NLS estimators from degraded observations. Recalling the transformation rules from $\tilde{\boldsymbol{\theta}}^*$ to $\boldsymbol{\theta}^*$, $g$, defined in Corollary~\ref{corollary:consistency aggregate}/\ref{corollary:consistency timescaled}, we denote its gradient at $\tilde{\boldsymbol{\theta}}^*$ by $G:=\nabla g(\tilde{\boldsymbol{\theta}}^*)$. We have derived the explicit formulae of $H(T,\boldsymbol{\theta}^*)$ and $V(T, \boldsymbol{\theta}^*)$ as matrix functions of $T$ and $\boldsymbol{\theta}^*$ in ~\eqref{eq:H}/\eqref{eq:V_RR^T}. Then we establish the following corollaries.

\begin{corollary}[aggregated observations]\label{corollary:normality aggregate}
Suppose assumptions \textnormal{A1, A2} are satisfied with respect to $\boldsymbol{\theta}^*$, assumptions \textnormal{A3-A6} hold and assumption \textnormal{A7} is satisfied with respect to $\tilde{\boldsymbol{\theta}}^*$, we have $\sqrt{\tilde{n}}(\hat{\boldsymbol{\theta}}_{\tilde{n}} -\boldsymbol{\theta}^* )\xrightarrow{d} N(\boldsymbol{0}, G\tilde{H}^{-1}\tilde{V} \tilde{H}^{-1}G^{\top}) $, as $\tilde{n}\rightarrow \infty$, where $\hat{\boldsymbol{\theta}}_{\tilde{n}}$ is defined in Corollary~\ref{corollary:consistency aggregate}, $\tilde{H} = H(\tilde{T},\tilde{\boldsymbol{\theta}}^*)$ and $\tilde{V} = V(\tilde{T},\tilde{\boldsymbol{\theta}}^*)/k$, with $\tilde{T} = (\lfloor 
n/k\rfloor-1)kT/(n-1)  $ and $\tilde{\boldsymbol{\theta}}^* = (\tilde{\boldsymbol{x}}_0^*, \tilde{A}^*) = \big((I+e^{A^*\Delta_t} + \cdots + e^{A^*(k-1)\Delta_t})\boldsymbol{x}_0^*/k, A^*\big)$.
\end{corollary}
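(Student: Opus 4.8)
The plan is to reduce the statement to Theorem~\ref{theorem:Asymptotic normality} applied to the \emph{aggregated} ODE system, and then push the conclusion through the smooth reparametrization $g$ via the multivariate delta method.

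First I would rewrite the noisy aggregated data as a genuine measurement model of the form \eqref{eq:measurement model} for the aggregated system. By Definition~\ref{def:aggregated} and linearity of \eqref{eq:measurement model}, $\tilde{\boldsymbol{y}}_j = e^{\tilde{A}^*\tilde{t}_j}\tilde{\boldsymbol{x}}_0^* + \tilde{\boldsymbol{\epsilon}}_j$ with $\tilde{\boldsymbol{\epsilon}}_j := (\boldsymbol{\epsilon}_{(j-1)k+1} + \cdots + \boldsymbol{\epsilon}_{jk})/k$ and $(\tilde{\boldsymbol{x}}_0^*,\tilde{A}^*)$ as in Corollary~\ref{corollary:aggregate}. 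Since the index blocks are disjoint, $\{\tilde{\boldsymbol{\epsilon}}_j\}$ are i.i.d.\ with mean zero and covariance $\tilde{\Sigma} := \Sigma/k$, so A5 holds for the aggregated system with noise covariance $\Sigma/k$. The aggregated times $\tilde{t}_j = t_{(j-1)k+1}$ are equally spaced with step $k\Delta_t$, start at $0$, and end at $\tilde{t}_{\tilde{n}} = \tilde{T} = (\lfloor n/k\rfloor-1)kT/(n-1)$, so A6 holds; A3 gives $\tilde{n}>d$; A7 at $\tilde{\boldsymbol{\theta}}^*$ is assumed; and A4 carries over since the aggregation rule of Corollary~\ref{corollary:aggregate} sends the compact $\Theta$ to a compact set.

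Next I would verify A1 and A2 at $\tilde{\boldsymbol{\theta}}^*$. Corollary~\ref{corollary:aggregate} gives $\tilde{A}^* = A^*$, so A2 is inherited verbatim. For A1, write $\tilde{\boldsymbol{x}}_0^* = P\boldsymbol{x}_0^*$ with $P := (I + e^{A^*\Delta_t} + \cdots + e^{A^*(k-1)\Delta_t})/k$; since $P$ is a polynomial in $e^{A^*\Delta_t}$, it commutes with $A^*$, and it is nonsingular because its eigenvalues $(1 + e^{\lambda_i^*\Delta_t} + \cdots + e^{(k-1)\lambda_i^*\Delta_t})/k$ are nonzero for real $\lambda_i^*$. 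Hence $(\tilde{A}^*)^m\tilde{\boldsymbol{x}}_0^* = P(A^*)^m\boldsymbol{x}_0^*$, so the Krylov matrix of $(\tilde{A}^*,\tilde{\boldsymbol{x}}_0^*)$ is $P$ times that of $(A^*,\boldsymbol{x}_0^*)$ and is therefore nonsingular by A1 for $\boldsymbol{\theta}^*$. With all hypotheses in place, Theorem~\ref{theorem:Asymptotic normality} applied to the aggregated system yields $\sqrt{\tilde{n}}(\hat{\tilde{\boldsymbol{\theta}}}-\tilde{\boldsymbol{\theta}}^*)\xrightarrow{d} N(\boldsymbol{0},\tilde{H}^{-1}\tilde{V}\tilde{H}^{-1})$, where $\tilde{H} = H(\tilde{T},\tilde{\boldsymbol{\theta}}^*)$ from \eqref{eq:H}, and reading off \eqref{eq:V} with $\tilde{\Sigma} = \Sigma/k$ replacing $\Sigma$ inside $R$ gives $\tilde{V} = V(\tilde{T},\tilde{\boldsymbol{\theta}}^*)/k$.

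Finally I would apply the delta method. The map $g$ of Corollary~\ref{corollary:consistency aggregate} is $C^1$ near $\tilde{\boldsymbol{\theta}}^*$: the exponential and the finite sum are entire, and matrix inversion is smooth wherever $I + e^{\tilde{A}\Delta_t} + \cdots + e^{\tilde{A}(k-1)\Delta_t}$ is invertible, which holds at $\tilde{A}^* = A^*$ by the computation above. Moreover $g(\tilde{\boldsymbol{\theta}}^*) = \boldsymbol{\theta}^*$ by Corollary~\ref{corollary:aggregate}. Combining $\hat{\tilde{\boldsymbol{\theta}}}\xrightarrow{p}\tilde{\boldsymbol{\theta}}^*$ (Corollary~\ref{corollary:consistency aggregate}) with the asymptotic normality just obtained, the multivariate delta method gives $\sqrt{\tilde{n}}(g(\hat{\tilde{\boldsymbol{\theta}}})-\boldsymbol{\theta}^*)\xrightarrow{d} N(\boldsymbol{0},G\tilde{H}^{-1}\tilde{V}\tilde{H}^{-1}G^{\top})$ with $G = \nabla g(\tilde{\boldsymbol{\theta}}^*)$, which is exactly the claim since $\hat{\boldsymbol{\theta}}_{\tilde{n}} = g(\hat{\tilde{\boldsymbol{\theta}}})$. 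I expect the only real obstacle to be the two verifications in the middle step --- that the aggregated noise is i.i.d.\ with covariance $\Sigma/k$ and, more substantively, that A1 transfers to $\tilde{\boldsymbol{\theta}}^*$ through the nonsingular, $A^*$-commuting factor $P$; the remaining steps are a direct invocation of Theorem~\ref{theorem:Asymptotic normality} and a standard delta-method argument.
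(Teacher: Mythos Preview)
Your proposal is correct and follows essentially the same route as the paper: verify that the aggregated measurement model satisfies the hypotheses of Theorem~\ref{theorem:Asymptotic normality} (with noise covariance $\Sigma/k$, yielding the $1/k$ factor in $\tilde{V}$), apply that theorem to obtain asymptotic normality of $\hat{\tilde{\boldsymbol{\theta}}}$, and then push through $g$ via the multivariate delta method. Your verification of A1 at $\tilde{\boldsymbol{\theta}}^*$ via the $A^*$-commuting nonsingular factor $P$ is a slightly cleaner variant of the diagonalization/Vandermonde argument the paper uses (in the proof of Corollary~\ref{corollary:aggregate}), but the overall structure is the same.
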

 
\begin{corollary}[time-scaled observations]\label{corollary:normality timescaled}
Suppose assumptions \textnormal{A1, A2} are satisfied with respect to $\boldsymbol{\theta}^*$, assumptions \textnormal{A3-A6} hold and assumption \textnormal{A7} is satisfied with respect to $\tilde{\boldsymbol{\theta}}^*$, we have $\sqrt{\tilde{n}}(\hat{\boldsymbol{\theta}}_{\tilde{n}} -\boldsymbol{\theta}^* )\xrightarrow{d} N(\boldsymbol{0}, G\tilde{H}^{-1}\tilde{V} \tilde{H}^{-1}G^{\top}) $, as $\tilde{n}\rightarrow \infty$, where $\hat{\boldsymbol{\theta}}_{\tilde{n}}$ is defined in Corollary~\ref{corollary:consistency timescaled}, $\tilde{H} = H(kT,\tilde{\boldsymbol{\theta}}^*)$ and $\tilde{V} = V(kT,\tilde{\boldsymbol{\theta}}^*)$, with $\tilde{\boldsymbol{\theta}}^* = (\tilde{\boldsymbol{x}}_0^*, \tilde{A}^*) =(\boldsymbol{x}_0^*, A^*/k)$. 
\end{corollary}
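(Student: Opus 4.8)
The plan is to reduce the claim to Theorem~\ref{theorem:Asymptotic normality} applied to the time-scaled ODE system and then transport the resulting limit law through the reparametrization map $g$, which in the time-scaled case is affine-linear. Recall from Corollary~\ref{corollary:timescaled} that the time-scaled observations $\tilde{\boldsymbol{x}}_j = \boldsymbol{x}_j$ sampled at $\tilde{t}_j = k t_j$ solve an ODE of the form \eqref{eq:ODE model} with parameters $\tilde{\boldsymbol{\theta}}^* = (\tilde{\boldsymbol{x}}_0^*, \tilde{A}^*) = (\boldsymbol{x}_0^*, A^*/k)$; since $\tilde{\boldsymbol{y}}_j = \boldsymbol{y}_j = e^{A^* t_j}\boldsymbol{x}_0^* + \boldsymbol{\epsilon}_j = e^{\tilde{A}^* \tilde{t}_j}\tilde{\boldsymbol{x}}_0^* + \boldsymbol{\epsilon}_j$, the noisy time-scaled data obey exactly the measurement model \eqref{eq:measurement model} for $\tilde{\boldsymbol{\theta}}^*$ on the equally-spaced grid $\{0, k\Delta t, \ldots, kT\}$ with the \emph{same} error terms $\boldsymbol{\epsilon}_i$, hence the same covariance $\Sigma$. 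In particular the effective time horizon for the scaled problem is $\tilde{T} = kT$, and $\hat{\tilde{\boldsymbol{\theta}}}$ is precisely the NLS estimator of Section~\ref{section:large_sample} for this system; moreover $\tilde{n} = n \to \infty$.

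First I would check that the hypotheses of Theorem~\ref{theorem:Asymptotic normality} hold for the scaled system. Assumption A1 transfers because $(\tilde{A}^*)^j \tilde{\boldsymbol{x}}_0^* = k^{-j}(A^*)^j\boldsymbol{x}_0^*$, so $\{\tilde{\boldsymbol{x}}_0^*, \ldots, (\tilde{A}^*)^{d-1}\tilde{\boldsymbol{x}}_0^*\}$ arises from the linearly independent family $\{\boldsymbol{x}_0^*, \ldots, (A^*)^{d-1}\boldsymbol{x}_0^*\}$ by rescaling each vector by a nonzero constant; A2 transfers because $\tilde{A}^* = A^*/k$ has eigenvalues $\lambda_i^*/k$, still $d$ distinct real numbers since $k>0$. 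Assumption A6 holds because the scaled grid is equally spaced, starts at $0$ and ends at $kT$; A5 holds verbatim; A4 holds with the natural scaled parameter space $\tilde{\Theta} = g^{-1}(\Theta)$, which is compact since $g$ is a linear homeomorphism; and A7 with respect to $\tilde{\boldsymbol{\theta}}^*$ is assumed. Consistency $\hat{\tilde{\boldsymbol{\theta}}}\xrightarrow{p}\tilde{\boldsymbol{\theta}}^*$ is Corollary~\ref{corollary:consistency timescaled}. Theorem~\ref{theorem:Asymptotic normality} then gives
\[
  \sqrt{\tilde{n}}\,(\hat{\tilde{\boldsymbol{\theta}}} - \tilde{\boldsymbol{\theta}}^*) \xrightarrow{d} N\!\big(\boldsymbol{0},\, \tilde{H}^{-1}\tilde{V}\tilde{H}^{-1}\big),\qquad \tilde{H} = H(kT, \tilde{\boldsymbol{\theta}}^*),\quad \tilde{V} = V(kT, \tilde{\boldsymbol{\theta}}^*),
\]
where $\tilde{V}$ is built from the same $\Sigma$ as in \eqref{eq:V}; in contrast to the aggregated case of Corollary~\ref{corollary:normality aggregate}, there is no averaging of noise here, so no $1/k$ factor appears.

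It remains to pass to $\hat{\boldsymbol{\theta}}_{\tilde{n}} = g(\hat{\tilde{\boldsymbol{\theta}}})$. In the time-scaled case $g(\tilde{\boldsymbol{x}}_0, \tilde{A}) = (\tilde{\boldsymbol{x}}_0, k\tilde{A})$ is linear, so $g(\tilde{\boldsymbol{\theta}}^*) = \boldsymbol{\theta}^*$, the Jacobian $G := \nabla g(\tilde{\boldsymbol{\theta}}^*) = \mathrm{diag}(I_d, k I_{d^2})$ is a fixed matrix, and $\hat{\boldsymbol{\theta}}_{\tilde{n}} - \boldsymbol{\theta}^* = G(\hat{\tilde{\boldsymbol{\theta}}} - \tilde{\boldsymbol{\theta}}^*)$ holds exactly. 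Scaling by $\sqrt{\tilde{n}}$ and applying the continuous mapping theorem (equivalently the delta method with the linear map $G$) yields $\sqrt{\tilde{n}}\,(\hat{\boldsymbol{\theta}}_{\tilde{n}} - \boldsymbol{\theta}^*)\xrightarrow{d} N(\boldsymbol{0},\, G\tilde{H}^{-1}\tilde{V}\tilde{H}^{-1}G^{\top})$, which is the assertion.

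The argument is mostly bookkeeping; the two points needing genuine care are (i) tracking that the effective time horizon entering the explicit formulae \eqref{eq:H}--\eqref{eq:V} becomes $kT$ rather than $T$, and (ii) checking that the noise covariance in $\tilde{V}$ is exactly $\Sigma$ with no rescaling — the sole structural difference from Corollary~\ref{corollary:normality aggregate}. The delta-method step itself is immediate: because $g$ is affine-linear one needs no separate verification that $g$ is $C^1$ near $\tilde{\boldsymbol{\theta}}^*$ or that $G$ is nonsingular, and indeed $G$ here is trivially invertible.
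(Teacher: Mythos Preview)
Your proposal is correct and follows essentially the same approach as the paper: verify that the time-scaled system satisfies the hypotheses of Theorem~\ref{theorem:Asymptotic normality} with horizon $\tilde{T}=kT$ and unchanged noise covariance $\Sigma$, obtain the asymptotic normality of $\hat{\tilde{\boldsymbol{\theta}}}$, and then apply the delta method via the linear map $g(\tilde{\boldsymbol{x}}_0,\tilde{A})=(\tilde{\boldsymbol{x}}_0,k\tilde{A})$ with Jacobian $G=\mathrm{diag}(I_d,kI_{d^2})$. Your observation that linearity of $g$ gives the exact identity $\hat{\boldsymbol{\theta}}_{\tilde{n}}-\boldsymbol{\theta}^*=G(\hat{\tilde{\boldsymbol{\theta}}}-\tilde{\boldsymbol{\theta}}^*)$ is a clean touch, though the paper simply cites the multivariate delta method.
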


The proofs of Corollary~\ref{corollary:normality aggregate} and Corollary~\ref{corollary:normality timescaled} can be found in Appendix~\ref{proof:corollary3.2.1} and Appendix~\ref{proof:corollary3.2.2}, respectively.
With the consistency property of the NLS estimators, these two corollaries can be directly derived from Theorem \ref{theorem:Asymptotic normality} by using multivariate Delta method. The explicit formulae for matrices $G$, $\tilde{H}$ and $\tilde{V}$ for aggregated/time-scaled observations are derived in the proofs.

Worth to be noted that matrix $\tilde{V}$ in Corollary~\ref{corollary:normality aggregate} is not $V(T, \boldsymbol{\theta}^*)$ by substituting $T$ and $\boldsymbol{\theta}^*$ with $\tilde{T}$ and $\tilde{\boldsymbol{\theta}}^*$, but also reduced by a factor of $k$. The reason is that, the equation of $V$ includes the variance matrix $\Sigma$ of the error term $\boldsymbol{\epsilon}_i$ in~\eqref{eq:V_RR^T}. By the generation rules of aggregated observations defined in Definition~\ref{def:aggregated}, the variance of the aggregated noise term $\tilde{\boldsymbol{\epsilon}}_i$ becomes $k$ times smaller than that of the original one, that is $\tilde{\Sigma} = \Sigma/k$. And thanks to the reduced variance, we will show that the parameter estimates of aggregated observations can reach the same level of accuracy as that of the original observations with a much smaller sample size in the simulation results in subsection~\ref{simulation:aggregated}.

Now that we have derived the asymptotic normality results from aggregated/time-scaled observations. We can perform statistical inference for the unknown original system parameters $\boldsymbol{\theta}^*$ using the same way introduced in subsection~\ref{confidence sets} and~\ref{hypothesis test}.

\section{Simulations}\label{sec:simulation}
In this section, we illustrate the theoretical results established in Section \ref{section:large_sample} by simulation .

\subsection{Data Simulation}\label{subsec:data simulation}
For each $d=2,3,4$, we first randomly generate a $d\times d$ parameter matrix $A_d^*$ and a $d\times 1$ initial condition $\boldsymbol{x}_{0d}^*$ as the true system parameters for each $d$-dimensional ODE system (\ref{eq:ODE model}). Moreover, to test whether $a_{jk}^* = 0$, we randomly set several entries to zero in each $A_d^*$. Without loss of generality, we set $T=1$. Then $n$ equally-spaced noisy observations are generated based on Equation~\eqref{eq:measurement model} in $[0, 1]$ time interval with error term $\boldsymbol{\epsilon}_i\sim N(\boldsymbol{0}, \text{diag}(0.05^2, \ldots, 0.05^2))$. We tested various sample sizes for each $d$-dimensional ODE system. For each configuration, we run 200 random replications. 
The $A_d^*$ and $\boldsymbol{x}_{0d}^*$ are shown below.\\
$A_2^* = \begin{bmatrix} 
    1.76 & -0.1\\
    0.98 & 0
    \end{bmatrix}$,
$A_3^* = \begin{bmatrix} 
    1.76 & 0 & 0.98\\
    2.24 & 0 & -0.98\\
    0.95 & 0 & -0.1
    \end{bmatrix}$,    
 $A_4^* = \begin{bmatrix} 
    1.76 & 0.9 & 0 & 2.24\\
    1.87 & -0.98 & 0 & -1.15\\
    -1.1 & 0 & 0.64 & 0\\
    1.26 & 0.12 & 0.94 & 0
    \end{bmatrix}$,    
    
\noindent and $\boldsymbol{x}_{02}^*=[1.87, -0.98]^{\top}$, 
$\boldsymbol{x}_{03}^* = [0.41,0.14,1.45]^{\top}$,
$\boldsymbol{x}_{04}^* = [-0.42,1.01,1.97,-0.38]^{\top}$.

Note that since the NLS loss function~\eqref{eq:Mn(theta)} is a non-convex function, in practical application, one may require a global optimization technique to obtain the NLS estimates. However, in this paper, we focus on the theoretical statistical properties analysis of the NLS estimator. Theoretically, the non-convexity of the NLS loss function does not influence our derived theoretical results. Therefore, for the purpose of illustrating our theoretical results, we do not apply the global optimization technique in our simulation due to its high computational cost. Instead, we use a bound-constrained minimization technique~\citep{branch1999subspace} to obtain the NLS estimates. In order to get the global minimum NLS estimate (or a local minimum that is close enough to the global minimum), we apply two tricks when implementing the optimization method. Firstly, we initialize the parameter with a value close to the true parameter (for example, $\boldsymbol{\theta}^* - 0.001$). Secondly, we constrain the bounds of the parameter within a reasonable neighbourhood of the true parameter (for example, $[\boldsymbol{\theta}^* - 0.5,\boldsymbol{\theta}^*+0.5]$). According to the simulation results below, the attained NLS estimates are precise enough to illustrate the correctness of our theoretical results. In other words, suppose we apply a global optimization technique to obtain the NLS estimates. In that case, the simulation results will be more supportive of the correctness of our theoretical results with the help of potentially more accurate NLS estimates.

\begin{figure*}[ht]
  \centering
  \includegraphics[width=15cm]{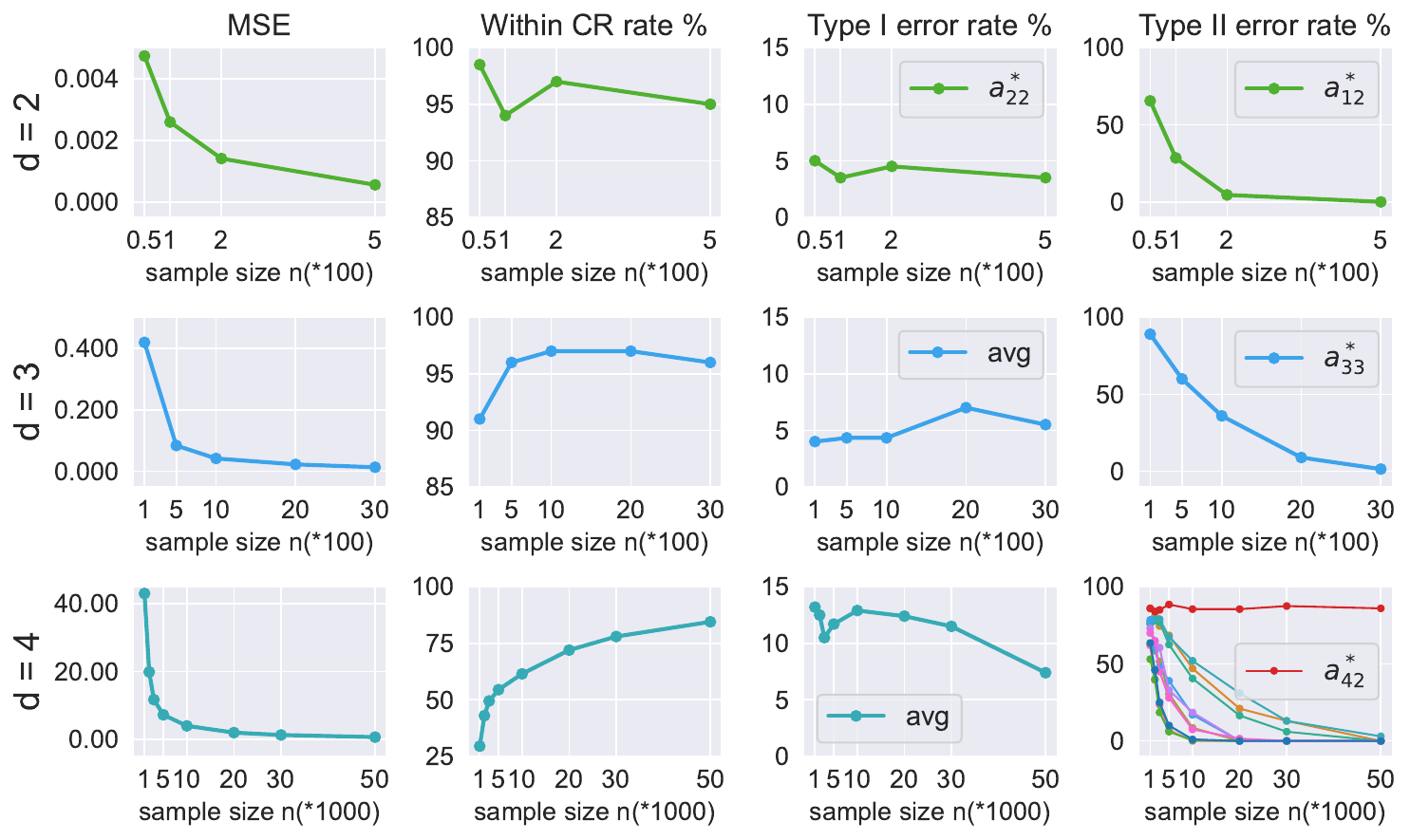}
  \caption{Simulation results for $d=2,3,4$ dimensional ODE system, respectively}
  \label{fig:simulation}
\end{figure*}

\subsection{Metrics} 
\noindent{\bf Mean Squared Error (MSE)} is introduced to check the consistency of the parameter estimator. It is defined as: 
\begin{equation*}
	\text{MSE} = \frac{1}{N}\sum_{j=1}^{N} \parallel \hat{\boldsymbol{\theta}}_{n}^{(j)}  - \boldsymbol{\theta}^*\parallel_2^2\,, 
\end{equation*}
where $\hat{\boldsymbol{\theta}}_{n}^{(j)}$ denotes the estimated parameter of the $j$th replication, and $N$ is the number of replications.

\noindent{\bf Within CR rate} is defined as the rate of replications with the true parameters $\boldsymbol{\theta}^*$ included in the CR at $95\%$ confidence level. Whether the $95\%$ CR includes $\boldsymbol{\theta}^*$ at each replication can be calculated by Equation (\ref{eq:CR}). Worth to be mentioned that here we use this metric aiming to test the correctness of our asymptotic normal theory, for example, the variance matrix $\Sigma^*$, not to infer the confidence region. Therefore, we use the true value of $\Sigma^*$ here rather than the estimated one in Equation (\ref{eq:CR}). Since we set the confidence level at $95\%$, if our theoretical results are correct, the within CR rate in our
simulation results should approach $95\%$.

\noindent{\bf Type I/II error rate} is calculated based on the hypothesis test introduced in subsection~\ref{hypothesis test}. We set the significance level $\alpha = 0.05$. Type I error rate is the rate of replications rejecting the null hypothesis, for $a_{jk}^* = 0$. And type II error rate is the rate of replications not rejecting the null hypothesis, for $a_{jk}^* \neq 0$. Whether we reject the null hypothesis or not is calculated by Equation (\ref{eq:test ajk}). Given the significance level set at $0.05$, the anticipated outcome is that if our theoretical results are correct, the Type I error rate in our simulation results should approximate $5\%$, while the Type II error rate should tend towards zero. It is noteworthy that the lower the type II error rate is, the more powerful our causal structure inference test is.

\subsection{Results Analysis}\label{results analysis}
The simulation results are presented in Figure~\ref{fig:simulation}. We first explain the legend in the figure. Since $A_2^*$ only has one zero entry $a_{22}^*$, the type I error rate is based on $a_{22}^*$. However, $A_3^*$ and $A_4^*$ have multiple zero entries, and the type I error rate for each zero entry is similar. Therefore, we show the average value of all zero entries in $A_3^*$ and $A_4^*$, labelled as avg. For type II error rate in cases with $d=2$ and $d=3$, we only show the value of $a_{12}^*$ and $a_{33}^*$, respectively. Because all other non-zero entries in $A_2^*$ and $A_3^*$ have zero or close to zero type II error rate since the sample size $n$ is small. For $d=4$, we present the results of all the $11$ non-zero entries in $A_4^*$, but due to space limitations, we only label entry $a_{42}^*$, which has a different trend from others. 

It can be seen from the first column in Figure~\ref{fig:simulation} that for all three cases where $d=2,3$ and $4$, MSE decreases and approaches zero with the increase of sample size $n$, which indicates the consistency of the estimators. As can be seen from the figure, for $d=2$ and $d=3$ cases, the within CR rate is around $95\%$, and the type I error rate is around $5\%$ for all different sample sizes $n$. Moreover, the type II error rate reduces as the sample size increases and attains or approaches zero when the sample size is large enough. This result implies the correctness of our asymptotic normality theory and indicates that the test of causal structure inference for the ODE system is powerful.

For the 4-dimensional case, we can see that with the increase of sample size, the within CR rate increases, and the type I error rate decreases, which implies that as the parameter estimates approach their true parameter values, the within CR rate and type I error rate is closer to $95\%$ and $5\%$ respectively. They do not attain their ideal values in our simulation because the parameter estimates are not precise enough under the current sample sizes due to the high dimension of the system parameters. For the same reason,  the type II error rate of entry $a_{42}^*$ keeps high. This result is reasonable because $a_{42}^*=0.12$ is close to zero. When the parameter estimate is not accurate enough, it is easy to get a wrong result that does not reject the null hypothesis. The type II error rates for $d=2$ and $d=3$ cases also support this conclusion. We can see that the absolute values of $a_{12}^*$ in $A_2^*$ and $a_{33}^*$ in $A_3^*$ are also small. Nevertheless,  as sample size increases, with the help of sufficiently accurate parameter estimates, their type II error rates approach zero.

It is worth noting that in the 4-dimensional case, the type II error rates of other entries approach zero when the sample size is much smaller compared to the case of $a_{42}^*$. This result implies that when the causal effect between variables, that is, $|a_{jk}^*|$, is significant, the causal structure can be easily and correctly discovered using our method. However, for cases where the causal effect is small or negligible, we need a sufficiently large sample to discover the causal relationship.

\subsection{Simulation Results from Degraded Observations}
In this subsection, we illustrate the corollaries built on aggregated/time-scaled observations in Section~\ref{section:large_sample} by simulation.

We chose the $d=3$ case with the true system parameters $(\boldsymbol{x}_{03}^*, A_3^*)$ the same as the one we used in subsection~\ref{subsec:data simulation}. The original noisy observations are generated using the same way we introduced in subsection~\ref{subsec:data simulation}. And then the aggregated/time-scaled observations are generated from the original ones based on Definition~\ref{def:aggregated}/Definition~\ref{def:time-scaled} with various $k$.

\subsubsection{Simulation results from aggregated observations}\label{simulation:aggregated}
In the following, we show the simulation results for aggregated observations with $k=5,10, \text{and } 20$, respectively. Moreover, to compare the results from the aggregated and the original observations, we also present the simulation results from the original observations in Table~\ref{aggregated_k1}. We use $n$ and $\tilde{n}$ to denote the sample size of the original observations and the aggregated observations, respectively.

\begin{table}[hbt!]
  \caption{Simulation results from original observations}
  \label{aggregated_k1}
  \centering
  \begin{tabular}{lllllllllllll}
    \toprule
    \multicolumn{2}{c}{Sample Size} & \multirow{2}{*}{MSE} & \multirow{2}{*}{CR }
    & \multicolumn{3}{c}{Type I Error Rate$\%$} &  \multicolumn{6}{c}{Type II Error Rate$\%$} \\
    \cmidrule(r){1-2} \cmidrule(r){5-7} \cmidrule{8-13}
     $n$ & $\tilde{n}$ &  & Rate$\%$ & $a_{12}$  & $a_{22}$  & $a_{32}$  & $a_{11}$  & $a_{13}$ & $a_{21}$  & $a_{23}$ & $a_{31}$  & $a_{33}$\\
    \midrule
    100  & - & 0.480 & 94   & 3   & 2   & 5.5 & 0 & 0 & 0 & 0 & 0 & 82.5 \\
    200  & - & 0.243 & 97.5 & 5.5 & 4.5 & 4   & 0 & 0 & 0 & 0 & 0 & 75.5 \\
    500  & - & 0.093 & 97   & 3.5 & 4   & 2.5 & 0 & 0 & 0 & 0 & 0 & 38.5 \\
    1000 & - & 0.045 & 95   & 5   & 4.5 & 2   & 0 & 0 & 0 & 0 & 0 & 8.5 \\
    2000 & - & 0.023 & 98   & 7   & 4   & 3.5 & 0 & 0 & 0 & 0 & 0 & 0   \\
    \bottomrule
  \end{tabular}
\end{table}

\begin{table}[hbt!]
  \caption{Simulation results from aggregated observations with $k=5$}
  \label{aggregated_k5}
  \centering
  \begin{tabular}{lllllllllllll}
    \toprule
    \multicolumn{2}{c}{Sample Size} & \multirow{2}{*}{MSE} & \multirow{2}{*}{CR }
    & \multicolumn{3}{c}{Type I Error Rate$\%$} &  \multicolumn{6}{c}{Type II Error Rate$\%$} \\
    \cmidrule(r){1-2} \cmidrule(r){5-7} \cmidrule{8-13}
     $n$ & $\tilde{n}$ &  & Rate$\%$ & $a_{12}$  & $a_{22}$  & $a_{32}$  & $a_{11}$  & $a_{13}$ & $a_{21}$  & $a_{23}$ & $a_{31}$  & $a_{33}$\\
    \midrule
    100  & 20  & 0.498 & 95.5 & 1.5 & 2   & 2.5 & 0 & 0 & 0 & 0 & 0 & 88.5 \\
    200  & 40  & 0.247 & 99   & 4.5 & 3.5 & 3   & 0 & 0 & 0 & 0 & 0 & 81 \\
    500  & 100 & 0.092 & 97.5 & 3.5 & 4   & 2.5 & 0 & 0 & 0 & 0 & 0 & 42 \\
    1000 & 200 & 0.045 & 95.5 & 5   & 4.5 & 2   & 0 & 0 & 0 & 0 & 0 & 9.5 \\
    2000 & 400 & 0.023 & 98.5 & 6.5 & 4   & 3.5 & 0 & 0 & 0 & 0 & 0 & 0   \\
    \bottomrule
  \end{tabular}
\end{table}

\begin{table}[hbt!]
  \caption{Simulation results from aggregated observations with $k=10$}
  \label{aggregated_k10}
  \begin{tabular}{lllllllllllll}
    \toprule
    \multicolumn{2}{c}{Sample Size} & \multirow{2}{*}{MSE} & \multirow{2}{*}{CR }
    & \multicolumn{3}{c}{Type I Error Rate$\%$} &  \multicolumn{6}{c}{Type II Error Rate$\%$} \\
    \cmidrule(r){1-2} \cmidrule(r){5-7} \cmidrule{8-13}
     $n$ & $\tilde{n}$ &  & Rate$\%$ & $a_{12}$  & $a_{22}$  & $a_{32}$  & $a_{11}$  & $a_{13}$ & $a_{21}$  & $a_{23}$ & $a_{31}$  & $a_{33}$\\
    \midrule
    100  & 10  & 0.536 & 84   & 1   & 0.5 & 2.5 & 0 & 0 & 0 & 0 & 0 & 94.5 \\
    200  & 20  & 0.262 & 98   & 3   & 2.5 & 2.5 & 0 & 0 & 0 & 0 & 0 & 85.5 \\
    500  & 50  & 0.093 & 98   & 3   & 3   & 1.5 & 0 & 0 & 0 & 0 & 0 & 47 \\
    1000 & 100 & 0.045 & 96   & 5   & 4.5 & 2   & 0 & 0 & 0 & 0 & 0 & 8.5 \\
    2000 & 200 & 0.023 & 98.5 & 6.5 & 3.5 & 3.5 & 0 & 0 & 0 & 0 & 0 & 0   \\
    \bottomrule
  \end{tabular}
\end{table}

\begin{table}[hbt!]
  \caption{Simulation results from aggregated observations with $k=20$}
  \label{aggregated_k20}
  \centering
  \begin{tabular}{lllllllllllll}
    \toprule
    \multicolumn{2}{c}{Sample Size} & \multirow{2}{*}{MSE} & \multirow{2}{*}{CR }
    & \multicolumn{3}{c}{Type I Error Rate$\%$} &  \multicolumn{6}{c}{Type II Error Rate$\%$} \\
    \cmidrule(r){1-2} \cmidrule(r){5-7} \cmidrule{8-13}
     $n$ & $\tilde{n}$ &  & Rate$\%$ & $a_{12}$  & $a_{22}$  & $a_{32}$  & $a_{11}$  & $a_{13}$ & $a_{21}$  & $a_{23}$ & $a_{31}$  & $a_{33}$\\
    \midrule
    100  & 5   & 0.858 & 30.5 & 0.5 & 1   & 0.5 & 0 & 0 & 0 & 0 & 3.5 & 99 \\
    200  & 10  & 0.301 & 86   & 0.5 & 3   & 1.5 & 0 & 0 & 0 & 0 & 0   & 91 \\
    500  & 25  & 0.093 & 98   & 3.5 & 2.5 & 1   & 0 & 0 & 0 & 0 & 0   & 53.5 \\
    1000 & 50  & 0.045 & 96.5 & 4   & 3.5 & 2   & 0 & 0 & 0 & 0 & 0   & 11.5 \\
    2000 & 100 & 0.023 & 99   & 6   & 4   & 3.5 & 0 & 0 & 0 & 0 & 0   & 0   \\
    \bottomrule
  \end{tabular}
\end{table}
The results show that for all three cases where $k=5,10 \text{ and } 20$, MSE decreases and approaches zero with the increase of sample size $\tilde{n}$, which indicates the consistency of the estimators. As can be seen from the tables, the within CR rate is around $95\%$ and the type I error rate for each of the zero entries in $A$ is around $5\%$ when the sample size $\tilde{n}$ is large enough in all $k=5,10 \text{ and } 20$ cases. In addition, the type II error rate reduces to zero as the sample size $\tilde{n}$ increases. This result implies the correctness of our asymptotic normality theory and indicates the test of causal structure inference for the ODE system is powerful.

As can be seen from the tables, the type II error rate from the aggregated observations tend to be slightly greater than that of the original observations for each $n$. Specifically, with the greater the $k$ being, the greater the type II error rate is. This is reasonable, because the sample size of the aggregated observations ($\tilde{n}$) is much smaller than that of the original ones ($n$), which causes the lack of accuracy of the parameter estimates when the sample size $\tilde{n}$ is not large enough. Thus leading to a higher MSE and a higher type II error rate. 

However, it can be seen from the last two rows in each of the four tables, the MSEs and type II error rates are almost same for each case, which implies that when the sample size of the aggregated observations $\tilde{n}$ is large enough, the parameter estimates of the aggregated observations can reach the same level of accuracy as that of the original observations. In addition, the power of inferring the causal structure of the ODE system from aggregated observations can be as good as that from the original observations. The reason why the aggregated observations with a much smaller sample size $\tilde{n} = n/k$ can still perform as good as the original observations with the corresponding size $n$ is that, the variance of the aggregated noise term $\tilde{\boldsymbol{\epsilon}}_i$ becomes $k$ times smaller than that of the original one $\boldsymbol{\epsilon}_i$, that is
\begin{equation*}
    \tilde{\Sigma} = \Sigma/k
\end{equation*}
based on the generation rules of the aggregated observations. Therefore, with a much smaller noise variance, the parameter estimates from aggregated observations can reach the same level of accuracy as that of the original observations with a much smaller sample size.

\subsubsection{Simulation results from time-scaled observations}
In the following, we show the simulation results for time-scaled observations with $k=0.01, 0.1, 1, 10, \text{ and }100$, respectively. Since for all the metrics except MSE, under all cases the simulation results are identical, we present their results in one table. The MSE for each $k$ are the same up to $10^{-5}$, therefore, the differences of MSE are negligible and one can safely conclude that the simulation results for time-scaled observations with various $k$ is the same as that of the original observations (that is, $k=1$). This implies the correctness of our theoretical results of the time-scaled observations established in Section~\ref{section:large_sample}.
\begin{table}[hbt!]
  \caption{Simulation results from time-scaled observations with $k=0.01, 0.1, 1, 10, 100$}
  \label{time-scaled}
  \centering
  \begin{tabular}{lllllllllllll}
    \toprule
    \multicolumn{2}{c}{Sample Size} & \multirow{2}{*}{MSE} & \multirow{2}{*}{CR }
    & \multicolumn{3}{c}{Type I Error Rate$\%$} &  \multicolumn{6}{c}{Type II Error Rate$\%$} \\
    \cmidrule(r){1-2} \cmidrule(r){5-7} \cmidrule{8-13}
     $n$ & $\tilde{n}$ &  & Rate$\%$ & $a_{12}$  & $a_{22}$  & $a_{32}$  & $a_{11}$  & $a_{13}$ & $a_{21}$  & $a_{23}$ & $a_{31}$  & $a_{33}$\\
    \midrule
    100  & 100   & 0.419 & 96   & 2   & 2.5 & 2 & 0 & 0 & 0 & 0 & 0 & 85.5 \\
    200  & 200  & 0.269 & 94   & 7   & 6   & 4.5 & 0 & 0 & 0 & 0 & 0   & 72.5 \\
    500  & 500  & 0.104 & 91.5 & 5.5 & 5.5 & 6   & 0 & 0 & 0 & 0 & 0   & 38.5 \\
    1000 & 1000  & 0.050 & 92.5 & 5   & 6   & 3   & 0 & 0 & 0 & 0 & 0   & 9 \\
    2000 & 2000 & 0.022 & 95   & 3.5 & 4.5 & 0.5 & 0 & 0 & 0 & 0 & 0   & 1   \\
    \bottomrule
  \end{tabular}
\end{table}

\section{Related Work}
In this section, we introduce the related work from three closely related aspects.
\subsection{Identifiability Analysis of Linear ODE Systems} 
Most current studies for identifiability analysis of parameters in linear dynamical systems are in the control theory \citep{bellman1970structural, gargash1980necessary,glover1974parametrizations,grewal1976identifiability,rosenbrock1974structural}. In the applied mathematics area, \citet{stanhope2014identifiability,qiu2022identifiability} provided systematic studies of the identifiability analysis of linear ODE systems from a \textbf{single} trajectory. Our identifiability analysis is built upon \citep{stanhope2014identifiability}. However, instead of building identifiability based on the entire continuous trajectory, we extend the identifiability work to more practical scenarios where we only have discrete observations sampled from the trajectory and do not know the initial conditions. The authors in \citep{stanhope2014identifiability} also discussed using equally-spaced error-free observations to calculate the system parameters explicitly. Our work's main distinction is that we build the identifiability condition entirely on the parameter of interest, that is, ($A$ and $\boldsymbol{x}_0$), without further linearly independent assumption on the observations. With the help of our identifiability condition, the parameter estimator's asymptotic properties can be established with mild assumptions. Specifically, the covariance matrix of the asymptotic normality distribution can be explicitly expressed. 
Thus, we can perform statistical inference for the unknown parameters. Moreover, we treat the initial condition $\boldsymbol{x}_0$ as a parameter while $\boldsymbol{x}_0$ is a given fixed value in their setting. 
In the most recent work \citep{qiu2022identifiability}, the authors proposed several quantitative scores for identifiability analysis of linear ODEs in practice. 

\subsection{Parameter Estimation Methods for ODE Systems} 
The NLS method is applied to estimate parameters in ODE systems in \citep{bock1983recent,biegler1986nonlinear,xue2010sieve}. However, to our knowledge, no existing work provides a systematic asymptotic analysis of the NLS estimator for the ODE system \eqref{eq:ODE model}. The closest related work is that of \citet{xue2010sieve}, who studied asymptotic properties of the NLS estimator based on approximating ODEs' solutions by using the Runge-Kutta algorithm \citep{dormand1980family}. Nonetheless, their work requires strong assumptions, which are complex and cumbersome to verify. In contrast, our approach necessitates milder assumptions and yields an analytic covariance matrix for the asymptotic normal distribution. In addition to the NLS method, the two-stage smoothing-based estimation method is also well used for parameter estimation in ODE systems and its asymptotic properties have been extensively explored \citep{varah1982spline,chen2008efficient,chen2008estimation,wu2014sparse,brunton2016discovering}. This method usually applies smoothing approaches such as penalized splines to estimate the state variables and their derivatives at the first stage. Thus a large number of observations are needed to ensure the estimates' accuracy. 
Principal differential analysis \citep{ramsay1996principal,heckman2000penalized,poyton2006parameter,qi2010asymptotic} and Bayesian approaches \citep{ghosh2021variational} were also proposed to estimate unknown parameters in ODE systems. In recent years, several neural-network-based parameter estimation methods for ODE systems have been proposed~\citep{rubanova2019latent,lu2021learning,qin2019data}. In these works, the authors use multiple (usually a large number) trajectories instead of a single trajectory to train the neural network model and aim to perform trajectory prediction. No identifiability is guaranteed.  

\subsection{Connection between Causality and Differential Equations}
\citet{aalen2012causality} suggested, differential equations allow for a natural interpretation of causality in dynamic systems. The authors in~\citep{mooij2013ordinary, rubenstein2018deterministic, bongers2018random}
built an explicit bridge between the differential equations and the causal models by establishing the relationship between ODEs/Random Differential Equations (RDEs) and structural causal models. \citet{hansen2014causal} and \citet{wang2024generator} proposed causal interpretations and identifiability analysis of Stochastic Differential Equations (SDEs). \citet{bellot2021neural} proposed a method to consistently discover the causal structure of SDE systems based on penalized neural ODEs~\citep{chen2018neural}. These works aim to build a theoretical connection between causality and differential equations in various ways. Our work contributes to this body of literature by proposing a method for inferring the causal structure of ODEs from a statistical perspective. To our knowledge, our approach represents the first application of hypothesis testing in ODE systems for the inference of causal structure.

\section{Conclusion}\label{conclusion} 
In this paper, we derived a sufficient condition for identifiability of homogeneous linear ODE systems from a sequence of equally-spaced error-free observations. Specifically, the observations lie on a single trajectory. Furthermore, we studied the consistency and asymptotic normality of the NLS estimator. The inference of unknown parameters based on the established theoretical results was also investigated. In particular, we proposed a new method to infer the causal structure of the ODE system. Finally, we extended the identifiability and asymptotic properties results to cases with aggregated and time-scaled observations.

A time series is most commonly collected at equally-spaced time points in practice, which motivates us to focus on the study over equally-spaced observations from a single trajectory in this work. However, as the authors pointed out in~\citep{voelkle2013continuous}, using irregularly-spaced observations can be advantageous in obtaining more information from the dynamical system. Therefore, extending the study to cases with irregularly-spaced observations from a single trajectory is a possible direction for future work.

\acks{YW was supported by the Australian Government Research Training Program (RTP) Scholarship from the University of Melbourne. XG was supported by ARC DE210101352. MG was supported by ARC DE210101624.  TL was partially supported by the following Australian Research Council projects: FT220100318, DP220102121, LP220100527,
LP220200949, and IC190100031. KZ was supported in part by the NSF-Convergence Accelerator Track-D award \#2134901, by the National Institutes of Health (NIH) under Contract R01HL159805, by grants from Apple Inc., KDDI Research, Quris AI, and IBT, and by generous gifts from Amazon, Microsoft Research, and Salesforce}

\newpage

\appendix
\section{Detailed proofs}
In this appendix, we present the detailed proofs of all our lemmas, theorems and corollaries.
\subsection{Proof of Lemma~\ref{theorem:identifiability for A and b}}\label{proof:theorem2.1}
\begin{proof}
For $A, A', \boldsymbol{x}_{0}, \boldsymbol{x}_{0}'$ defined in Definition \ref{def:identifiability for A and b}, if $\boldsymbol{x}_{0} \neq \boldsymbol{x}_{0}'$, one sees that $\boldsymbol{x}(\cdot;\boldsymbol{x}_{0}, A) \neq \boldsymbol{x}(\cdot; \boldsymbol{x}_{0}', A')$, because $\boldsymbol{x}(0;\boldsymbol{x}_{0}, A) \neq \boldsymbol{x}(0; \boldsymbol{x}_{0}', A')$. Therefore, one only needs to consider the case where $\boldsymbol{x}_{0} = \boldsymbol{x}_{0}'$ and $A \neq A'$. Under this circumstance, the Definition \ref{def:identifiability for A and b} and Lemma \ref{theorem:identifiability for A and b} are similar to \citep[Definition 2.3, Theorem 2.5]{stanhope2014identifiability}, with the only difference is that the identifiability of ODE system (\ref{eq:ODE model}) not only applies to a fixed initial condition $\boldsymbol{x}_0$ but an open set $M^0 \subset \mathbb{R}^d$. According to \citep[Proof of Theorem 2.5]{stanhope2014identifiability}, we can directly get the proof for Lemma \ref{theorem:identifiability for A and b}.
\end{proof}

\subsection{Proof of Theorem~\ref{theorem:identifiability from discrete observations}}\label{proof:theorem2.2}

\begin{proof}
Let $\Phi(t)$ denote the principal matrix solution of model (\ref{eq:ODE model}), that is, $\Phi(t) := e^{At}$. Let $\boldsymbol{X}_q$ denote the matrix $(\boldsymbol{x}_q, \boldsymbol{x}_{q+1},\ldots,\boldsymbol{x}_{q+d-1}) \in \mathbb{R}^{d \times d}$, for $q=1,2$. As defined in the statement of Theorem~\ref{theorem:identifiability from discrete observations}, we have $d+1$ equally-spaced observations $\boldsymbol{x}_j = \boldsymbol{x}(t_j;\boldsymbol{x}_0, A)$, with the $t_j$'s denoting equally-spaced time points, for $j=1,\ldots, d+1$, we denote the equal time space as $\Delta t := t_{j+1} - t_j$, for all $j=1,\ldots, d$.  
According to the solution function (\ref{eq:ODE solution}), one obtains
\begin{equation*}
\begin{split}
    \boldsymbol{x}_{j+1} &= \boldsymbol{x}(t_{j+1}; \boldsymbol{x}_0, A) = \boldsymbol{x}(t_{j} + \Delta t; \boldsymbol{x}_0, A) 
    = e^{A \Delta t}e^{At_j} \boldsymbol{x}_0 = e^{A \Delta t} \boldsymbol{x}(t_j;\boldsymbol{x}_0, A) 
    = \Phi({\Delta t}) \boldsymbol{x}_j\,,
\end{split}
\end{equation*}
for all $j=1,\ldots, d$. Therefore,  
\begin{equation*}
    \boldsymbol{X}_2 = \Phi({\Delta t})\boldsymbol{X}_1\,.
\end{equation*}
Then the proof can be broken down into three steps.
\vspace{2mm}

\noindent{\bf Step \romannumeral 1}: We show that $\boldsymbol{X}_1$ is invertible, thus $\Phi(\Delta t) = \boldsymbol{X}_2 \boldsymbol{X}_1^{-1}$.

\noindent Suppose that $\boldsymbol{X}_1 = (\boldsymbol{x}_1, \boldsymbol{x}_{2},\ldots,\boldsymbol{x}_{d}) \in \mathbb{R}^{d \times d}$ is singular, that is, ${\boldsymbol{x}_1, \boldsymbol{x}_{2},\ldots,\boldsymbol{x}_{d}}$ are linearly dependent, then there exists a non-zero vector $\boldsymbol{c} = [c_1, c_2,\ldots, c_d]^{\top} \neq \boldsymbol{0}_d$ satisfying $\boldsymbol{X}_1 \boldsymbol{c} = \boldsymbol{0}_d$, that is,
\begin{equation*}
    c_1 \boldsymbol{x}_1 + c_2 \boldsymbol{x}_2 + \cdots + c_d \boldsymbol{x}_d = \boldsymbol{0}_d\,,
\end{equation*}
where $\boldsymbol{0}_d$ denotes the $d$-dimensional zero vector.
 By plugging the solution of $\boldsymbol{x}_j$ in the above equation, one obtains that
\begin{equation*}
    c_1 e^{At_1}\boldsymbol{x}_0 + c_2 e^{A\Delta t}  e^{At_1}\boldsymbol{x}_0 + \cdots + c_d e^{A(d-1)\Delta t}e^{At_1}\boldsymbol{x}_0  = \boldsymbol{0}_d\,,
\end{equation*}
which is
\begin{equation}\label{eq:eq1}
    (c_1 I + c_2 e^{A\Delta t} + \cdots + c_d e^{A(d-1)\Delta t}) e^{At_1}\boldsymbol{x}_0 = \boldsymbol{0}_d\,.
\end{equation}
Under condition A2 stated in Theorem\ref{theorem:identifiability from discrete observations}, one gets the Jordan decomposition of parameter matrix $A$, denoting as $A=Q\Lambda Q^{-1}$, with 
$\Lambda = \text{diag}(\lambda_1, \lambda_2, \ldots, \lambda_d)$, a $d$-dimensional diagonal matrix, where $\lambda_1, \lambda_2,\ldots, \lambda_d$ are $d$ distinct real eigenvalues of the parameter matrix $A$. Then Equation~\eqref{eq:eq1} is equivalent to
\begin{equation*}
    Q(c_1 I + c_2 e^{\Lambda \Delta t} +\cdots+ c_d e^{\Lambda (d-1)\Delta t})e^{\Lambda t_1}Q^{-1}\boldsymbol{x}_0 = \boldsymbol{0}_d\,.
\end{equation*}
Since matrix $Q$ is invertible, by multiplying $Q^{-1}$ in both hand sides of the equation, one sees that
\begin{equation*}
    (c_1 I + c_2 e^{\Lambda \Delta t} +\cdots+ c_d e^{\Lambda (d-1)\Delta t})e^{\Lambda t_1}Q^{-1}\boldsymbol{x}_0 = \boldsymbol{0}_d\,,
\end{equation*}
which is
\begin{equation*}
    \begin{bmatrix}
    c_1 + c_2e^{\lambda_{1}\Delta t}+ \cdots + c_de^{\lambda_{1}(d-1)\Delta t} & & \\
    & \ddots & \\
    & & c_1 + c_2e^{\lambda_{d}\Delta t}+ \cdots + c_de^{\lambda_{d}(d-1)\Delta t}
    \end{bmatrix}e^{\Lambda t_1}Q^{-1}\boldsymbol{x}_0 = \boldsymbol{0}_d\,,
\end{equation*}
Let 
\begin{equation*}
    u_i :=  e^{\lambda_i t_1}(c_1 + c_2e^{\lambda_{i}\Delta t}+ \cdots + c_de^{\lambda_{i}(d-1)\Delta t})\,,
\end{equation*}
for all $i=1,\ldots, d$ and matrix $U := \text{diag}(u_1,\ldots, u_d)$, Equation \eqref{eq:eq1} is equivalent to
\begin{equation}
    UQ^{-1}\boldsymbol{x}_0 = \boldsymbol{0}_d\,.\label{eq:UQx0}
\end{equation}

We next show that if Equation \eqref{eq:UQx0} holds, then $U$ is a zero matrix. Let  
\begin{equation*}
   \tilde{\Lambda} = \begin{bmatrix}
    1 & \lambda_1 & \cdots & \lambda_1^{d-1}\\
    1 & \lambda_2 & \cdots & \lambda_2^{d-1}\\
    \vdots &\vdots & \ddots &\vdots\\
    1 & \lambda_d & \cdots & \lambda_d^{d-1}
    \end{bmatrix}\,. 
\end{equation*}
Since, by condition A2, $\tilde{\Lambda}$ is a square Vandermonde matrix with all the $\lambda_i$'s distinct, $\tilde{\Lambda}$ is invertible. Thus, for any $\boldsymbol{c}$, there always exists a unique vector
$\boldsymbol{l} = [l_1, l_2,\ldots, l_d]^{\top}\in \mathbb{R}^d$ such that
\begin{equation*}
    \boldsymbol{l} =\tilde{\Lambda}^{-1}\begin{bmatrix}
    u_1\\u_2\\ \vdots\\ u_d
    \end{bmatrix} \,.
\end{equation*}
That is
$l_1+ l_2 \lambda_i +\ldots + l_d \lambda^{d-1}_i=u_i$ for all $i=1,\ldots,d$.
Using the Jordan decomposition form of $A$, we then have \begin{align*}
l_1 \boldsymbol{x}_0 + l_2 A\boldsymbol{x}_0 + \cdots +  l_d A^{d-1}\boldsymbol{x}_0 =  Q(l_1 I + l_2 \Lambda + \cdots + l_d \Lambda^{d-1})Q^{-1}\boldsymbol{x}_0\
=Q U Q^{-1}\boldsymbol{x}_0\,.  
\end{align*}
Thus, if Equation \eqref{eq:UQx0} holds, then we have 
\begin{equation}\label{eq:eq2}
l_1 \boldsymbol{x}_0 + l_2 A\boldsymbol{x}_0 + \cdots +  l_d A^{d-1}\boldsymbol{x}_0 = \boldsymbol{0}_d \,.
\end{equation}

Under condition A1 stated in Theorem~\ref{theorem:identifiability from discrete observations}, if Equation \eqref{eq:eq2} holds,
one obtains $\boldsymbol{l} = \boldsymbol{0}_d$ since $\{\boldsymbol{x}_0, A\boldsymbol{x}_0,\ldots, A^{d-1}\boldsymbol{x}_0\}$ are linearly independent. Then $u_i=0$ for all $i=1,\ldots,d$ and $U$ is a zero matrix, which means
\begin{equation}\label{eq:system of equation2}
\begin{cases}
c_1 + c_2e^{\lambda_{1}\Delta t}+ \cdots + c_de^{\lambda_{1}(d-1)\Delta t} = 0 \,,\\ 
\cdots\\ 
c_1 + c_2e^{\lambda_{d}\Delta t}+ \cdots + c_de^{\lambda_{d}(d-1)\Delta t} = 0 \,.
\end{cases}
\end{equation}
Equation \eqref{eq:system of equation2} can be written in matrix form as:
\begin{equation*}
   \begin{bmatrix}
    1 & e^{\lambda_1 \Delta t} & e^{2\lambda_1 \Delta t} & \cdots & e^{(d-1)\lambda_1 \Delta t}\\
    1 & e^{\lambda_2 \Delta t} & e^{2\lambda_2 \Delta t} & \cdots & e^{(d-1)\lambda_2 \Delta t}\\
    \vdots & \vdots & \vdots & \ddots & \vdots \\
    1 & e^{\lambda_d \Delta t} & e^{2\lambda_d \Delta t} & \cdots & e^{(d-1)\lambda_d \Delta t}
    \end{bmatrix} \begin{bmatrix}
    c_1\\c_2 \\ \vdots\\ c_d
    \end{bmatrix} = \boldsymbol{0}_d\,,
\end{equation*}
where, by condition~A2, the first matrix on the left-hand side is again an invertible square Vandermonde matrix. 
This implies that $\boldsymbol{c}=[c_1, c_2,\ldots, c_d]^{\top} = \boldsymbol{0}_d$, 
which contradicts the assumption ($\boldsymbol{c}\neq \boldsymbol{0}_d$) made at the beginning of the proof. Therefore, one concludes that $\boldsymbol{X}_1$ is invertible, and thus
\begin{equation*}
    \Phi(\Delta t) = \boldsymbol{X}_2\boldsymbol{X}_1^{-1}\,.
\end{equation*}

\noindent{\bf Step \romannumeral 2}: We prove that under conditions A1 and A2, there always exists a unique logarithm of matrix $\Phi(\Delta t) (= e^{A \Delta t})$, thus, $A$ is identifiable from $\Phi(\Delta t)$. 

\noindent We first present a lemma we will use for our proof.
\begin{lemma}\label{theorem:matrix logrithm identifiability}
\textnormal{\citep[Theorem 6.3]{stanhope2014identifiability}} Let C be a real square matrix. Then there exists a unique real solution $Y$ to the equation $C = e^Y$ if and only if all the eigenvalues of $C$ are positive real and no Jordan block of $C$ belonging to any eigenvalue appears more than once.
\end{lemma}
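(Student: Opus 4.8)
The plan is to prove both implications by exploiting the commutative algebra $\mathbb{R}[C]$ generated by $C$, together with the elementary remark that $C=e^{Y}$ is a real polynomial in $Y$ (a primary matrix function of $Y$), so that any solution $Y$ of $e^{Y}=C$ commutes with $C$ and hence with every element of $\mathbb{R}[C]$.

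For the ``if'' direction, assume every eigenvalue of $C$ is positive real and $C$ is nonderogatory --- the hypothesis that no Jordan block belonging to any eigenvalue appears more than once says exactly that each eigenvalue has a single Jordan block. \emph{Existence}: on the real generalized eigenspace for an eigenvalue $\lambda>0$, $C$ acts as $\lambda I+N$ with $N$ nilpotent, and $Y=(\log\lambda)I+\sum_{k\ge 1}(-1)^{k-1}(N/\lambda)^{k}/k$ (a finite sum) satisfies $e^{Y}=\lambda(I+N/\lambda)=C$ there; assembling these blocks yields a real logarithm, which is in fact a real polynomial in $C$. \emph{Uniqueness}: since $C$ is nonderogatory its centralizer is exactly $\mathbb{R}[C]$, so any logarithm $Y$ lies in $\mathbb{R}[C]\cong\prod_{i}\mathbb{R}[x]/((x-\lambda_{i})^{m_{i}})$ by the Chinese remainder theorem; in the $i$-th local factor write $Y=a_{0}+n$ with $a_{0}\in\mathbb{R}$ and $n$ in the (nilpotent) maximal ideal, so $e^{Y}=e^{a_{0}}e^{n}$ has residue $e^{a_{0}}$, forcing $a_{0}=\log\lambda_{i}$; then $e^{n}$ is pinned down, and since $\exp$ is a bijection from the maximal ideal onto $1+(\text{maximal ideal})$ with polynomial inverse, $n$ and hence $Y$ are uniquely determined.

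For the ``only if'' direction I would argue the contrapositive through the failure modes. If $C$ is singular then $e^{Y}$ is always invertible, so no logarithm exists. If $C$ has a non-real eigenvalue $\mu=re^{i\varphi}$, then on a two-dimensional real invariant subspace any logarithm restricts to the real block representing $\log r+i(\varphi+2\pi k)$, and distinct $k\in\mathbb{Z}$ give distinct real logarithms; the same $2\pi$-shift argument applies to a negative real eigenvalue whose Jordan blocks pair off, whereas if such blocks occur an odd number of times a parity argument of Culver type shows that no real logarithm exists at all. Finally, if some eigenvalue of $C$ has two or more Jordan blocks, the centralizer of $C$ strictly contains $\mathbb{R}[C]$, and one can produce a nonzero real matrix $E$ commuting with a chosen logarithm $Y$ with $e^{E}=I$ (a $2\pi$-rotation mixing two equal pieces of Jordan blocks sharing an eigenvalue), so $e^{Y+E}=e^{Y}e^{E}=C$ gives a second real logarithm. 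In each case ``exists'' or ``unique'' fails, completing the contrapositive.

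The step I expect to be the main obstacle is the negative-real-eigenvalue analysis in the ``only if'' direction: one must track how, in the real (rather than complex) Jordan form, blocks for a negative eigenvalue either pair up --- so that real logarithms exist but form an infinite family --- or fail to pair up --- so that none exist. This parity bookkeeping is exactly Culver's theorem on real matrix logarithms and is the technical heart of the argument; by comparison the positive-eigenvalue existence and uniqueness, and the derogatory non-uniqueness, are routine once the centralizer of $C$ has been identified with $\mathbb{R}[C]$ and decomposed into local rings.
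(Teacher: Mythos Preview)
The paper does not prove this lemma at all: it is quoted from Stanhope et al.\ (2014, Theorem~6.3) and invoked as a black box in Step~ii of the proof of Theorem~\ref{theorem:identifiability from discrete observations}. There is therefore no ``paper's own proof'' to compare against; your proposal supplies what the paper simply cites.

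That said, your outline is essentially a correct sketch of Culver's classical characterization of real matrix logarithms. The ``if'' direction is clean: nonderogatory means the centralizer of $C$ is exactly $\mathbb{R}[C]$, so any real $Y$ with $e^{Y}=C$ lies in $\mathbb{R}[C]$; the Chinese remainder decomposition into local rings $\mathbb{R}[x]/((x-\lambda_i)^{m_i})$ then pins down each local component of $Y$, since $\exp$ bijects the maximal ideal onto $1+(\text{maximal ideal})$ and $a_0\mapsto e^{a_0}$ is injective on $\mathbb{R}$. The contrapositive cases in the ``only if'' direction are also correctly identified, and you rightly flag the negative-eigenvalue parity analysis as the delicate part. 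One small caution in the derogatory non-uniqueness step: for $e^{Y+E}=e^{Y}e^{E}$ you need $E$ to commute with the chosen logarithm $Y$, not merely with $C$; the standard fix is to take $Y$ to be the \emph{principal} logarithm, which is a polynomial in $C$, so that commuting with $C$ automatically gives commuting with $Y$.
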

 
The Jordan decomposition of $e^{A \Delta t}$ is
\begin{equation*}
    e^{A \Delta t} = Q e^{\Lambda \Delta t} Q^{-1} = Q 
    \begin{bmatrix}
        e^{\lambda_{1} \Delta t} & & \\
        & \ddots & \\
        & & e^{\lambda_{d} \Delta t}
    \end{bmatrix}Q^{-1}\,.
\end{equation*}
Under condition A2, $\lambda_1, \lambda_2,\ldots, \lambda_d$ are $d$ distinct real values, therefore, $e^{A \Delta t}$ has $d$ distinct positive real eigenvalues $e^{\lambda_j \Delta t}$, for all $j=1, 2,\ldots,d$.  Then by Lemma \ref{theorem:matrix logrithm identifiability}, one obtains $A \Delta t$ by taking logarithm of $e^{A \Delta_t}$, thus one obtains $A$.

\noindent{\bf Step \romannumeral 3}: We show the initial condition $\boldsymbol{x}_0$ is identifiable. \\
One sees that
\begin{equation*}
\begin{split}
    \text{det}(e^{A t}) &= \text{det}(Qe^{\Lambda t}Q^{-1}) = \text{det}(Q^{-1})\text{det}(e^{\Lambda t})\text{det}(Q) \\
    &=\text{det}(Q^{-1}Q)\text{det}(e^{\Lambda t}) = \text{det}(e^{\Lambda t}) = e^{\sum_{i=1}^d \lambda_i t} \neq 0\,,
\end{split}
\end{equation*}
for any $t>0$. Therefore, $e^{At}$ is nonsingular for any $t >0$. Since $\boldsymbol{x}_1 = e^{A t_1}\boldsymbol{x}_0$, one obtains that $\boldsymbol{x}_0 = e^{-At_1} \boldsymbol{x}_1$.

Therefore, we have proved that both $A$ and $\boldsymbol{x}_0$ can be explicitly calculated by using any $d+1$ equally-spaced error-free observations, which means the ODE system~\eqref{eq:ODE model} is identifiable from these observations.
\end{proof}

\subsection{Proof of Corollary~\ref{corollary:aggregate}}\label{proof:corollary2.2.1}
\begin{proof}
By definition of the aggregated observations in Definition~\ref{def:aggregated}, one sees that
\begin{equation}\label{eq:aggregated x_j}
\begin{split}
    \tilde{\boldsymbol{x}}_j 
    &= (\boldsymbol{x}_{(j-1)k+1} + \boldsymbol{x}_{(j-1)k+2} + \cdots + \boldsymbol{x}_{jk})/k\\ 
    &= ( e^{At_{(j-1)k+1}}\boldsymbol{x}_0 + e^{At_{(j-1)k+2}}\boldsymbol{x}_0 + \cdots + e^{At_{jk}}\boldsymbol{x}_0)/k \\ 
    &= e^{At_{(j-1)k+1}}(I+e^{A\Delta_t} + \cdots + e^{A(k-1)\Delta_t})\boldsymbol{x}_0/k \,,
\end{split}
\end{equation}
where $\Delta t = t_{j+1} - t_j$, for all $j=1,\ldots, n-1$, denotes the equal time space of the original observations.

If we set the time of the aggregated observation $\tilde{\boldsymbol{x}}_j$ as the time of the first original observation (that is, $\boldsymbol{x}_{(j-1)k+1}$) of these $k$ consecutive, non-overlapping observations.
Then according to Equation~\eqref{eq:aggregated x_j}, one sees that the solution function of the aggregated observations follows the structure of
\begin{equation}\label{eq:aggregated ODE}
    \tilde{\boldsymbol{x}}(t; \tilde{\boldsymbol{x}}_0, \tilde{A}) = e^{\tilde{A}t}\tilde{\boldsymbol{x}}_0\,,
\end{equation}
with 
\begin{equation*}
    \tilde{\boldsymbol{x}}_0 = (I+e^{A\Delta_t} + \cdots + e^{A(k-1)\Delta_t})\boldsymbol{x}_0/k\, \mbox{ and } \tilde{A} = A\,.
\end{equation*}
This implies that the aggregated observations follow a new ODE system as \eqref{eq:ODE model} but with a different system parameter $( \tilde{\boldsymbol{x}}_0, \tilde{A})$. To see whether $(\tilde{\boldsymbol{x}}_0, \tilde{A})$ are the true parameters corresponding to the new ODE system, one has to prove the identifiability of the new ODE system from the aggregated observations.

According to Theorem~\ref{theorem:identifiability from discrete observations}, one needs to prove $\{\tilde{\boldsymbol{x}}_0, \tilde{A}\tilde{\boldsymbol{x}}_0,\ldots, \tilde{A}^{d-1}\tilde{\boldsymbol{x}}_0\}$ are linearly independent first, that is,  proving $\{\tilde{\boldsymbol{x}}_0, A\tilde{\boldsymbol{x}}_0,\ldots, A^{d-1}\tilde{\boldsymbol{x}}_0\}$ are linearly independent. 
Let $\boldsymbol{l} =[l_1, l_2,\ldots, l_d]^{\top}$ be such that
\begin{equation}\label{eq:aggregated linearly independent}
    l_1 \tilde{\boldsymbol{x}}_0 + l_2 A\tilde{\boldsymbol{x}}_0 + \cdots + l_d A^{d-1}\tilde{\boldsymbol{x}}_0 = \boldsymbol{0}_d\,,
\end{equation}
we want to show that $\boldsymbol{l} =\boldsymbol{0}_d$. 
If one sets 
\begin{equation*}
    B := (I+e^{A\Delta_t} + \cdots + e^{A(k-1)\Delta_t})/k\,,
\end{equation*}
then one sees
$\tilde{\boldsymbol{x}}_0 = B\boldsymbol{x}_0$.
Taking Jordan decomposition of $A$ in $B$, one obtains that
\begin{equation*}
    B = Q(I+e^{\Lambda\Delta_t} + \cdots + e^{\Lambda(k-1)\Delta_t})Q^{-1}/k\,.
\end{equation*}
If one sets
$ s_j = 1+e^{\lambda_j \Delta t} + \cdots +e^{\lambda_j (k-1)\Delta t}$,
for all $j=1,\ldots, d$, and denotes $S$ as the diagonal matrix with the $j$th element being $s_j$, one obtains that
\begin{equation}\label{eq:B}
    B = QSQ^{-1}/k\,.
\end{equation}
Substituting $\tilde{\boldsymbol{x}}_0$ in Equation~\eqref{eq:aggregated linearly independent} with $B\boldsymbol{x}_0$, one obtains that
\begin{equation*}
    (l_1I + l_2 A + \cdots + l_dA^{d-1})B\boldsymbol{x}_0 = \boldsymbol{0}_d\,.
\end{equation*}
By further taking Jordan decomposition of $A$, one obtains that
\begin{equation*}
    Q(l_1I + l_2 \Lambda + \cdots + l_d\Lambda^{d-1})Q^{-1}B\boldsymbol{x}_0 = \boldsymbol{0}_d\,.
\end{equation*}
By plugging $B$ expressed in Equation~\eqref{eq:B} into the previous equation, and multiply $kQ^{-1}$ in both-hand sides of the equation, one obtains that
\begin{equation*}
    (l_1I + l_2 \Lambda + \cdots + l_d\Lambda^{d-1})SQ^{-1}\boldsymbol{x}_0 = \boldsymbol{0}_d\,,
\end{equation*}
that is,
\begin{equation}
    \begin{bmatrix}
    s_1(l_1 + l_2\lambda_{1}+ \cdots + l_d\lambda_{1}^{(d-1)}) & & \\
    & \ddots & \\
    & & s_d(l_1 + l_2\lambda_{d}+ \cdots + l_d\lambda_{d}^{(d-1)})
    \end{bmatrix}Q^{-1}\boldsymbol{x}_0 = \boldsymbol{0}_d.
\end{equation}
According to the proof process of Theorem~\ref{theorem:identifiability from discrete observations} in Appendix~\ref{proof:theorem2.2}, one obtains that 
\begin{equation}\label{eq:ssystem}
\begin{cases}
    s_1(l_1 + l_2 \lambda_1 + \cdots + l_d \lambda_1 ^{d-1}) = 0 \,,\\
    \cdots\\
    s_d(l_1 + l_2 \lambda_d + \cdots + l_d \lambda_d ^{d-1}) = 0\,.
\end{cases}  
\end{equation}
Since $s_j > 0$ for all $j=1,\ldots, d$, the system of equations~\eqref{eq:ssystem} is equivalent to 
\begin{equation}\label{eq:system}
\begin{cases}
    l_1 + l_2 \lambda_1 + \cdots + l_d \lambda_1 ^{d-1} = 0\,, \\ 
    \cdots\\
    l_1 + l_2 \lambda_d + \cdots + l_d \lambda_d ^{d-1} = 0\,.
\end{cases}  
\end{equation}
Now Equation~\eqref{eq:system} can be written in matrix form as:
\begin{equation*}
    \begin{bmatrix}
    1 & \lambda_1 & \lambda_1^2  & \cdots & \lambda_1^{d-1}\\
    1 & \lambda_2  & \lambda_2^2 & \cdots & \lambda_2^{d-1}\\
    \vdots & \vdots & \vdots & \ddots & \vdots \\
    1 & \lambda_d & \lambda_d^2 & \cdots & \lambda_d^{d-1}
    \end{bmatrix}
    \begin{bmatrix}
    l_1\\
    l_2\\
    \vdots \\
    l_d
    \end{bmatrix}=\boldsymbol{0}_d    \,,
\end{equation*}
where, by condition A2, the first matrix on the left-hand side is an invertible square Vandermonde matrix. This implies that $\boldsymbol{l}= \boldsymbol{0}_d$. Therefore, $\{\tilde{\boldsymbol{x}}_0, A\tilde{\boldsymbol{x}}_0,\ldots, A^{d-1}\tilde{\boldsymbol{x}}_0\}$ are linearly independent. 

Obviously, the aggregated observations are equally-spaced, since
\begin{equation}
    \tilde{t}_{j+1}-\tilde{t}_{j} = t_{1+jk} - t_{1+(j-1)k} = k\Delta t \,,
\end{equation}
for all $j=1,\ldots, \tilde{n}-1$. 
Under condition A2, $\tilde{A} (\text{that is, } A)$ has $d$ distinct real eigenvalues, and under condition A3, $\tilde{n} > d$. Therefore, by Theorem~\ref{theorem:identifiability from discrete observations}, one concludes that the new ODE system is identifiable from aggregated observations with $\tilde{\boldsymbol{x}}_0 = (I+e^{A\Delta_t} + \cdots + e^{A(k-1)\Delta_t})\boldsymbol{x}_0/k$ and $\tilde{A} = A$. 
Since 
\begin{equation*}
    I+e^{A\Delta_t} + \cdots + e^{A(k-1)\Delta_t}
    = Q(I+e^{\Lambda\Delta_t} + \cdots + e^{\Lambda(k-1)\Delta_t})Q^{-1}
\end{equation*}
is invertible, one obtains that
\begin{equation*}
    \boldsymbol{x}_0 = k(I+e^{A\Delta_t} + \cdots + e^{A(k-1)\Delta_t})^{-1}\tilde{\boldsymbol{x}}_0\,,
\end{equation*}
and obviously,
\begin{equation*}
    A = \tilde{A}\,.
\end{equation*}
Therefore, the original ODE system with initial condition $\boldsymbol{x}_0$ and parameter matrix $A$ is fully identifiable from the aggregated observations.
\end{proof} 

\subsection{Proof of Corollary~\ref{corollary:timescaled}}
\begin{proof}\label{proof:corollary2.2.2}
By Definition of the time-scaled observations in Definition~\ref{def:time-scaled}, one sees that
\begin{equation}\label{eq:time-scaled}
\begin{split}
    \tilde{\boldsymbol{x}}_j 
    = \boldsymbol{x}_j
    = \boldsymbol{x}(t_j; \boldsymbol{x}_0, A)
    = e^{At_j}\boldsymbol{x}_0 =  e^{(A/k)\cdot kt_j}\boldsymbol{x}_0
\,.
\end{split}
\end{equation}
By definition the time of the time-scaled observation $\tilde{\boldsymbol{x}}_j$ is $kt_j$. Therefore, according to Equation~\ref{eq:time-scaled}, one sees that the solution function of the time-scaled observations follows the structure of
\begin{equation*}
    \tilde{\boldsymbol{x}}(t; \tilde{\boldsymbol{x}}_0, \tilde{A}) = e^{\tilde{A}t}\tilde{\boldsymbol{x}}_0\,,
\end{equation*}
with $\tilde{\boldsymbol{x}}_0 = \boldsymbol{x}_0\,, \mbox{ and } \tilde{A} = A/k$.
This implies that the time-scaled observations follow a new ODE system as \eqref{eq:ODE model} but with a different system parameter $( \tilde{\boldsymbol{x}}_0, \tilde{A})$. To see whether $( \tilde{\boldsymbol{x}}_0, \tilde{A})$ are the true parameters corresponding to the new ODE system, one has to prove the identifiability of the new ODE system from the time-scaled observations.

Obviously, $\{\tilde{\boldsymbol{x}}_0, \tilde{A} \tilde{\boldsymbol{x}}_0,\ldots, \tilde{A}^{d-1}\tilde{\boldsymbol{x}}_0\} 
= \{ \boldsymbol{x}_0, A\boldsymbol{x}_0/k,\ldots, A^{d-1}\boldsymbol{x}_0/k^{d-1} \}$ are linearly independent, since $\{\boldsymbol{x}_0, A \boldsymbol{x}_0,\ldots, A^{d-1}\boldsymbol{x}_0\}$ are linearly independent under condition A1. 
Moreover, $\tilde{A}$ has $d$ distinct real eigenvalues since $A$ has $d$ distinct real eigenvalues under condition A2.  

Based on the generation rules of the time-scaled observations, one sees that the new time-scaled observations are also equally-spaced. And under condition A3, $\tilde{n} = n > d$. Then by Theorem \ref{theorem:identifiability from discrete observations}, one concludes that the new ODE system is identifiable from the time-scaled observations, with $\tilde{\boldsymbol{x}}_0 = \boldsymbol{x}_0$ and $\tilde{A} = A/k$.

Since $k\neq 0$, simple calculation shows that
\begin{equation*}
    \boldsymbol{x}_0 = \tilde{\boldsymbol{x}}_0, \, A = k\tilde{A}\,.
\end{equation*}
Therefore, the original ODE system with initial condition $\boldsymbol{x}_0$ and parameter matrix $A$ is fully identifiable from the time-scaled observations.
\end{proof}

\subsection{Proof of Theorem~\ref{theorem:consistency}}\label{proof:theorem3.1}

We first present two Lemmas we will use for our proof.

\begin{lemma}\label{lemma:uniform convergence}
\textnormal{\citep[Lemma 2.9]{newey1994large}} Suppose $M_1$, $M_n$: $\mathbb{R}^{d_p}\rightarrow \mathbb{R}^{d_p'}$ for some finite integer $d_p, d_p' \geq 1$. If $\Theta$ is compact, $M_1(\boldsymbol{\theta})$ is continuous, 
\begin{equation}\label{eq:uniform_convergence_condition1}
    M_n(\boldsymbol{\theta}) \xrightarrow{p}  M_1(\boldsymbol{\theta})\,,\mbox{ as } n\rightarrow \infty
\end{equation}
for all $\boldsymbol{\theta} \in \Theta$, and there exist $\alpha >0$ and $B_n = O_p(1)$ such that for all $\boldsymbol{\theta}_1, \boldsymbol{\theta}_2 \in \Theta$,
\begin{equation}\label{eq:uniform_convergence_condition2}
    \parallel M_n(\boldsymbol{\theta}_1) - M_n(\boldsymbol{\theta}_2)\parallel_2 \,\leq B_n\parallel\boldsymbol{\theta}_1 -\boldsymbol{\theta}_2 \parallel_2^{\alpha}
\end{equation}
almost surely, then 
\begin{equation*}
    \sup_{\boldsymbol{\theta}\in \Theta}\parallel M_n(\boldsymbol{\theta}) -  M_1(\boldsymbol{\theta}) \parallel_2\, \xrightarrow{p} 0\,, \mbox{ as }n\rightarrow \infty \,.
\end{equation*}
\end{lemma}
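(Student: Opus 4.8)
The plan is to prove the uniform convergence by a finite-covering argument that upgrades the pointwise convergence \eqref{eq:uniform_convergence_condition1} to uniform convergence, using compactness of $\Theta$ together with the stochastic Lipschitz bound \eqref{eq:uniform_convergence_condition2}. Fix $\epsilon>0$ and $\eta>0$. The first step is to exploit that $M_1$, being continuous on the compact set $\Theta$, is in fact uniformly continuous (Heine--Cantor); hence there is a $\delta_1>0$ such that $\|\boldsymbol{\theta}-\boldsymbol{\theta}'\|_2<\delta_1$ implies $\|M_1(\boldsymbol{\theta})-M_1(\boldsymbol{\theta}')\|_2<\epsilon/3$. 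Simultaneously, since $B_n=O_p(1)$, I would select a finite constant $\bar{B}$ with $\limsup_n P(B_n>\bar{B})<\eta$, and then pick a deterministic radius $\delta\leq\delta_1$ small enough that $\bar{B}\delta^\alpha<\epsilon/3$.

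Second, I would use compactness of $\Theta$ to extract a finite subcover: there exist centers $\boldsymbol{\theta}_1,\ldots,\boldsymbol{\theta}_J\in\Theta$, with $J$ not depending on $n$, such that $\Theta\subset\bigcup_{j=1}^J B(\boldsymbol{\theta}_j,\delta)$. For an arbitrary $\boldsymbol{\theta}\in\Theta$ lying in the ball centered at $\boldsymbol{\theta}_{j(\boldsymbol{\theta})}$, the triangle inequality gives
\[
\|M_n(\boldsymbol{\theta})-M_1(\boldsymbol{\theta})\|_2\leq\|M_n(\boldsymbol{\theta})-M_n(\boldsymbol{\theta}_{j(\boldsymbol{\theta})})\|_2+\|M_n(\boldsymbol{\theta}_{j(\boldsymbol{\theta})})-M_1(\boldsymbol{\theta}_{j(\boldsymbol{\theta})})\|_2+\|M_1(\boldsymbol{\theta}_{j(\boldsymbol{\theta})})-M_1(\boldsymbol{\theta})\|_2.
\]
The first term is bounded almost surely by $B_n\delta^\alpha$ via \eqref{eq:uniform_convergence_condition2}, the third by $\epsilon/3$ via uniform continuity, and the middle term is at most $\max_{1\leq j\leq J}\|M_n(\boldsymbol{\theta}_j)-M_1(\boldsymbol{\theta}_j)\|_2$. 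Taking the supremum over $\boldsymbol{\theta}\in\Theta$ (the right side being independent of $\boldsymbol{\theta}$) yields
\[
\sup_{\boldsymbol{\theta}\in\Theta}\|M_n(\boldsymbol{\theta})-M_1(\boldsymbol{\theta})\|_2\leq B_n\delta^\alpha+\max_{1\leq j\leq J}\|M_n(\boldsymbol{\theta}_j)-M_1(\boldsymbol{\theta}_j)\|_2+\frac{\epsilon}{3}.
\]

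Third, I would control the two random terms on the right. On the event $\{B_n\leq\bar{B}\}$, whose complement has probability below $\eta$ for all large $n$, the first term is at most $\bar{B}\delta^\alpha<\epsilon/3$. Because the cover is finite and \eqref{eq:uniform_convergence_condition1} gives $M_n(\boldsymbol{\theta}_j)\xrightarrow{p}M_1(\boldsymbol{\theta}_j)$ at each of the $J$ centers, the maximum over $j$ is a finite maximum of $o_p(1)$ terms and hence also converges to zero in probability, so $P(\max_j\|M_n(\boldsymbol{\theta}_j)-M_1(\boldsymbol{\theta}_j)\|_2>\epsilon/3)<\eta$ for all large $n$. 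A union bound over these two events then shows $P(\sup_{\boldsymbol{\theta}\in\Theta}\|M_n(\boldsymbol{\theta})-M_1(\boldsymbol{\theta})\|_2>\epsilon)<2\eta$ eventually; since $\epsilon$ and $\eta$ are arbitrary, the stated convergence follows.

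I expect the main delicacy to be the bookkeeping between the almost-sure Lipschitz bound and the in-probability statements: specifically, converting $B_n=O_p(1)$ into a usable high-probability ceiling $\bar{B}$, and ordering the choices so that $\delta$ (and thus the finite number $J$ of balls) is fixed deterministically \emph{before} $n$ grows, while $\bar{B}\delta^\alpha$ stays below $\epsilon/3$. The fact that the Lipschitz constant $B_n$ is random but the cover and $\delta$ are deterministic is exactly what makes the argument close; the only genuinely probabilistic inputs are the finite collection of pointwise convergences and the tightness of $B_n$.
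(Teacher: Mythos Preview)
Your argument is correct and is precisely the standard finite-cover/stochastic-equicontinuity proof. Note, however, that the paper does not supply its own proof of this lemma: it is quoted verbatim as \citet[Lemma 2.9]{newey1994large} and used as a black box in the proof of Theorem~\ref{theorem:consistency}. Your write-up therefore matches the proof in the cited reference rather than anything in the paper itself.
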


\begin{lemma}\label{lemma:M_estimator consistency}
\textnormal{\citep[Theorem 5.7]{van2000asymptotic}} Let $M_n$ be random functions and let $M_1$ be a fixed function of $\boldsymbol{\theta}$ such that for every $\epsilon > 0$
\begin{equation}\label{eq:consistency_condition1}
    \sup_{\boldsymbol{\theta} \in \Theta}\parallel M_n(\boldsymbol{\theta}) - M_1(\boldsymbol{\theta})\parallel_2 \xrightarrow{p} 0 \,,
\end{equation}
\begin{equation}\label{eq:consistency_condition2}
    \sup_{\boldsymbol{\theta}: d(\boldsymbol{\theta}, \boldsymbol{\theta}^*) \geq \varepsilon} M_1(\boldsymbol{\theta}) > M_1(\boldsymbol{\theta}^*)\,.
\end{equation}
Then any sequence of estimators $\hat{\boldsymbol{\theta}}_n$ with 
\begin{equation}\label{eq:consistency_condition3}
    M_n(\hat{\boldsymbol{\theta}}_n) \leq M_n(\boldsymbol{\theta}^*) + o_p(1)
\end{equation}
converges in probability to $\boldsymbol{\theta}^*$. $d(\boldsymbol{\theta}, \boldsymbol{\theta}^*)$ denotes the distance between $\boldsymbol{\theta}$ and $\boldsymbol{\theta}^*$.
\end{lemma}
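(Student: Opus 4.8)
The plan is to prove this as the classical argmin (M-estimator) consistency theorem, whose engine is that the uniform convergence \eqref{eq:consistency_condition1} transfers the near-minimizing behaviour of $\hat{\boldsymbol{\theta}}_n$ under the random criterion $M_n$ to the deterministic limit criterion $M_1$, after which the well-separation condition \eqref{eq:consistency_condition2} pins $\hat{\boldsymbol{\theta}}_n$ down in a shrinking neighbourhood of $\boldsymbol{\theta}^*$. Throughout, $M_n$ and $M_1$ are treated as real-valued, so that the inequalities in \eqref{eq:consistency_condition2} and \eqref{eq:consistency_condition3} are meaningful and $\parallel\cdot\parallel_2$ in \eqref{eq:consistency_condition1} reduces to absolute value.

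First I would establish the sandwich bound $M_1(\hat{\boldsymbol{\theta}}_n) - M_1(\boldsymbol{\theta}^*) \leq o_p(1)$. Starting from the identity
\begin{equation*}
M_1(\hat{\boldsymbol{\theta}}_n) - M_1(\boldsymbol{\theta}^*) = \big[M_1(\hat{\boldsymbol{\theta}}_n) - M_n(\hat{\boldsymbol{\theta}}_n)\big] + \big[M_n(\hat{\boldsymbol{\theta}}_n) - M_n(\boldsymbol{\theta}^*)\big] + \big[M_n(\boldsymbol{\theta}^*) - M_1(\boldsymbol{\theta}^*)\big],
\end{equation*}
I would bound the first and third bracketed terms in absolute value by $\sup_{\boldsymbol{\theta}\in\Theta}\parallel M_n(\boldsymbol{\theta}) - M_1(\boldsymbol{\theta})\parallel_2$, which is $o_p(1)$ by \eqref{eq:consistency_condition1}, while the middle term is at most $o_p(1)$ by the near-minimizer property \eqref{eq:consistency_condition3}. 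Summing the three contributions yields the claimed bound. Because $\hat{\boldsymbol{\theta}}_n$ is random, these estimates are first stated as almost-sure inequalities between random variables and only then combined, so that the supremum and $o_p(1)$ terms can be collected without measurability issues.

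Next I would invoke the well-separation condition to close the argument. Fixing $\varepsilon > 0$, condition \eqref{eq:consistency_condition2} furnishes a strictly positive margin $\eta := \inf_{\boldsymbol{\theta}: d(\boldsymbol{\theta},\boldsymbol{\theta}^*) \geq \varepsilon} M_1(\boldsymbol{\theta}) - M_1(\boldsymbol{\theta}^*) > 0$, so that on the event $\{d(\hat{\boldsymbol{\theta}}_n, \boldsymbol{\theta}^*) \geq \varepsilon\}$ one necessarily has $M_1(\hat{\boldsymbol{\theta}}_n) - M_1(\boldsymbol{\theta}^*) \geq \eta$. This gives the inclusion of events and hence
\begin{equation*}
P\big(d(\hat{\boldsymbol{\theta}}_n, \boldsymbol{\theta}^*) \geq \varepsilon\big) \leq P\big(M_1(\hat{\boldsymbol{\theta}}_n) - M_1(\boldsymbol{\theta}^*) \geq \eta\big),
\end{equation*}
and the right-hand side tends to $0$ by the sandwich bound of the previous step. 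As $\varepsilon>0$ is arbitrary, this is precisely $\hat{\boldsymbol{\theta}}_n \xrightarrow{p} \boldsymbol{\theta}^*$.

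The main obstacle I anticipate is the well-separation step: one must upgrade the pointwise separation in \eqref{eq:consistency_condition2} into a single uniform positive margin $\eta$ valid across the entire far region $\{d(\boldsymbol{\theta},\boldsymbol{\theta}^*) \geq \varepsilon\}$. This is exactly where the argument needs the \emph{infimum} over that region to remain strictly above $M_1(\boldsymbol{\theta}^*)$ (the minimization analogue of the maximizer condition in \citet[Theorem 5.7]{van2000asymptotic}); with that margin secured, the remainder is elementary bookkeeping with $o_p(1)$ terms and the triangle-type bound of the first step.
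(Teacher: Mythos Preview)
Your proof is correct and is exactly the standard argument for this classical M-estimator consistency theorem. Note, however, that the paper does not actually prove this lemma: it is stated with a citation to \citet[Theorem 5.7]{van2000asymptotic} and then used as a black box in the proof of Theorem~\ref{theorem:consistency}. Your sandwich decomposition followed by the well-separation margin is precisely van der Vaart's proof (adapted to minimization rather than maximization), so there is nothing to compare against in the paper itself.
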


\begin{proof}
Recall that we have defined $M(\boldsymbol{\theta})$ and $M_n(\boldsymbol{\theta})$ in Equations~\eqref{eq:M(theta)} and~\eqref{eq:Mn(theta)}, respectively. 
Here, we set
\begin{equation}\label{eq:M1(theta)}
\begin{split}
    M_1(\boldsymbol{\theta}) := &M(\boldsymbol{\theta}) + E[\parallel \boldsymbol{\epsilon}\parallel_2^2]
    = \cfrac{1}{T}\int_0^T \parallel e^{A^*t}\boldsymbol{x}_0^*-e^{At}\boldsymbol{x}_0\parallel_2^2 dt
    +E[\parallel \boldsymbol{\epsilon} \parallel_2^2]\,.
\end{split}
\end{equation}
In order to prove the NLS estimator $\hat{\boldsymbol{\theta}}_n \xrightarrow{p} \boldsymbol{\theta}^*$, as $n\rightarrow \infty$, by Lemma~\ref{lemma:M_estimator consistency}, we need to prove that
all three conditions~\eqref{eq:consistency_condition1},~\eqref{eq:consistency_condition2} and~\eqref{eq:consistency_condition3} are satisfied w.r.t $M_n(\boldsymbol{\theta})$ and $M_1(\boldsymbol{\theta})$. Therefore, the proof can be broken down into three steps based on the proofs of each of these three conditions. In the following, we will show that all these three conditions are satisfied.

\noindent{\bf Step \romannumeral 1}: We prove that condition~\eqref{eq:consistency_condition1} in Lemma~\ref{lemma:M_estimator consistency} is satisfied based on Lemma~\ref{lemma:uniform convergence}.

\noindent According to Lemma~\ref{lemma:uniform convergence}, to prove the uniform convergence of $M_n(\boldsymbol{\theta})$ to $M_1(\boldsymbol{\theta})$ in parameter space $\Theta$, that is, condition~\eqref{eq:consistency_condition1}, we first need to prove the point-wise convergence of $M_n(\boldsymbol{\theta})$ to $M_1(\boldsymbol{\theta})$, that is, condition~\eqref{eq:uniform_convergence_condition1}.

According to Equation~\eqref{eq:Mn(theta)}, one sees that
\begin{equation}\label{eq:Mn(theta)_decomposition}
\begin{split}
    M_n(\boldsymbol{\theta}) &= \cfrac{1}{n} \sum_{i=1}^n \parallel \boldsymbol{y}_i - e^{At_i}\boldsymbol{x}_0 \parallel_2^2\\
    &= \cfrac{1}{n} \sum_{i=1}^n \parallel e^{A^*t_i}\boldsymbol{x}_0^*-e^{At_i}\boldsymbol{x}_0\parallel_2^2 + \cfrac{1}{n}\sum_{i=1}^n \parallel \boldsymbol{\epsilon}_i \parallel_2^2 + \cfrac{2}{n}\sum_{i=1}^n \boldsymbol{\epsilon}_i^{\top}(e^{A^*t_i}\boldsymbol{x}_0^*-e^{At_i}\boldsymbol{x}_0) \, .
\end{split}
\end{equation}
Let $n$ tend to infinity, one obtains that 
\begin{equation}\label{eq:Mn(theta)_item1}
\begin{split}
    &\lim_{n\rightarrow \infty} \cfrac{1}{n} \sum_{i=1}^n 
    \parallel e^{A^*t_i}\boldsymbol{x}_0^*-e^{At_i}\boldsymbol{x}_0\parallel_2^2\\
    =& \lim_{n\rightarrow \infty} \cfrac{n-1}{nT} \sum_{i=1}^n \cfrac{T}{n-1}
    \parallel e^{A^*t_i}\boldsymbol{x}_0^*-e^{At_i}\boldsymbol{x}_0\parallel_2^2
    \,=\,\cfrac{1}{T}\int_0^T 
    \parallel e^{A^*t}\boldsymbol{x}_0^*-e^{At}\boldsymbol{x}_0\parallel_2^2\,dt \, .
\end{split}
\end{equation}
By weak law of large numbers, one sees that 
\begin{equation}\label{eq:Mn(theta)_item2}
   \cfrac{1}{n}\sum_{i=1}^n \parallel \boldsymbol{\epsilon}_i \parallel_2^2 \hspace{0.1cm} \xrightarrow {p} E[\parallel\boldsymbol{\epsilon}\parallel_2^2]\,,\mbox{ as } n\rightarrow \infty \,,
\end{equation}
where we set 
$E[\parallel \boldsymbol{\epsilon}\parallel_2^2] :=E[\parallel \boldsymbol{\epsilon}_i\parallel_2^2] = \sum_{j=1}^d\sigma_j^2$, for all $i=1,\ldots, n$.
Since
\begin{equation*}
    E\bigg[\cfrac{2}{n}\sum_{i=1}^n \boldsymbol{\epsilon}_i^{\top}(e^{A^*t_i}\boldsymbol{x}_0^*-e^{At_i}\boldsymbol{x}_0)\bigg] = 0\,,
\end{equation*}
then by Chebyshev's inequality, for any $\varepsilon>0$, one has
\begin{equation}\label{eq:the third item}
    P\bigg(\bigg| \cfrac{2}{n}\sum_{i=1}^n \boldsymbol{\epsilon}_i^{\top}(e^{A^*t_i}\boldsymbol{x}_0^*-e^{At_i}\boldsymbol{x}_0) -0\bigg|\geq \varepsilon\bigg)
    \leq var\bigg(\frac{2}{n}\sum_{i=1}^n \boldsymbol{\epsilon}_i^{\top}(e^{A^*t_i}\boldsymbol{x}_0^*-e^{At_i}\boldsymbol{x}_0)\bigg)/\varepsilon^2 \, .
\end{equation}
\begin{equation}\label{eq:variance}
\begin{split}
    var\bigg(\cfrac{2}{n}\sum_{i=1}^n \boldsymbol{\epsilon}_i^{\top}(e^{A^*t_i}\boldsymbol{x}_0^*-e^{At_i}\boldsymbol{x}_0)\bigg)
    =&\cfrac{4}{n^2}\sum_{i=1}^n var\big(\boldsymbol{\epsilon}_i^{\top}(e^{A^*t_i}\boldsymbol{x}_0^*-e^{At_i}\boldsymbol{x}_0)\big)\\
    = \cfrac{4}{n^2}\sum_{i=1}^n E[\{\boldsymbol{\epsilon}_i^{\top}(e^{A^*t_i}\boldsymbol{x}_0^*-e^{At_i}\boldsymbol{x}_0)\}^2]
    \leq &\cfrac{4}{n^2}\sum_{i=1}^n E[\parallel\boldsymbol{\epsilon}_i\parallel_2^2 \parallel e^{A^*t_i}\boldsymbol{x}_0^*-e^{At_i}\boldsymbol{x}_0\parallel_2^2] \, ,\\
\end{split}
\end{equation}
by Cauchy-Schwarz inequality. 
Specifically,
\begin{equation}\label{eq:exp(A*t)x*-exp(At)x}
\begin{split}
    \parallel e^{A^* t_i }\boldsymbol{x}_0^* -e^{At_i}\boldsymbol{x}_0\parallel_2
    = &\parallel e^{A^* t_i }\boldsymbol{x}_0^* 
    - e^{A^* t_i }\boldsymbol{x}_0 
    + e^{A^* t_i }\boldsymbol{x}_0 -e^{At_i}\boldsymbol{x}_0\parallel_2\\
    = & \parallel e^{A^* t_i }(\boldsymbol{x}_0^*-\boldsymbol{x}_0) + 
    (e^{A^* t_i }-e^{At_i})\boldsymbol{x}_0\parallel_2\\
    \leq & \parallel e^{A^* t_i }(\boldsymbol{x}_0^*-\boldsymbol{x}_0)\parallel_2 + \parallel (e^{A^* t_i }-e^{At_i})\boldsymbol{x}_0\parallel_2\\
    \leq &\parallel e^{A^* t_i }\parallel_2 \parallel \boldsymbol{x}_0^*-\boldsymbol{x}_0\parallel_2 + \parallel e^{A^* t_i }-e^{At_i}\parallel_2 \parallel \boldsymbol{x}_0 \parallel_2\\
    \leq &
    \underbrace{\parallel e^{A^* t_i }\parallel_F}_\text{item 1}
    \underbrace{\parallel \boldsymbol{x}_0^*-\boldsymbol{x}_0\parallel_2}_\text{item 2} + 
    \underbrace{\parallel e^{A^* t_i }-e^{At_i}\parallel_F}_\text{item 3}
    \underbrace{\parallel \boldsymbol{x}_0 \parallel_2}_\text{item 4}\, ,
\end{split}
\end{equation}
where $\parallel M \parallel_2$ denotes the subordinate matrix norm induced by the norm $\parallel \cdot \parallel_2$ of a matrix $M \in \mathbb{R}^{m\times n}$, with 
\begin{equation*}
    \parallel M \parallel_2 \,= \sup_{ \boldsymbol{v}\in \mathbb{R}^n ,\boldsymbol{v} \neq \boldsymbol{0}}\cfrac{\parallel M\boldsymbol{v} \parallel_2}{\parallel \boldsymbol{v}\parallel_2}\,,
\end{equation*}
and $\parallel M \parallel_F$ is the Frobenius norm of matrix $M$, with 
\begin{equation*}
    \parallel M \parallel_F \,= \sqrt{ \sum_{i=1}^m \sum_{j=1}^n|M_{ij}|^2}\,,
\end{equation*}
where $M_{ij}$ is the $ij$-th entry of matrix $M$.

Under assumption A4, parameter space $\Theta$ is a compact subset of $\mathbb{R}^{d+d^2}$, therefore, $\Theta$ can be enclosed with a $(d+d^2)$-dimensional box. We denote it as $\Theta \subset [-l, l]^{d+d^2}$ with $0<l< \infty$. Then we will analyse each of the four items in Equation~\eqref{eq:exp(A*t)x*-exp(At)x}.
\begin{equation*}
\begin{split}\label{eq:exp(A*t)}
    \text{item 1} = &\parallel e^{A^* t_i} \parallel_F \,
    = \, \parallel e^{A^* t_i}  - e^{I_d t_i} + e^{I_d t_i}\parallel_F 
    \leq \,\parallel e^{A^* t_i}  - e^{I_d t_i} \parallel_F + \parallel e^{I_d t_i}  \parallel_F\\
    \leq &\parallel A^* t_i-I_d t_i\parallel_F e^{\parallel I_d t_i \parallel_F} e^{\parallel A^* t_i - I_d t_i \parallel_F}+ \parallel e^{I_d t_i}\parallel_F \,,
\end{split}
\end{equation*}
where $I_d$ denotes the $d$-dimensional identity matrix. Simple calculation shows that
\begin{equation*}
    \parallel A^* - I_d \parallel_F \\
    \leq \sqrt{ (l+1)^2 \times d + l^2\times (d^2 -d) }\\
    = \sqrt{l^2d^2 + 2ld +d} \, ,
\end{equation*}
\begin{equation*}
  \parallel I_d \parallel_F \,= \sqrt{d} \, ,
\end{equation*}
and
\begin{equation*}
    \parallel e^{I_d t_i} \parallel_F \,=\sqrt{e^{2t_i} \times d} = e^{t_i} \sqrt{d}\, .
\end{equation*}
Since $t_i \leq T$ for all $i = 1,\ldots, n$ and $0<T<\infty$ by condition A6, therefore, one obtains that
\begin{equation}\label{eq:item1}
    \text{item 1} = \,\parallel e^{A^* t_i} \parallel_F
    \,\leq C_d' \,,
\end{equation}
where $0 < C_d' < \infty$ is a constant only depending on $d$.
Similarly,
\begin{equation}\label{eq:item2}
    \text{item 2} = \,\parallel \boldsymbol{x}_0^* - \boldsymbol{x}_0 \parallel_2 \,\leq \sqrt{(2l)^2 \times d} = 2l\sqrt{d} \, ,
\end{equation}
and 
\begin{equation*}\label{eq:exp(A*t)-exp(At)}
\begin{split}
    \text{item 3} = &\parallel e^{A^*t_i}-e^{At_i} \parallel_F
   \, \leq\, \parallel A^* t_i - A t_i\parallel_F e^{\parallel A t_i \parallel_F} 
    e^{\parallel A^* t_i - A t_i \parallel_F} \\
    = &\parallel A^* - A\parallel_F t_i e^{\parallel A  \parallel_F t_i} 
    e^{\parallel A^*  - A  \parallel_F t_i} \, ,
\end{split}
\end{equation*}
where 
\begin{equation*}
    \parallel A \parallel_F \,\leq \sqrt{l^2 \times d^2} = ld\, ,
\end{equation*}
\begin{equation*}
    \parallel A^* -A \parallel_F \,\leq \sqrt{(2l)^2 \times d^2} = 2ld \, ,
\end{equation*}
therefore, by simple calculation one obtains that
\begin{equation}\label{eq:item3}
    \text{item 3} = \,\parallel e^{A^*t_i}-e^{At_i} \parallel_F \,\leq C_d'' \, ,
\end{equation}
with $0 < C_d'' < \infty$, is a constant only depends on $d$.
One sees that
\begin{equation}\label{eq:item4}
    \text{item 4} = \,\parallel \boldsymbol{x}_{0}\parallel_2 \,\leq \sqrt{l^2 \times d} = l\sqrt{d}\,.
\end{equation}
Combining \eqref{eq:item1}, \eqref{eq:item2}, \eqref{eq:item3}, \eqref{eq:item4} and \eqref{eq:exp(A*t)x*-exp(At)x}, one sees that
\begin{equation*}
    \parallel e^{A^* t_i }\boldsymbol{x}_0^* -e^{At_i}\boldsymbol{x}_0\parallel_2 \,\leq C_d \, ,
\end{equation*}
where $0 < C_d < \infty$ is a constant only depends on $d$. Then (\ref{eq:variance}) can be expressed as
\begin{equation*}\label{eq:variance2}
\begin{split}
    &var\bigg(\cfrac{2}{n}\sum_{i=1}^n \boldsymbol{\epsilon}_i^{\top}(e^{A^*t_i}\boldsymbol{x}_0^*-e^{At_i}\boldsymbol{x}_0)\bigg)
    \leq \, \cfrac{4}{n^2}\sum_{i=1}^n E[\parallel\boldsymbol{\epsilon}_i\parallel_2^2 \parallel e^{A^*t_i}\boldsymbol{x}_0^*-e^{At_i}\boldsymbol{x}_0\parallel_2^2]\\
    \leq \, &4\,C_d^2E[\parallel \boldsymbol{\epsilon}\parallel_2^2]/n
    =\, 4\,C_d^2\sum_{j=1}^d\sigma_j^2 /n\, .
\end{split}
\end{equation*}
Therefore, (\ref{eq:the third item}) can be expressed as
\begin{equation*}
    P\bigg(\bigg| \cfrac{2}{n}\sum_{i=1}^n \boldsymbol{\epsilon}_i^{\top}(e^{A^*t_i}\boldsymbol{x}_0^*-e^{At_i}\boldsymbol{x}_0) -0\bigg|\geq \varepsilon\bigg)
    \leq \cfrac{4\,C_d^2}{n\varepsilon^2}\sum_{j=1}^d\sigma_j^2
    \rightarrow 0 \, , \mbox{ as } n \rightarrow \infty.
\end{equation*}
That is 
\begin{equation}\label{eq:Mn(theta)_item3}
    \cfrac{2}{n}\sum_{i=1}^n \boldsymbol{\epsilon}_i^{\top}(e^{A^*t_i}\boldsymbol{x}_0^*-e^{At_i}\boldsymbol{x}_0)  \xrightarrow {p} 0\, , \mbox{ as } n\rightarrow \infty\,.
\end{equation}
Combining \eqref{eq:Mn(theta)_item1}, \eqref{eq:Mn(theta)_item2}, \eqref{eq:Mn(theta)_item3} and \eqref{eq:Mn(theta)_decomposition}, one obtains that
\begin{equation}\label{eq:convergence}
    M_n(\boldsymbol{\theta})  \xrightarrow {p} \cfrac{1}{T}\int_0^T \parallel e^{A^*t}\boldsymbol{x}_0^*-e^{At}\boldsymbol{x}_0\parallel_2^2 dt
    +E[\parallel \boldsymbol{\epsilon} \parallel_2^2]\, ,\mbox{ as } n\rightarrow \infty\, .
\end{equation}
By the definition of $M_1(\boldsymbol{\theta})$ in Equation~\eqref{eq:M1(theta)}, the right-hand side of~\eqref{eq:convergence} is $M_1(\theta)$, that is,
\begin{equation*}
     M_n(\boldsymbol{\theta})  \xrightarrow {p} M_1(\boldsymbol{\theta})\, , \mbox{ as } n\rightarrow \infty\, .
\end{equation*}
Now that we have proved that condition~\eqref{eq:uniform_convergence_condition1} in Lemma~\ref{lemma:uniform convergence} is satisfied. Next, we will find a $\alpha >0$ and a $B_n = O_p(1)$ such that for all $\boldsymbol{\theta}_1, \boldsymbol{\theta}_2 \in \Theta$, condition~\eqref{eq:uniform_convergence_condition2} is met.

For any $\boldsymbol{\theta}_1 = (\boldsymbol{x}_{01}, A_1), \boldsymbol{\theta}_2 =(\boldsymbol{x}_{02}, A_2)  \in \Theta$, 
\begin{equation}\label{eq:Mn(theta1)-Mn(theta2)}
\begin{split}
     &\parallel M_n(\boldsymbol{\theta}_1)- M_n(\boldsymbol{\theta}_2)\parallel_2  \\
    =\,&\bigg |\cfrac{1}{n} \sum_{i=1}^n \parallel \boldsymbol{y}_i - e^{A_1t_i}\boldsymbol{x}_{01} \parallel_2^2 - \cfrac{1}{n} \sum_{i=1}^n \parallel \boldsymbol{y}_i - e^{A_2t_i}\boldsymbol{x}_{02} \parallel_2^2 \bigg | \\
    = \,&\bigg | \cfrac{1}{n} \sum_{i=1}^n \{\parallel e^{A_1t_i}\boldsymbol{x}_{01} \parallel _2^2 -\parallel e^{A_2t_i}\boldsymbol{x}_{02} \parallel _2^2 -2 (e^{A^*t_i}\boldsymbol{x}_0^* + \boldsymbol{\epsilon}_i)^\top (e^{A_1t_i}\boldsymbol{x}_{01}-e^{A_2t_i}\boldsymbol{x}_{02})\} \bigg |  \\
    \leq\, &\cfrac{1}{n}\sum_{i=1}^n \big | \parallel e^{A_1t_i}\boldsymbol{x}_{01} \parallel _2^2 -\parallel e^{A_2t_i}\boldsymbol{x}_{02} \parallel _2^2 \\
    &-2 (e^{A^*t_i}\boldsymbol{x}_0^*)^\top (e^{A_1t_i}\boldsymbol{x}_{01}-e^{A_2t_i}\boldsymbol{x}_{02})
    -2\boldsymbol{\epsilon}_i^\top (e^{A_1t_i}\boldsymbol{x}_{01}-e^{A_2t_i}\boldsymbol{x}_{02}) \big| \\
    \leq\, &\cfrac{1}{n}\sum_{i=1}^n \big \{\big | \parallel e^{A_1t_i}\boldsymbol{x}_{01} \parallel _2^2 -\parallel e^{A_2t_i}\boldsymbol{x}_{02} \parallel _2^2 \big |  \\
    &+ \big | 2 (e^{A^*t_i}\boldsymbol{x}_0^*)^\top (e^{A_1t_i}\boldsymbol{x}_{01}-e^{A_2t_i}\boldsymbol{x}_{02}) \big | 
    + \big | 2\boldsymbol{\epsilon}_i^\top (e^{A_1t_i}\boldsymbol{x}_{01}-e^{A_2t_i}\boldsymbol{x}_{02}) \big | \big \}\\
    \leq\, & \cfrac{1}{n}\sum_{i=1}^n \{\parallel e^{A_1t_i}\boldsymbol{x}_{01}-e^{A_2t_i}\boldsymbol{x}_{02} \parallel_2 ( \parallel e^{A_1t_i}\boldsymbol{x}_{01} \parallel _2 + \parallel e^{A_2t_i}\boldsymbol{x}_{02}  \parallel _2  )   \\
    &+ 2 \parallel e^{A^*t_i}\boldsymbol{x}_0^* \parallel_2  \cdot \parallel e^{A_1t_i}\boldsymbol{x}_{01}-e^{A_2t_i}\boldsymbol{x}_{02} \parallel_2  
    + 2 \parallel \boldsymbol{\epsilon}_i \parallel_2 \cdot \parallel e^{A_1t_i}\boldsymbol{x}_{01}-e^{A_2t_i}\boldsymbol{x}_{02} \parallel_2 \} \\
    = \,&\cfrac{1}{n}\sum_{i=1}^n\parallel e^{A_1t_i}\boldsymbol{x}_{01}-e^{A_2t_i}\boldsymbol{x}_{02} \parallel_2  ( \parallel e^{A_1t_i}\boldsymbol{x}_{01} \parallel _2 + \parallel e^{A_2t_i}\boldsymbol{x}_{02}  \parallel _2   
     + 2 \parallel e^{A^*t_i}\boldsymbol{x}_0^* \parallel_2 
    + 2\parallel \boldsymbol{\epsilon}_i \parallel_2 
    ).
\end{split}
\end{equation}
Similar to the process of analysing $\parallel e^{A^*t_i}\boldsymbol{x}_{0}^*-e^{At_i}\boldsymbol{x}_{0}\parallel_2$ in Equation~\eqref{eq:exp(A*t)x*-exp(At)x}, by some calculation one obtains that
\begin{equation*}
\begin{split}
    &\parallel e^{A_1 t_i }\boldsymbol{x}_{01} -e^{A_2t_i}\boldsymbol{x}_{02}\parallel_2
    \,\leq \,\parallel e^{A_1 t_i }\parallel_F \parallel \boldsymbol{x}_{01}-\boldsymbol{x}_{02}\parallel_2 + \parallel e^{A_1 t_i }-e^{A_2t_i}\parallel_F \parallel \boldsymbol{x}_{02} \parallel_2\\
    \leq &\,C_d'\parallel \boldsymbol{x}_{01}-\boldsymbol{x}_{02}\parallel_2 +\, C_d''' \parallel A_1 -A_2\parallel_F
   \, \leq \,\text{max}(C_d', C_d''') (\parallel \boldsymbol{x}_{01}-\boldsymbol{x}_{02}\parallel_2 + \parallel A_1 -A_2\parallel_F)\, ,
\end{split}
\end{equation*}
where $0< C_d', C_d'''<\infty$ are constants only depending on $d$.
Since
\begin{equation*}
\begin{split}
    &(\parallel \boldsymbol{x}_{01}-\boldsymbol{x}_{02}\parallel_2 + \parallel A_1 -A_2\parallel_F)^2
    \,\leq \,2(\parallel \boldsymbol{x}_{01}-\boldsymbol{x}_{02}\parallel_2^2 + \parallel A_1 -A_2\parallel_F^2)
    = \,2\parallel \boldsymbol{\theta}_1 -\boldsymbol{\theta}_2 \parallel_2^2\, ,
\end{split}
\end{equation*}
one obtains that
\begin{equation*}
    \parallel \boldsymbol{x}_{01}-\boldsymbol{x}_{02}\parallel_2 + \parallel A_1 -A_2\parallel_F \,
    \leq \sqrt{2}\parallel  \boldsymbol{\theta}_1 -\boldsymbol{\theta}_2 \parallel_2\, .
\end{equation*}
Therefore,
\begin{equation}\label{eq:eA1tx1-eA2tx2}
    \parallel e^{A_1 t_i }\boldsymbol{x}_{01} -e^{A_2t_i}\boldsymbol{x}_{02}\parallel_2
    \,\leq \,\sqrt{2}\,\text{max}(C_d', C_d''')\parallel  \boldsymbol{\theta}_1 -\boldsymbol{\theta}_2 \parallel_2 \, .
\end{equation}
Then we analyse the second item in~\eqref{eq:Mn(theta1)-Mn(theta2)}, some simple calculation shows that
\begin{equation}\label{eq:eA1tx1}
    \parallel e^{A_1 t_i} \boldsymbol{x}_{01}\parallel_2
   \, \leq \,\parallel e^{A_1 t_i} \parallel_2 \parallel \boldsymbol{x}_{01}\parallel_2
   \, \leq \,\parallel e^{A_1 t_i} \parallel_F \parallel \boldsymbol{x}_{01}\parallel_2
   \, \leq C_d'l\sqrt{d} \, .
\end{equation}
Similarly, one sees that
\begin{equation}\label{eq:eA2tx2}
    \parallel e^{A_2 t_i} \boldsymbol{x}_{02}\parallel_2\,, \parallel e^{A^* t_i} \boldsymbol{x}_{0}^*\parallel_2 \,\leq C_d'l\sqrt{d}\, .
\end{equation}
Combining \eqref{eq:eA1tx1-eA2tx2}, \eqref{eq:eA1tx1}, \eqref{eq:eA2tx2} and \eqref{eq:Mn(theta1)-Mn(theta2)}, one obtains that
\begin{equation*}
\begin{split}
    &\parallel M_n(\boldsymbol{\theta}_1) - M_n(\boldsymbol{\theta}_2)\parallel_2
   \, \leq \,\cfrac{1}{n}\sum_{i=1}^n\sqrt{2} \,\text{max}(C_d',C_d''')   
     (4C_d'ld+2\parallel \boldsymbol{\epsilon}_i \parallel_2) 
     \parallel \boldsymbol{\theta}_1 -\boldsymbol{\theta}_2 \parallel_2\\
    \leq  &\bigg(\tilde{C}_d^2 + \cfrac{2\tilde{C}_d}{n}\sum_{i=1}^n\parallel \boldsymbol{\epsilon}_i\parallel_2\bigg)
    \parallel\boldsymbol{\theta}_1 -\boldsymbol{\theta}_2 \parallel_2 \, ,
\end{split}
\end{equation*}
where $\tilde{C}_d = \text{max}(\sqrt{2} \,\text{max}(C_d',C_d'''),4C_d'ld)$ and $ 0<\tilde{C}_d<\infty $ is a constant that only depends on $d$. If one sets 
\begin{equation*}
    B_n := \tilde{C}_d^2 + \cfrac{2\tilde{C}_d}{n}\sum_{i=1}^n\parallel \boldsymbol{\epsilon}_i\parallel_2\,,
\end{equation*}
one sees
\begin{equation*}
    \parallel M_n(\boldsymbol{\theta}_1) - M_n(\boldsymbol{\theta}_2)\parallel_2\,
    \leq  B_n\parallel\boldsymbol{\theta}_1 -\boldsymbol{\theta}_2 \parallel_2\, ,
\end{equation*}
for all $\boldsymbol{\theta}_1, \boldsymbol{\theta}_2 \in \Theta$.
Let 
\begin{equation*}
    F_n := \frac{1}{n}\sum_{i=1}^n \parallel \boldsymbol{\epsilon}_i \parallel_2\,,
\end{equation*}
by Chebyshev's inequality, one sees that
\begin{equation*}
    F_n = O_p(E[F_n] + \sqrt{var(F_n)})\,,
\end{equation*}
where
\begin{equation*}
    E[F_n] = E\bigg[\cfrac{1}{n}\sum_{i=1}^n \parallel \boldsymbol{\epsilon}_i \parallel_2\bigg]
    = \cfrac{1}{n}\sum_{i=1}^n E[\parallel \boldsymbol{\epsilon}_i \parallel_2] 
    = E[\parallel \boldsymbol{\epsilon}\parallel_2]
    = O(1) \, ,
\end{equation*}
because $E[\parallel \boldsymbol{\epsilon} \parallel_2^2] = \sum_{i=1}^d \sigma_i^2 < \infty$ implies that $E[\parallel \boldsymbol{\epsilon}\parallel_2] < \infty$. And 
\begin{equation*}
\begin{split}
    var(F_n) = &var\bigg(\cfrac{1}{n}\sum_{i=1}^n \parallel \boldsymbol{\epsilon}_i \parallel_2\bigg)
    = \cfrac{1}{n^2}\sum_{i=1}^n var(\parallel \boldsymbol{\epsilon}_i \parallel_2)\\
    = &\cfrac{1}{n^2}\sum_{i=1}^n\{E[\parallel \boldsymbol{\epsilon}_i \parallel_2^2] -(E[\parallel \boldsymbol{\epsilon}_i \parallel_2])^2\}
    = O(1)\, ,
\end{split}
\end{equation*}
therefore, one obtains $F_n = O_p(1)$, which implies $B_n = O_p(1)$. Thus, the condition~\eqref{eq:uniform_convergence_condition2} is satisfied with $\alpha = 1$. Since $M_1(\boldsymbol{\theta})$ is continuous w.r.t $\boldsymbol{\theta}$, and under assumption A4, parameter space $\Theta$ is compact, therefore, by Lemma~\ref{lemma:uniform convergence}, one sees that
\begin{equation*}
 \sup_{\boldsymbol{\theta}\in \Theta}\parallel M_n(\boldsymbol{\theta} ) - M_1(\boldsymbol{\theta})\parallel_2 \, \xrightarrow {p} 0\, , \mbox{ as } n\rightarrow \infty \,,
\end{equation*}
that is, the condition~\eqref{eq:consistency_condition1} in Lemma~\ref{lemma:M_estimator consistency} is satisfied.

\noindent{\bf Step \romannumeral 2}: We show that condition~\eqref{eq:consistency_condition2} in Lemma~\ref{lemma:M_estimator consistency} is satisfied.

\noindent Recall that
\begin{equation*}
    M_1(\boldsymbol{\theta}) = \cfrac{1}{T}\int_0^T \parallel e^{A^*t}\boldsymbol{x}_0^*-e^{At}\boldsymbol{x}_0\parallel_2^2 dt
    +E[\parallel \boldsymbol{\epsilon} \parallel_2^2]\, ,
\end{equation*}
when assumptions A1 and A2 are satisfied with respect to $\boldsymbol{\theta}^*$, according to Theorem~\ref{theorem:identifiability from discrete observations}, the ODE system~\eqref{eq:ODE model} is identifiable at $\boldsymbol{\theta}^* = (\boldsymbol{x}_0^*, A^*)$ from any $d+1$ equally-spaced error-free observations, which implies the ODE system is also identifiable at  $\boldsymbol{\theta}^*$ from the corresponding trajectory at $[0,T]$ , which further implies that $M_1(\boldsymbol{\theta})$ attains its unique global minimum at $\boldsymbol{\theta}^*$. Therefore, one sees that the condition~\eqref{eq:consistency_condition2} in Lemma~\ref{lemma:M_estimator consistency} is satisfied.

\noindent{\bf Step \romannumeral 3}: We show that condition~\eqref{eq:consistency_condition3} in Lemma~\ref{lemma:M_estimator consistency} is satisfied.

\noindent The definition of $\hat{\boldsymbol{\theta}}_n$, that is
\begin{equation*}
    \hat{\boldsymbol{\theta}}_n = \arg \min_{\boldsymbol{\theta}\in \Theta}M_n(\boldsymbol{\theta}) \,,
\end{equation*}
implies that the condition~\eqref{eq:consistency_condition3} is satisfied.\\

Now that we have proved that all the three conditions in Lemma~\ref{lemma:M_estimator consistency} are satisfied, therefore, one concludes that
\begin{equation*}
   \hat{\boldsymbol{\theta}}_n \xrightarrow {p} \boldsymbol{\theta}^*\, ,\mbox{ as } n\rightarrow \infty \, .
\end{equation*}
\end{proof}

\subsection{Proof of Corollary~\ref{corollary:consistency aggregate}}\label{proof:corollary3.1.1}
\begin{proof}
The main task in this proof is to prove that the NLS parameter estimator $\hat{\tilde{\boldsymbol{\theta}}}:= (\hat{\tilde{\boldsymbol{x}}}_0,\hat{\tilde{A}})$ from aggregated observations is consistent to the true system parameters corresponding to the new ODE system, that is, $\tilde{\boldsymbol{\theta}}^*:=(\tilde{\boldsymbol{x}}^*_0, \tilde{A}^*)$. Once one has proved this result, one can reach the conclusion in Corollary~\ref{corollary:consistency aggregate} by taking the function $g(\cdot)$ with respect to $\hat{\tilde{\boldsymbol{\theta}}}$ and $\tilde{\boldsymbol{\theta}}^*$, respectively.

In order to prove
\begin{equation*}
    \hat{\tilde{\boldsymbol{\theta}}} \xrightarrow{p} \tilde{\boldsymbol{\theta}}^*\,, \mbox{ as } \tilde{n}\rightarrow \infty\,,
\end{equation*}
base on Theorem~\ref{theorem:consistency}, one needs to prove that the assumptions A1, A2 and A4-A6 are satisfied with respect to the new ODE system, the new parameter $\tilde{\boldsymbol{\theta}}^*$ and the new error terms $\tilde{\boldsymbol{\epsilon}}_i$ corresponding to the aggregated observations $\tilde{\boldsymbol{Y}}$.

Under assumptions A1-A3, according to the proof of Corollary~\ref{corollary:aggregate} in Appendix~\ref{proof:corollary2.2.1}, one sees that assumptions A1-A2 are satisfied with respect to $\tilde{\boldsymbol{\theta}}^*$, where
\begin{equation}\label{eq:aggregated tilde_theta and theta}
    \tilde{\boldsymbol{\theta}}^* = (\tilde{\boldsymbol{x}}_0^*, \tilde{A}^*) = \big((I+e^{A^*\Delta_t} + \cdots + e^{A^*(k-1)\Delta_t})\boldsymbol{x}_0^*/k, A^*\big)\,.
\end{equation}
Since under assumption A4, $\Theta$ is compact, and according to the relationship between $\tilde{\boldsymbol{\theta}}^*$ and $\boldsymbol{\theta}^* = (\boldsymbol{x}_0^*, A^*)$ in Equation~\eqref{eq:aggregated tilde_theta and theta}, one obtains that the new parameter space $\tilde{\Theta}$ is compact.

Based on the generation rules of the aggregated observations, assumption A5 is satisfied with the new error terms $\{\tilde{\boldsymbol{\epsilon}}_i\}$ being independent and identically distributed random vectors with mean zero and covariance matrix 
\begin{equation*}
    \tilde{\Sigma} =  \Sigma/k = diag(\frac{\sigma_1^2}{k},\ldots, \frac{\sigma_d^2}{k})\,.
\end{equation*}

By aggregated observations generation rules, assumption A6 is satisfied with $\tilde{t}_i = t_{(i-1)k+1}$, $\Delta \tilde{t} = k \Delta t$ and 
\begin{equation*}
    \tilde{T} = (\lfloor n/k\rfloor-1)kT/(n-1)\,,
\end{equation*}
where $\lfloor \cdot \rfloor $ stands for the floor function. 

Then by Theorem \ref{theorem:consistency}, one concludes that $\hat{\tilde{\boldsymbol{\theta}}}\xrightarrow{p} \tilde{\boldsymbol{\theta}}^*$, as $\tilde{n}\rightarrow \infty$.

By definition of $\hat{\boldsymbol{\theta}}_{\tilde{n}}$ in Corollary~\ref{corollary:consistency aggregate}, that is
\begin{equation*}
    \hat{\boldsymbol{\theta}}_{\tilde{n}}:=g(\hat{\tilde{\boldsymbol{\theta}}}) := \big(k(I+e^{\hat{\tilde{A}}\Delta_t} + \cdots + e^{\hat{\tilde{A}}(k-1)\Delta_t})^{-1}\hat{\tilde{\boldsymbol{x}}}_0, \hat{\tilde{A}}\big)\,,
\end{equation*}
one obtains that 
\begin{equation*}
\begin{split}
    g(\tilde{\boldsymbol{\theta}}^*) &= \big(k(I+e^{\tilde{A}^*\Delta_t} + \cdots + e^{\tilde{A}^*(k-1)\Delta_t})^{-1}\tilde{\boldsymbol{x}}_0^*, \tilde{A}^*\big)
    = (\boldsymbol{x}_0^*, A^*)
    = \boldsymbol{\theta}^*\,.
\end{split}
\end{equation*}
By multivariate continuous mapping theorem, one concludes that
\begin{equation*}
    g(\hat{\tilde{\boldsymbol{\theta}}}) \xrightarrow{p} g(\tilde{\boldsymbol{\theta}}^*)\,, \mbox{ as } \tilde{n}\rightarrow \infty\,,
\end{equation*}
that is
\begin{equation*}
    \hat{\boldsymbol{\theta}}_{\tilde{n}} \xrightarrow{p}\boldsymbol{\theta}^*\,, \mbox{ as } \tilde{n}\rightarrow \infty\,.
\end{equation*}
\end{proof}

\subsection{Proof of Corollary~\ref{corollary:consistency timescaled}}\label{proof:corollary3.1.2}
\begin{proof}
Similar to the proof of Corollary~\ref{corollary:consistency aggregate} in Appendix~\ref{proof:corollary3.1.1}, one first needs to prove that the NLS parameter estimator $\hat{\tilde{\boldsymbol{\theta}}}:= (\hat{\tilde{\boldsymbol{x}}}_0,\hat{\tilde{A}})$ from time-scaled observations is consistent to the true system parameters corresponding to the new ODE system, that is, $\tilde{\boldsymbol{\theta}}^*:=(\tilde{\boldsymbol{x}}^*_0, \tilde{A}^*)$. Once one has proved this result, then the conclusion in Corollary~\ref{corollary:consistency timescaled} can be reached by taking the function $g(\cdot)$ with respect to $\hat{\tilde{\boldsymbol{\theta}}}$ and $\tilde{\boldsymbol{\theta}}^*$, respectively.

In order to prove
\begin{equation*}
    \hat{\tilde{\boldsymbol{\theta}}} \xrightarrow{p} \tilde{\boldsymbol{\theta}}^*\,, \mbox{ as } \tilde{n}\rightarrow \infty\,,
\end{equation*}
base on Theorem~\ref{theorem:consistency}, one needs to prove that the assumptions A1, A2 and A4-A6 are satisfied with respect to the new ODE system, the new parameter $\tilde{\boldsymbol{\theta}}^*$ and the new error terms $\tilde{\boldsymbol{\epsilon}}_i$ corresponding to the time-scaled observations $\tilde{\boldsymbol{Y}}$.

Under assumptions A1-A3, according to the proof of Corollary~\ref{corollary:timescaled} in Appendix~\ref{proof:corollary2.2.2}, one sees that assumptions A1-A2 are satisfied with respect to $\tilde{\boldsymbol{\theta}}^*$, where
\begin{equation}\label{eq:time-scaled tilde_theta and theta}
    \tilde{\boldsymbol{\theta}}^* = (\tilde{\boldsymbol{x}}_0^*,\tilde{A}^*)
    =(\boldsymbol{x}_0^*, A^*/k)\,.
\end{equation}
Since under assumption A4, $\Theta$ is compact, and according to the relationship between $\tilde{\boldsymbol{\theta}}^*$ and $\boldsymbol{\theta}^* = (\boldsymbol{x}_0^*, A^*)$ in Equation~\eqref{eq:time-scaled tilde_theta and theta}, one obtains that the new parameter space $\tilde{\Theta}$ is compact.

Based on the generation rules of the time-scaled observations, assumption A5 is satisfied with the new error terms $\{\tilde{\boldsymbol{\epsilon}}_i\}$ being the original error terms $\{\boldsymbol{\epsilon}_i\}$.

Assumption A6 is satisfied with
\begin{equation*}
    \tilde{t}_i = kt_i\,, 
    \Delta \tilde{t} = k \Delta t\,,
    \tilde{T} = kT\,.
\end{equation*}

Then by Theorem \ref{theorem:consistency}, one concludes that $\hat{\tilde{\boldsymbol{\theta}}}\xrightarrow{p} \tilde{\boldsymbol{\theta}}^*$, as $\tilde{n}\rightarrow \infty$.

By definition of $\hat{\boldsymbol{\theta}}_{\tilde{n}}$ in Corollary~\ref{corollary:consistency timescaled}, that is
\begin{equation*}
    \hat{\boldsymbol{\theta}}_{\tilde{n}}:=g(\hat{\tilde{\boldsymbol{\theta}}}) := (\hat{\tilde{\boldsymbol{x}}}_0, k\hat{\tilde{A}})\,,
\end{equation*}
one obtains that 
\begin{equation*}
    g(\tilde{\boldsymbol{\theta}}^*) 
    = (\tilde{\boldsymbol{x}}_0^*, k\tilde{A}^*\big)
    = (\boldsymbol{x}_0^*, A^*)
    = \boldsymbol{\theta}^*\,.
\end{equation*}
By multivariate continuous mapping theorem, one concludes that
\begin{equation*}
    g(\hat{\tilde{\boldsymbol{\theta}}}) \xrightarrow{p} g(\tilde{\boldsymbol{\theta}}^*)\,, \mbox{ as } \tilde{n}\rightarrow \infty\,,
\end{equation*}
that is
\begin{equation*}
    \hat{\boldsymbol{\theta}}_{\tilde{n}} \xrightarrow{p}\boldsymbol{\theta}^*\,, \mbox{ as } \tilde{n}\rightarrow \infty\,.
\end{equation*}

\end{proof}

\subsection{Proof of Theorem~\ref{theorem:Asymptotic normality}}\label{proof:theorem3.2}
\begin{proof}
Recall that we have defined $M_n(\boldsymbol{\theta})$ in Equation~\eqref{eq:Mn(theta)}, one sees that $M_n(\boldsymbol{\theta})$ is twice differentiable at $\boldsymbol{\theta} \in \Theta$. Then by the mean value theorem, one obtains
\begin{equation}\label{eq:mean value theorem}
    \nabla_{\boldsymbol{\theta}} M_n(\hat{\boldsymbol{\theta}}_n) = \nabla_{\boldsymbol{\theta}} M_n(\boldsymbol{\theta}^*)+ \nabla_{\boldsymbol{\theta}}^2M_n(\tilde{\boldsymbol{\theta}}) (\hat{\boldsymbol{\theta}}_n-\boldsymbol{\theta}^*)\,,
\end{equation}
where $\nabla_{\boldsymbol{\theta}} M_n(\boldsymbol{\theta}^*)$ denotes the gradient of $M_n(\boldsymbol{\theta})$ with respect to $\boldsymbol{\theta}$ at $\boldsymbol{\theta}^*$, and $\nabla_{\boldsymbol{\theta}}^2M_n(\boldsymbol{\theta}^*)$ denotes the Hessian matrix of $M_n(\boldsymbol{\theta})$ with respect to $\boldsymbol{\theta}$ at $\boldsymbol{\theta}^*$, and $\tilde{\boldsymbol{\theta}}$ is in the line joining $\hat{\boldsymbol{\theta}}_n$ and $\boldsymbol{\theta}^*$. 

Since assumptions A1 and A2 are satisfied with respect to $\boldsymbol{\theta}^*$ and assumptions A4-A6 hold, then by Theorem~\ref{theorem:consistency}, one obtains that 
\begin{equation*}
    \hat{\boldsymbol{\theta}}_n \xrightarrow{p} \boldsymbol{\theta}^*\,,\mbox{ as } n\rightarrow \infty\,.
\end{equation*}
Assumption A7 implies that $\hat{\boldsymbol{\theta}}_n$ is an interior point of $\Theta$ as $n\rightarrow \infty$, and by definition, 
\begin{equation*}
    \hat{\boldsymbol{\theta}}_n = \arg\min_{\boldsymbol{\theta}}M_n(\boldsymbol{\theta})\,,
\end{equation*}
therefore, one obtains that 
\begin{equation*}
    \nabla_{\boldsymbol{\theta}} M_n(\hat{\boldsymbol{\theta}}_n) = \boldsymbol{0}\,.
\end{equation*}  
Suppose that $\nabla_{\boldsymbol{\theta}}^2M_n(\tilde{\boldsymbol{\theta}})$ is nonsingular, by rearranging Equation~\eqref{eq:mean value theorem}, one obtains that
\begin{equation*}
    \sqrt{n} (\hat{\boldsymbol{\theta}}_n-\boldsymbol{\theta}^*) = -\{\nabla_{\boldsymbol{\theta}}^2M_n(\tilde{\boldsymbol{\theta}})\}^{-1}\sqrt{n}\nabla_{\boldsymbol{\theta}} M_n(\boldsymbol{\theta}^*)\,.
\end{equation*} 
By definition of $\tilde{\boldsymbol{\theta}}$ and the convergence of $\hat{\boldsymbol{\theta}}_n$ to $\boldsymbol{\theta}^*$, one obtains that
\begin{equation*}
    \tilde{\boldsymbol{\theta}}\xrightarrow{p}\boldsymbol{\theta}^*\,, \mbox{ as } n\rightarrow \infty\,,
\end{equation*}
and according to the proof of Theorem~\ref{theorem:consistency} in Appendix~\ref{proof:theorem3.1},
 \begin{equation*}
     \sup_{\boldsymbol{\theta}\in \Theta}\parallel M_n(\boldsymbol{\theta} ) - M_1(\boldsymbol{\theta})\parallel_2 \, \xrightarrow {p} 0\, , \mbox{ as } n\rightarrow \infty \,,
 \end{equation*}
one obtains that
\begin{equation*}
    \nabla_{\boldsymbol{\theta}}^2M_n(\tilde{\boldsymbol{\theta}}) \xrightarrow{p}  \nabla_{\boldsymbol{\theta}}^2M_1(\boldsymbol{\theta}^*)\,,\mbox{ as }n\rightarrow \infty\,,
\end{equation*}
By the relationship between $M_1(\boldsymbol{\theta})$ and $M(\boldsymbol{\theta})$ defined in Equation~\eqref{eq:M1(theta)}, one sees that
\begin{equation*}
    \nabla_{\boldsymbol{\theta}}^2M_1(\boldsymbol{\theta}^*) = \nabla_{\boldsymbol{\theta}}^2M(\boldsymbol{\theta}^*)\,,
\end{equation*}
therefore,
\begin{equation*}
    \nabla_{\boldsymbol{\theta}}^2M_n(\tilde{\boldsymbol{\theta}}) \xrightarrow{p}  \nabla_{\boldsymbol{\theta}}^2M(\boldsymbol{\theta}^*)\,,\mbox{ as }n\rightarrow \infty\,.
\end{equation*}

For the simplicity of notation, we set 
\begin{equation*}
    H := \nabla_{\boldsymbol{\theta}}^2M(\boldsymbol{\theta}^*)\,.
\end{equation*}
We will show that $H$ is positive definite, thus invertible.
In addition, if one shows that
\begin{equation}\label{eq:normal distribution}
    -\sqrt{n}\nabla_{\boldsymbol{\theta}} M_n(\boldsymbol{\theta}^*)\xrightarrow{d}N(\boldsymbol{0}, V)\,,\mbox{ as } n\rightarrow \infty\,,
\end{equation}
then, based on the Slutsky's theorem one obtains that 
\begin{equation*}
    \sqrt{n}(\hat{\boldsymbol{\theta}}_n -\boldsymbol{\theta}^* )\xrightarrow{d} N(\boldsymbol{0}, H^{-1}VH^{-1})\,, \mbox{ as } n\rightarrow \infty\,.
\end{equation*}
In the following, we will first prove $-\sqrt{n}\nabla_{\boldsymbol{\theta}} M_n(\boldsymbol{\theta}^*)$ converges in distribution to a normal distribution, that is,~\eqref{eq:normal distribution}, and calculate matrix $V$. Then we will calculate matrix $H$. 

\noindent{\bf Step \romannumeral 1}: We prove that $-\sqrt{n}\nabla_{\boldsymbol{\theta}}
M_n(\boldsymbol{\theta}^*)$ converges in distribution to a normal distribution, and we calculate matrix $V$.

By definition of the system parameter $\boldsymbol{\theta}$, one sees that
\begin{equation*}
    \boldsymbol{\theta} = (\boldsymbol{x}_0, A) = [\boldsymbol{x}_0^{\top}, a_{11},\ldots, a_{1d},\ldots, a_{dd}]^{\top}\in \mathbb{R}^{d+d^2}\,,
\end{equation*}
where $a_{jk}$ is the $jk$-th entry of parameter matrix $A$, for all $j,k=1,\ldots, d$.

Therefore, $\nabla_{\boldsymbol{\theta}} M_n(\boldsymbol{\theta})\in \mathbb{R}^{d+d^2}$, with
\begin{equation*}
    \nabla_{\boldsymbol{\theta}} M_n(\boldsymbol{\theta}) = \frac{\partial M_n(\boldsymbol{\theta})}{\partial \boldsymbol{\theta}} =\bigg[\bigg\{\frac{\partial M_n(\boldsymbol{\theta}) }{\partial \boldsymbol{x}_0}\bigg\}^{\top},
    \frac{\partial M_n(\boldsymbol{\theta}) }{\partial a_{11}},\ldots,  
    \frac{\partial M_n(\boldsymbol{\theta}) }{\partial a_{1d}},\ldots,
    \frac{\partial M_n(\boldsymbol{\theta}) }{\partial a_{dd}}
    \bigg]^{\top}\,.
\end{equation*}

Recall that 
\begin{equation}
    M_n(\boldsymbol{\theta}) = \cfrac{1}{n} \sum_{i=1}^n \parallel \boldsymbol{y}_i - e^{At_i}\boldsymbol{x}_0 \parallel_2^2 \, ,
\end{equation}
obviously, if one wants to calculate the partial derivative of $M_n(\boldsymbol{\theta})$ with respect to $a_{jk}$, that is, $\partial M_n(\boldsymbol{\theta})/\partial a_{jk}$, one needs to calculate $\partial e^{At}/\partial a_{jk}$ first.
Suppose that matrix $A$ has $d$ distinct eigenvalues $\lambda_1,\ldots, \lambda_d$, and $A$ has the Jordan decomposition 
$A = Q\Lambda Q^{-1}$,
where $\Lambda = diag(\lambda_1,\ldots, \lambda_d)$. Then according to \citep{kalbfleisch1985analysis,tsai2003note}, one obtains that 
\begin{equation*}
    Z_{jk}(t) := \cfrac{\partial e^{At}}{\partial a_{jk}} = Q \{(Q_{\cdot j}^{-1}Q_{k\cdot})\circ U(t)\} Q^{-1} \,,
\end{equation*}
here, for notational simplicity, we denote $\partial e^{At}/\partial a_{jk}$ as $Z_{jk}(t)$. The column vector $Q_{\cdot j}^{-1}$ stands for the $j$th column of matrix $Q^{-1}$ and the row vector $Q_{k\cdot}$ denotes the $k$th row of matrix $Q$. Let $B\circ C$ denote the Hadamard product, with each element 
\begin{equation*}
    (B\circ C)_{ij} = (B)_{ij}(C)_{ij}\,,
\end{equation*}
where matrices $B$ and $C$ are of the same dimension. $U(t)$ has the form:
\begin{equation*}
    U(t) = \begin{bmatrix}
    te^{\lambda_1t} &\cfrac{e^{\lambda_1 t}-e^{\lambda_2 t}}{\lambda_1-\lambda_2}  & \cdots & \cfrac{e^{\lambda_1 t}-e^{\lambda_d t}}{\lambda_1-\lambda_d}\\ \\
    \cfrac{e^{\lambda_2 t}-e^{\lambda_1 t}}{\lambda_2-\lambda_1} & te^{\lambda_2 t} & \cdots & \cfrac{e^{\lambda_2 t}-e^{\lambda_d t}}{\lambda_2-\lambda_d}\\
    \vdots & \vdots  & \ddots & \vdots \\
    \cfrac{e^{\lambda_d t}-e^{\lambda_1 t}}{\lambda_d-\lambda_1} & \cfrac{e^{\lambda_d t}-e^{\lambda_2 t}}{\lambda_d-\lambda_2} &  \cdots & te^{\lambda_d t}
    \end{bmatrix}\,.
\end{equation*}

In the following, we will calculate $\nabla_{\boldsymbol{\theta}}M_n(\boldsymbol{\theta})$.
Simple calculation shows that
\begin{equation*}
\begin{split}
    M_n(\boldsymbol{\theta}) &= \cfrac{1}{n}\sum_{i=1}^n\parallel \boldsymbol{y}_i - e^{At_i}\boldsymbol{x}_0 \parallel_2^2
    =\cfrac{1}{n}\sum_{i=1}^n (\boldsymbol{y}_i - e^{At_i}\boldsymbol{x}_0)^{\top} (\boldsymbol{y}_i - e^{At_i}\boldsymbol{x}_0)\\
    &= \cfrac{1}{n}\sum_{i=1}^n \{\boldsymbol{y}_i^{\top}\boldsymbol{y}_i - \boldsymbol{x}_0^{\top}(e^{At_i})^{\top}\boldsymbol{y}_i - \boldsymbol{y}_i^{\top}e^{At_i}\boldsymbol{x}_0 + \boldsymbol{x}_0^{\top}(e^{At_i})^{\top}e^{At_i}\boldsymbol{x}_0\}\,.
\end{split}
\end{equation*}
Then one obtains that
\begin{equation*}
    \cfrac{\partial M_n(\boldsymbol{\theta})}{\partial \boldsymbol{x}_0} 
    = \cfrac{1}{n}\sum_{i=1}^n \{-2(e^{At_i})^{\top}\boldsymbol{y}_i+2(e^{At_i})^{\top}e^{At_i}\boldsymbol{x}_0\}\,,
\end{equation*}
\begin{equation*}
\begin{split}
    \cfrac{\partial  M_n(\boldsymbol{\theta})}{\partial a_{jk}} 
    &= \cfrac{1}{n}\sum_{i=1}^n Tr\bigg \{\bigg( \cfrac{\partial  \parallel \boldsymbol{y}_i - e^{At_i}\boldsymbol{x}_0 \parallel_2^2}{\partial e^{At_i}}\bigg)^{\top} \cfrac{\partial e^{At_i}}{\partial a_{jk}}\bigg \}\\
    &= \cfrac{1}{n}\sum_{i=1}^n Tr \{(-\boldsymbol{y}_i\boldsymbol{x}_0^{\top}-\boldsymbol{y}_i\boldsymbol{x}_0^{\top}+2e^{At_i}\boldsymbol{x}_0\boldsymbol{x}_0^{\top})^{\top}Z_{jk}(t_i) \}\\
    &= \cfrac{1}{n}\sum_{i=1}^n Tr[\{-2\boldsymbol{x}_0 \boldsymbol{y}_i^{\top} + 2 \boldsymbol{x}_0 \boldsymbol{x}_0^{\top} (e^{At_i})^{\top}\}Z_{jk}(t_i)]\\
   &= \cfrac{1}{n}\sum_{i=1}^n \{-2\boldsymbol{y}_i^{\top}Z_{jk}(t_i)\boldsymbol{x}_0 + 2 \boldsymbol{x}_0^{\top} (e^{At_i})^{\top}Z_{jk}(t_i)\boldsymbol{x}_0\} \,.
\end{split}
\end{equation*}
Therefore, one obtains that
\begin{equation*}
\begin{split}
    \cfrac{\partial  M_n(\boldsymbol{\theta}^*)}{\partial \boldsymbol{x}_0} 
    :&= \cfrac{\partial M_n(\boldsymbol{\theta})}{\partial \boldsymbol{x}_0} \bigg |_{\boldsymbol{\theta} = \boldsymbol{\theta}^*}
    = \cfrac{1}{n}\sum_{i=1}^n \{-2(e^{A^*t_i})^{\top}\boldsymbol(e^{A^*t_i}\boldsymbol{x}_0^* + \boldsymbol{\epsilon}_i)+2(e^{A^*t_i})^{\top}e^{A^*t_i}\boldsymbol{x}_0^*\}\\
    &= -\cfrac{2}{n}\sum_{i=1}^n (e^{A^*t_i})^{\top}\boldsymbol{\epsilon}_i\,,
\end{split}
\end{equation*}
\begin{equation*}
\begin{split}
    \cfrac{\partial  M_n(\boldsymbol{\theta}^*)}{\partial a_{jk}} :&=\cfrac{\partial  M_n(\boldsymbol{\theta})}{\partial a_{jk}} \bigg |_{\boldsymbol{\theta} = \boldsymbol{\theta}^*}
    = \cfrac{1}{n}\sum_{i=1}^n \{-2(e^{A^*t_i}\boldsymbol{x}_0^* + \boldsymbol{\epsilon}_i)^{\top}Z_{jk}^*(t_i)\boldsymbol{x}_0^* + 2 (\boldsymbol{x}_0^*)^{\top} (e^{A^*t_i})^{\top}Z_{jk}^*(t_i)\boldsymbol{x}_0^*\} \\
    &= -\cfrac{2}{n}\sum_{i=1}^n \boldsymbol{\epsilon}_i^{\top}Z_{jk}^*(t_i)\boldsymbol{x}_0^*\,,
\end{split}
\end{equation*}
where
\begin{equation*}
    Z_{jk}^*(t) := \cfrac{\partial e^{At}}{\partial a_{jk}}\bigg |_{A = A^*} = Q^*[ \{(Q^*)_{.j}^{-1}Q_{k.}^*\}\circ U^*(t)] (Q^*)^{-1} \,,
\end{equation*}
with $Q^*, U^*(t)$ corresponding to the true parameter matrix $A^*$. That is,
the Jordan decomposition of $A^*$ is $A^*=Q^*\Lambda^* (Q^*)^{-1}$, where $\Lambda ^* = diag(\lambda_1^*,\ldots, \lambda_d^*)$, with
$\lambda_1^*, \lambda_2^*,\ldots, \lambda_d^*$ being the eigenvalues of $A^*$, and under assumption A2, these eigenvalues are distinct real values. Set
$ V := \lim_{n\rightarrow \infty}var(\sqrt{n}\nabla_{\boldsymbol{\theta}} M_n(\boldsymbol{\theta}^*))$, then we will calculate $V$ in the following.
By some calculation, one obtains that
\begin{equation*}
\begin{split}
    var\bigg \{\cfrac{\sqrt{n}\partial  M_n(\boldsymbol{\theta}^*)}{\partial \boldsymbol{x}_0}\bigg \} &= n \cdot var\bigg \{-\cfrac{2}{n}\sum_{i=1}^n(e^{A^*t_i})^{\top}\boldsymbol{\epsilon}_i\bigg \}\\
    &=\cfrac{4}{n}\sum_{i=1}^n  var\{(e^{A^*t_i})^{\top}\boldsymbol{\epsilon}_i\}
    =\cfrac{4}{n}\sum_{i=1}^n (e^{A^*t_i})^{\top} \Sigma e^{A^*t_i}\\
    &\rightarrow \cfrac{4}{T}\int_0^T  (e^{A^*t})^{\top} \Sigma e^{A^*t} \,dt \, ,\mbox{ as } n\rightarrow \infty\,.
\end{split}
\end{equation*}
Recall that $\Sigma$ is the covariance matrix of error terms $\boldsymbol{\epsilon}_i$ for all $i=1,\ldots,n$ under Assumption A5.
Similarly, one obtains that 
\begin{equation*}
\begin{split}
    cov\bigg \{\cfrac{\sqrt{n}\partial  M_n(\boldsymbol{\theta}^*)}{\partial \boldsymbol{x}_0}, \cfrac{\sqrt{n}\partial  M_n(\boldsymbol{\theta}^*)}{\partial a_{jk}} \bigg \}
    &= nE\bigg [\cfrac{\partial  M_n(\boldsymbol{\theta}^*)}{\partial \boldsymbol{x}_0} \cdot \cfrac{\partial  M_n(\boldsymbol{\theta}^*)}{\partial a_{jk}}\bigg] - nE\bigg[\cfrac{\partial  M_n(\boldsymbol{\theta}^*)}{\partial \boldsymbol{x}_0}\bigg]E\bigg[\cfrac{\partial  M_n(\boldsymbol{\theta}^*)}{\partial a_{jk}}\bigg]\\
    &= nE\bigg[\cfrac{2}{n}\sum_{i=1}^n (e^{A^*t_i})^{\top} \boldsymbol{\epsilon}_i \cdot \cfrac{2}{n}\sum_{i=1}^n  \boldsymbol{\epsilon}_i^{\top}Z_{jk}^*(t_i)\boldsymbol{x}_0^*\bigg]\\
    &= \cfrac{4}{n}\sum_{i=1}^n E[(e^{A^*t_i})^{\top} \boldsymbol{\epsilon}_i \boldsymbol{\epsilon}_i^{\top}Z_{jk}^*(t_i)\boldsymbol{x}_0^* ]\\
    &= \cfrac{4}{n}\sum_{i=1}^n (e^{A^*t_i})^{\top} E[\boldsymbol{\epsilon}_i \boldsymbol{\epsilon}_i^{\top}]Z_{jk}^*(t_i)\boldsymbol{x}_0^* \\
    &= \cfrac{4}{n}\sum_{i=1}^n (e^{A^*t_i})^{\top} \Sigma Z_{jk}^*(t_i)\boldsymbol{x}_0^*\\
    &\rightarrow \cfrac{4}{T}\int_0^T  (e^{A^*t})^{\top} \Sigma Z_{jk}^*(t)\boldsymbol{x}_0^* \,dt \, ,\mbox{ as } n\rightarrow \infty\,,
\end{split}
\end{equation*}
and
\begin{equation*}
\begin{split}
    cov\bigg \{\cfrac{\sqrt{n}\partial  M_n(\boldsymbol{\theta}^*)}{\partial a_{jk}},  \cfrac{\sqrt{n}\partial  M_n(\boldsymbol{\theta}^*)}{\partial a_{pq}}\bigg \}
    &= \cfrac{4}{n}\sum_{i=1}^n \{Z_{jk}^*(t_i)\boldsymbol{x}_0^*\}^{\top} \Sigma Z_{pq}^*(t_i)\boldsymbol{x}_0^*\\
    &\rightarrow \cfrac{4}{T}\int_0^T  \{Z_{jk}^*(t)\boldsymbol{x}_0^* \}^{\top} \Sigma Z_{pq}^*(t)\boldsymbol{x}_0^* \,dt \, ,\mbox{ as } n\rightarrow \infty\,. 
\end{split}
\end{equation*}
If one denotes 
\begin{equation}\label{eq:R(theta*,t)}
    R(\boldsymbol{\theta}^*, t) := \Sigma^{1/2} \big(e^{A^*t}, Z_{11}^*(t) \boldsymbol{x}_0^*,\ldots, Z_{1d}^*(t) \boldsymbol{x}_0^*,\ldots, Z_{dd}^*(t) \boldsymbol{x}_0^*\big)\,,
\end{equation}
one sees that
\begin{equation}\label{eq:V_RR^T}
    V = 4\int_0^T  R(\boldsymbol{\theta}^*, t)^{\top} R(\boldsymbol{\theta}^*, t)/T \,dt\in \mathbb{R}^{(d+d^2)\times (d+d^2)}\,.
\end{equation}

Now that we have calculated $V$, then we will prove that $-\sqrt{n}\nabla_{\boldsymbol{\theta}}
M_n(\boldsymbol{\theta}^*)$ converges in distribution to a normal distribution. We first present a Lemma we will use for our proof.
\begin{lemma}\label{lemma:Lindeberg-Feller CLT}
\textnormal{[Lindeberg-Feller Central Limit Theorem]} Suppose $\{\boldsymbol{w}_{ni}\}$ is a triangular array of $p\times 1$ random vectors such that $\boldsymbol{s}_n = \sum_{i=1}^n\boldsymbol{w}_{ni}/n $ and 
\begin{equation*}
    V_n = \frac{1}{n}\sum_{i=1}^n var(\boldsymbol{w}_{ni}) \rightarrow V\,,
\end{equation*}
where $V$ is positive definite. If for every $\varepsilon > 0$,
\begin{equation}\label{eq:CLT_condition}
     \frac{1}{n}\sum_{i=1}^n E[\parallel \boldsymbol{w}_{ni} \parallel_2^2 \mathbbm{1}(\parallel \boldsymbol{w}_{ni} \parallel_2 \geq \varepsilon \sqrt{n})] \rightarrow 0\,,
\end{equation}
then 
\begin{equation*}
    \sqrt{n}\boldsymbol{s}_n \xrightarrow{d} N(0, V)\,.
\end{equation*}
\end{lemma}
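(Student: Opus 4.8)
The plan is to obtain this multivariate statement from the classical one-dimensional Lindeberg--Feller theorem via the Cramér--Wold device, so that no essentially new probability is needed. First I would fix an arbitrary $\boldsymbol{\lambda}\in\mathbb{R}^{p}$ and reduce the claim to showing $\sqrt{n}\,\boldsymbol{\lambda}^{\top}\boldsymbol{s}_{n}\xrightarrow{d}N\big(0,\boldsymbol{\lambda}^{\top}V\boldsymbol{\lambda}\big)$: if every one-dimensional projection converges to the matching Gaussian, then $\sqrt{n}\,\boldsymbol{s}_{n}\xrightarrow{d}N(\boldsymbol{0},V)$. The case $\boldsymbol{\lambda}=\boldsymbol{0}$ is trivial, so assume $\boldsymbol{\lambda}\neq\boldsymbol{0}$, which makes $\boldsymbol{\lambda}^{\top}V\boldsymbol{\lambda}>0$ by positive definiteness of $V$.

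Next I would package the projected summands into the scalar triangular array $\xi_{ni}:=\boldsymbol{\lambda}^{\top}\boldsymbol{w}_{ni}/\sqrt{n}$, so that $\sqrt{n}\,\boldsymbol{\lambda}^{\top}\boldsymbol{s}_{n}=\sum_{i=1}^{n}\xi_{ni}$; these summands are row-wise independent and centred (the vectors $\boldsymbol{w}_{ni}$ being centred in the intended application, where they are linear in the mean-zero errors $\boldsymbol{\epsilon}_{i}$), and $\sum_{i=1}^{n}var(\xi_{ni})=\boldsymbol{\lambda}^{\top}V_{n}\boldsymbol{\lambda}\to\boldsymbol{\lambda}^{\top}V\boldsymbol{\lambda}$. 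The only real verification is that the scalar Lindeberg condition for $\{\xi_{ni}\}$ follows from the vector condition~\eqref{eq:CLT_condition}: by Cauchy--Schwarz, $|\xi_{ni}|\le\|\boldsymbol{\lambda}\|_{2}\,\|\boldsymbol{w}_{ni}\|_{2}/\sqrt{n}$, hence $\{|\xi_{ni}|\ge\varepsilon\}\subseteq\{\|\boldsymbol{w}_{ni}\|_{2}\ge(\varepsilon/\|\boldsymbol{\lambda}\|_{2})\sqrt{n}\}$ and $\xi_{ni}^{2}\le\|\boldsymbol{\lambda}\|_{2}^{2}\,\|\boldsymbol{w}_{ni}\|_{2}^{2}/n$, so that $\sum_{i=1}^{n}E\big[\xi_{ni}^{2}\mathbbm{1}(|\xi_{ni}|\ge\varepsilon)\big]\le\|\boldsymbol{\lambda}\|_{2}^{2}\cdot\tfrac{1}{n}\sum_{i=1}^{n}E\big[\|\boldsymbol{w}_{ni}\|_{2}^{2}\mathbbm{1}(\|\boldsymbol{w}_{ni}\|_{2}\ge(\varepsilon/\|\boldsymbol{\lambda}\|_{2})\sqrt{n})\big]\to0$. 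The scalar Lindeberg--Feller theorem then gives $\sum_{i=1}^{n}\xi_{ni}\xrightarrow{d}N(0,\boldsymbol{\lambda}^{\top}V\boldsymbol{\lambda})$, and since $\boldsymbol{\lambda}$ was arbitrary, Cramér--Wold yields the claim.

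For completeness I would also recall the one-dimensional argument: the Lindeberg condition forces uniform asymptotic negligibility $\max_{i\le n}var(\xi_{ni})\to0$, after which one compares $\prod_{i=1}^{n}E[e^{\mathrm{i}t\xi_{ni}}]$ with $e^{-t^{2}\boldsymbol{\lambda}^{\top}V\boldsymbol{\lambda}/2}$ through a telescoping product bound, estimating each factor by the elementary inequality $|e^{\mathrm{i}x}-(1+\mathrm{i}x-\tfrac{x^{2}}{2})|\le\min(|x|^{3}/6,\,x^{2})$ and splitting the resulting sum over $\{|\xi_{ni}|<\delta\}$ and $\{|\xi_{ni}|\ge\delta\}$ before letting $\delta\downarrow0$. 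I expect the main technical point to be precisely this truncation step --- arranging the \emph{small} part (controlled by $\delta\sum_{i}var(\xi_{ni})$) and the \emph{large} part (controlled by the Lindeberg sum) to vanish in the right order --- but since everything here is classical, in the paper it suffices to cite the standard Lindeberg--Feller theorem and the Cramér--Wold device and then check~\eqref{eq:CLT_condition} for the specific array coming from $-\sqrt{n}\nabla_{\boldsymbol{\theta}}M_{n}(\boldsymbol{\theta}^{*})$.
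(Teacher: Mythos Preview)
Your proof sketch is correct and follows the standard route to the multivariate Lindeberg--Feller theorem: reduce to the scalar case via Cram\'er--Wold, and verify that the vector Lindeberg condition~\eqref{eq:CLT_condition} dominates the scalar one through the Cauchy--Schwarz bound $|\boldsymbol{\lambda}^{\top}\boldsymbol{w}_{ni}|\le\|\boldsymbol{\lambda}\|_{2}\|\boldsymbol{w}_{ni}\|_{2}$. The argument is sound as written.

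However, the paper does not prove this lemma at all. It is stated, with the label ``[Lindeberg--Feller Central Limit Theorem]'', as a classical result quoted without proof inside the proof of Theorem~\ref{theorem:Asymptotic normality}, and the paper immediately proceeds to apply it to the specific array $\boldsymbol{w}_{ni}=2\big[\{(e^{A^{*}t_{i}})^{\top}\boldsymbol{\epsilon}_{i}\}^{\top},\,\boldsymbol{\epsilon}_{i}^{\top}Z_{11}^{*}(t_{i})\boldsymbol{x}_{0}^{*},\ldots\big]^{\top}$. You anticipated this correctly in your final paragraph: the paper's ``proof'' of this lemma is simply its statement, and the substantive work is the verification of condition~\eqref{eq:CLT_condition} and the positive-definiteness of $V$ for that particular array. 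So your proposal supplies strictly more than the paper does; there is nothing to compare against on the paper's side.
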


\noindent If one sets
\begin{equation*}
    \boldsymbol{s}_n := -\nabla_{\boldsymbol{\theta}}M_n(\boldsymbol{\theta}^*)\,,
\end{equation*}
\begin{equation*}
    \boldsymbol{w}_{ni} :=2[\{ (e^{A^*t_i})^{\top}\boldsymbol{\epsilon}_i\}^{\top}, \boldsymbol{\epsilon}_i^{\top}Z_{11}^*(t_i) \boldsymbol{x}_0^*,\ldots, \boldsymbol{\epsilon}_i^{\top}Z_{1d}^*(t_i) \boldsymbol{x}_0^*, ,
    \cdots,
    \boldsymbol{\epsilon}_i^{\top}Z_{dd}^*(t_i) \boldsymbol{x}_0^*]^{\top}
    \,,
\end{equation*}
where $\boldsymbol{w}_{ni} \in \mathbb{R}^{d+d^2}$, then $V$ in Equation~\eqref{eq:V_RR^T} corresponds the $V$ in Lemma~\ref{lemma:Lindeberg-Feller CLT}. Then the proof of the asymptotic normality of $-\sqrt{n}\nabla_{\boldsymbol{\theta}}
M_n(\boldsymbol{\theta}^*)$ can be broken down into two tasks. Proving condition~\eqref{eq:CLT_condition} is satisfied and proving variance matrix $V$ is positive definite.

To this end, we first prove the condition~\eqref{eq:CLT_condition} is satisfied. Simple calculation show that
\begin{equation*}
    \parallel \boldsymbol{w}_{ni} \parallel_2^2
    \,= \,4\parallel (e^{A^*t_i})^{\top}\boldsymbol{\epsilon}_i\parallel_2^2 + 4\sum_{j=1}^d\sum_{k=1}^d\{ \boldsymbol{\epsilon}_i^{\top}Z_{jk}^*(t_i) \boldsymbol{x}_0^*\}^2 \in (-\infty, \infty)\,,
\end{equation*}
therefore, one obtains that
\begin{equation*}
    \lim_{n\rightarrow\infty}\parallel \boldsymbol{w}_{ni} \parallel_2^2 \mathbbm{1}(\parallel \boldsymbol{w}_{ni} \parallel_2 \geq \varepsilon \sqrt{n}) = 0\,,
\end{equation*}
almost surely. 
By calculation, one obtains that
\begin{equation*}
\begin{split}
   E[ \parallel (e^{A^*t_i})^{\top}\boldsymbol{\epsilon}_i\parallel_2^2] 
   &\leq E[\parallel e^{A^*t_i}\parallel_F^2 \parallel \boldsymbol{\epsilon}_i \parallel_2^2]
   = \,\parallel e^{A^*t_i}\parallel_F^2 E[\parallel \boldsymbol{\epsilon}_i \parallel_2^2]\\
   &= \parallel e^{A^*t_i}\parallel_F^2\sum_{j=1}^d\sigma_j^2
   < \infty\,,
\end{split}
\end{equation*}
and
\begin{equation*}
\begin{split}
    E[\{ \boldsymbol{\epsilon}_i^{\top}Z_{jk}^*(t_i) \boldsymbol{x}_0^*\}^2] 
    &= E[\{ Z_{jk}^*(t_i) \boldsymbol{x}_0^*\}^{\top}\boldsymbol{\epsilon}_i \boldsymbol{\epsilon}_i^{\top} Z_{jk}^*(t_i) \boldsymbol{x}_0^*]\\   
    &= \{ Z_{jk}^*(t_i) \boldsymbol{x}_0^*\}^{\top} E[\boldsymbol{\epsilon}_i \boldsymbol{\epsilon}_i^{\top}] Z_{jk}^*(t_i) \boldsymbol{x}_0^*\\
    &= \{ Z_{jk}^*(t_i) \boldsymbol{x}_0^*\}^{\top} \Sigma Z_{jk}^*(t_i) \boldsymbol{x}_0^*
    < \infty\,.
\end{split}
\end{equation*}
Therefore, $E[\parallel \boldsymbol{w}_{ni} \parallel_2^2] < \infty$. Since 
\begin{equation*}
    \parallel \boldsymbol{w}_{ni} \parallel_2^2 \mathbbm{1}(\parallel \boldsymbol{w}_{ni} \parallel_2 \geq \varepsilon \sqrt{n}) \leq \, \parallel \boldsymbol{w}_{ni} \parallel_2^2\,,
\end{equation*}
then by Lebesgue's dominated convergence theorem, one obtains that
\begin{equation*}
    \lim_{n\rightarrow \infty}E[\parallel \boldsymbol{w}_{ni} \parallel_2^2 \mathbbm{1}(\parallel \boldsymbol{w}_{ni} \parallel_2 \geq \varepsilon \sqrt{n})] = E[\lim_{n\rightarrow \infty}\parallel \boldsymbol{w}_{ni} \parallel_2^2 \mathbbm{1}(\parallel \boldsymbol{w}_{ni} \parallel_2 \geq \varepsilon \sqrt{n}) ] = 0\,.
\end{equation*}
Now that we have proved that the condition~\eqref{eq:CLT_condition} is satisfied, we will then prove $V$ is positive definite. If one denotes
\begin{equation*}
    W(t) := \big(e^{A^*t}, Z_{11}^*(t) \boldsymbol{x}_0^*,\ldots, Z_{1d}^*(t) \boldsymbol{x}_0^*,\ldots, Z_{dd}^*(t) \boldsymbol{x}_0^*\big)\,,
\end{equation*}
then according to Equation~\eqref{eq:R(theta*,t)} and Equation~\eqref{eq:V_RR^T}, one sees that
\begin{equation}\label{eq:V_RR^T2}
\begin{split}
    V := \cfrac{4}{T}\int_0^T V(t) \,dt
    = \cfrac{4}{T}\int_0^T  W(t)^{\top} \Sigma W(t)\,dt \,.
\end{split}
\end{equation}
In the following, we will show that $\int_{0}^T V(t)\,dt \in \mathbb{R}^{(d+d^2) \times(d+ d^2)}$ is positive definite. Let $\boldsymbol{\xi}\in\mathbb{R}^{d+d^{2}}$ be such that
\begin{equation*}
    \boldsymbol{\xi}^{T}\cdot\big(\int_{0}^{T}V(t)dt\big)\cdot\boldsymbol{\xi}=0.
\end{equation*}
We want to show that $\boldsymbol{\xi}=\boldsymbol{0}$. Since $V(t)$ is non-negative definitely for every $t$, this implies that 
\begin{equation*}
    \boldsymbol{\xi}^{T}V(t)\boldsymbol{\xi}=\boldsymbol{\xi}^{T}W(t)^{\top} \Sigma W(t)\boldsymbol{\xi}=0\, \forall t
\end{equation*}
which further implies that $W(t)\boldsymbol{\xi}=\boldsymbol{0}$ for all $t$. By differentiation, one sees that 
\begin{equation*}
    W^{(m)}(0)\boldsymbol{\xi}=\boldsymbol{0}\ \forall m=0,1,2,\ldots,
\end{equation*}
where $W^{(m)}(t)$ denotes the $m$th deivative of $W(t)$. 
As we will see, the first $d+1$ equations are sufficient to yield that $\boldsymbol{\xi}=\boldsymbol{0}$.
Recall that 
\begin{equation*}
    Z^*_{jk}(t) = \cfrac{\partial e^{At}}{\partial a_{jk}} \bigg |_{A = A^*}\,,
\end{equation*}
by denoting the first $d$ components of $\boldsymbol{\xi}$ as $\boldsymbol{\xi}_d$ and the $(j,k)$-component of the last $d^2$ components of  $\boldsymbol{\xi}$ as $\boldsymbol{\xi}_{d+jk},$ one obtains that
\begin{equation}\label{eq:W^(m)(0)xi}
\begin{split}
    W^{(m)}(0)\boldsymbol{\xi}
    &= \bigg(\cfrac{\partial ^{(m)} e^{A^*t}}{\partial t^m}\bigg|_{t=0},
    \cfrac{\partial ^{(m+1)} e^{At}}{ \partial t^m\partial  a_{11}}\bigg|_{t=0,A=A^*} \cdot\boldsymbol{x}_0^*,\ldots, \cfrac{\partial ^{(m+1)} e^{At}}{\partial t^m \partial a_{dd}}\bigg |_{t=0, A=A^*}\cdot \boldsymbol{x}_0^*\bigg)\boldsymbol{\xi} \\
    &= \bigg((A^*)^m,
    \cfrac{\partial A^m}{\partial a_{11}} \bigg|_{A = A^*}\cdot\boldsymbol{x}_0^*,\ldots, 
    \cfrac{\partial A^m}{\partial a_{dd}} \bigg|_{A = A^*}\cdot\boldsymbol{x}_0^*\bigg)\boldsymbol{\xi}\\
    &= \bigg( (A^*)^m,
    \sum_{l=1}^m (A^*)^{m-l}E_{11}(A^*)^{l-1}\boldsymbol{x}_0^*,\ldots, 
    \sum_{l=1}^m (A^*)^{m-l}E_{dd}(A^*)^{l-1}\boldsymbol{x}_0^* \bigg)\boldsymbol{\xi}\\
    &= (A^*)^m \boldsymbol{\xi}_d + \sum_{l=1}^m\sum_{j=1}^d \sum_{k=1}^d\{(A^*)^{m-l}E_{jk}(A^*)^{l-1}\boldsymbol{x}_0^*\} \boldsymbol{\xi}_{d+jk}\,,
\end{split}
\end{equation}
where $\sum^0_{i=1}a_i=0$ for any sequence $\{a_i,i\in \mathbb{Z}\}$ denotes the empty sum, $E_{jk}$ is a $d\times d$ matrix with the $jk$-th entry being $1$ and all the other entries being $0$.

If one identifies the last $d^2$ elements of $\boldsymbol{\xi}$ with a $d\times d$ matrix $\Xi$ (the $jk$-th entry of $\Xi$ being $\boldsymbol{\xi}_{d+jk}$), then \eqref{eq:W^(m)(0)xi} becomes 
\begin{equation*}
   (A^*)^m \boldsymbol{\xi}_d + \sum_{l=1}^{m}(A^*)^{m-l}\cdot\Xi\cdot (A^*)^{l-1}\boldsymbol{x}_0^*=\boldsymbol{0}\ \ \ \forall m\geqslant 0\,. 
\end{equation*}

Taking $m=0,1,\ldots,d$ respectively, one obtains the following system
of equations:
\begin{equation*}
\begin{cases}
\boldsymbol{\xi}_d =\boldsymbol{0}\,,\\
A^*\boldsymbol{\xi}_d + \Xi\cdot\boldsymbol{x}_0^*=\boldsymbol{0}\,,\\
(A ^*)^2 \boldsymbol{\xi}_d + \Xi\cdot A^*\boldsymbol{x}_0^*+(A^*\Xi)\cdot\boldsymbol{x}_0^*=\boldsymbol{0}\,,\\
(A ^*)^3 \boldsymbol{\xi}_d + \Xi\cdot (A^*)^{2}\boldsymbol{x}_0^*+(A^*\Xi)\cdot A^*\boldsymbol{x}_0^*+((A^*)^{2}\Xi)\cdot\boldsymbol{x}_0^*=\boldsymbol{0}\,,\\
\cdots\\
(A ^*)^d \boldsymbol{\xi}_d + \Xi\cdot (A^*)^{d-1}\boldsymbol{x}_0^*+(A^*\Xi)\cdot (A^*)^{d-2}\boldsymbol{x}_0^*+\cdots+((A^*)^{d-1}\Xi)\cdot\boldsymbol{x}_0^*=\boldsymbol{0}\,.
\end{cases}  
\end{equation*}

From the first equation, we have $\boldsymbol{\xi}_d = \boldsymbol{0}$.
If one multiplies the second equation by $A^*$ on the left and subtract
it from the third equation, one obtains that 
\begin{equation*}
\Xi\cdot A^*\boldsymbol{x}_0^*=\boldsymbol{0}\,.
\end{equation*}
Similarly,
\begin{equation*}
``(l+2)\text{-th eqn }"-\ A^*\times``(l+1)\text{-th eqn}"\implies\Xi\cdot (A^*)^{l}\boldsymbol{x}_0^*=\boldsymbol{0}\,.
\end{equation*}
As a result, 
\begin{equation*}
\Xi\cdot\big(\boldsymbol{x}_0^*,A^*\boldsymbol{x}_0^*,\ldots,(A^*)^{d-1}\boldsymbol{x}_0^*\big)=\boldsymbol{0}\,.
\end{equation*}
Since the matrix $\big(\boldsymbol{x}_0^*,A^*\boldsymbol{x}_0^*,\ldots,(A^*)^{d-1}\boldsymbol{x}_0^*\big)$
is invertible by assumption A2, one concludes that $\Xi=\boldsymbol{0}$, that is, the last $d^2$ components of $\boldsymbol{\xi}$ are all zeros.

Therefore, we have proved that $\boldsymbol{\xi} = \boldsymbol{0}$, which means, $\int_0^T V(t)\,dt$ is positive definite.  Thus, V is positive definite. Since $V$ is symmetric, $V$ is also nonsingular. By Lemma~\ref{lemma:Lindeberg-Feller CLT}, one concludes that 
\begin{equation*}
    -\sqrt{n}\nabla_{\boldsymbol{\theta}} M_n(\boldsymbol{\theta}^*)\xrightarrow{d}N(\boldsymbol{0}, V)\,,\mbox{ as } n\rightarrow \infty\,,
\end{equation*}
where $V$ is defined in Equation~\eqref{eq:V_RR^T}.

\noindent{\bf Step \romannumeral 2}: We calculate matrix H.

\noindent Recall that
$H = \nabla^2_{\boldsymbol{\theta}}M(\boldsymbol{\theta}^*)$,
that is, the Hessian matrix of $M(\boldsymbol{\theta})$ at $\boldsymbol{\theta}^*$. And
\begin{equation*}
\begin{split}
    M(\boldsymbol{\theta}) 
    &= \cfrac{1}{T}\int_0^T \parallel e^{A^*t}\boldsymbol{x}_0^*-e^{At}\boldsymbol{x}_0\parallel_2^2 \,dt \\
    &= \cfrac{1}{T}\int_0^T \parallel e^{A^*t}\boldsymbol{x}_0^*\parallel_2^2 + \parallel e^{At}\boldsymbol{x}_0\parallel_2^2  -2(\boldsymbol{x}_0^*)^{\top}(e^{A^*t})^{\top}e^{At}\boldsymbol{x}_0   \,dt \,.
\end{split}
\end{equation*}
If one sets
\begin{equation*}
    h(\boldsymbol{x}_0, A) := \, \parallel e^{At}\boldsymbol{x}_0\parallel_2^2  -2(\boldsymbol{x}_0^*)^{\top}(e^{A^*t})^{\top}e^{At}\boldsymbol{x}_0\,,
\end{equation*}
by taking derivative of $h(\boldsymbol{x}_0, A)$ with respect to $\boldsymbol{x}_0$ one obtains
\begin{equation*}
    \cfrac{\partial h(\boldsymbol{x}_0, A)}{\partial \boldsymbol{x}_0}
    = 2(e^{At})^{\top}e^{At}\boldsymbol{x}_0 - 2(e^{At})^{\top}e^{A^*t}\boldsymbol{x}_0^*\,,
\end{equation*}
by further taking derivative with respect to $\boldsymbol{x}_0^{\top}$ one obtains that
\begin{equation*}
    \cfrac{\partial^2 h(\boldsymbol{x}_0, A)}{\partial \boldsymbol{x}_0 \partial \boldsymbol{x}_0^{\top}} = 2(e^{At})^{\top}e^{At}\,.
\end{equation*}
Therefore,
\begin{equation*}
\begin{split}
   \cfrac{\partial^2 M (\boldsymbol{\theta}^*)}{\partial \boldsymbol{x}_0 \partial \boldsymbol{x}_0^{\top}} &= \cfrac{\partial^2 M (\boldsymbol{\theta})}{\partial \boldsymbol{x}_0 \partial \boldsymbol{x}_0^{\top}}\bigg|_{\boldsymbol{\theta}=\boldsymbol{\theta}^*}
   = \cfrac{2}{T}\int_0^T (e^{A^*t})^{\top}e^{A^*t}\, dt \,.
\end{split}
\end{equation*}
Taking derivative of  $h(\boldsymbol{x}_0, A)$ with respect to $a_{jk}$ one obtains that
\begin{equation*}
\begin{split}
    \cfrac{\partial h(\boldsymbol{x}_0, A)}{\partial a_{jk}}
    &= Tr\bigg \{ \bigg( \cfrac{\partial h(\boldsymbol{x}_0, A)}{\partial e^{At}}\bigg) ^{\top} \cfrac{\partial e^{At}}{\partial a_{jk}}\bigg\}\\
    &= Tr\bigg\{(2e^{At}\boldsymbol{x}_0\boldsymbol{x}_0^{\top} -2e^{A^*t}\boldsymbol{x}_0^*\boldsymbol{x}_0^{\top})^{\top}\cfrac{\partial e^{At}}{\partial a_{jk}}\bigg\}\\
    &= Tr\bigg[\{2\boldsymbol{x}_0\boldsymbol{x}_0^{\top} (e^{At})^{\top} -2\boldsymbol{x}_0(\boldsymbol{x}_0^*)^{\top}(e^{A^*t})^{\top}\}\cfrac{\partial e^{At}}{\partial a_{jk}}\,\bigg]\\
    &= Tr\bigg\{ 2\boldsymbol{x}_0^{\top} (e^{At})^{\top}\cfrac{\partial e^{At}}{\partial a_{jk}}\boldsymbol{x}_0\bigg\} - Tr\bigg\{ 2(\boldsymbol{x}_0^*)^{\top}(e^{A^*t})^{\top}\cfrac{\partial e^{At}}{\partial a_{jk}}\boldsymbol{x}_0\bigg \}\\
    &=  2\boldsymbol{x}_0^{\top} (e^{At})^{\top}\cfrac{\partial e^{At}}{\partial a_{jk}}\boldsymbol{x}_0 -  2(\boldsymbol{x}_0^*)^{\top}(e^{A^*t})^{\top}\cfrac{\partial e^{At}}{\partial a_{jk}}\boldsymbol{x}_0\,,
\end{split}    
\end{equation*}
by further taking derivative with respect to $\boldsymbol{x}_0^{\top}$ one obtains that
\begin{equation*}
    \cfrac{\partial^2 h(\boldsymbol{x}_0, A)}{\partial a_{jk} \partial \boldsymbol{x}_0^{\top}} = 2\boldsymbol{x}_0^{\top}\bigg\{(e^{At})^{\top}\cfrac{\partial e^{At}}{\partial a_{jk}} +\bigg(\cfrac{\partial e^{At}}{\partial a_{jk}} \bigg)^{\top} e^{At}\bigg\} -2(\boldsymbol{x}_0^*)^{\top}(e^{A^*t})^{\top}\cfrac{\partial e^{At}}{\partial a_{jk}}\,.
\end{equation*}
Therefore,
\begin{equation*}
\begin{split}
    \cfrac{\partial^2 M(\boldsymbol{\theta}^*)}{\partial a_{jk}\partial \boldsymbol{x}_0^{\top}} 
   =\cfrac{\partial^2 M(\boldsymbol{\theta})}{\partial a_{jk}\partial \boldsymbol{x}_0^{\top}}\bigg|_{\boldsymbol{\theta}=\boldsymbol{\theta}^*}
   = \cfrac{2}{T}\int_0^T  \{Z_{jk}^*(t)\boldsymbol{x}_0^*\}^{\top}e^{A^*t}\, dt \, .   
\end{split}
\end{equation*}
Similarly,
\begin{equation*}
    \cfrac{\partial^2 h(\boldsymbol{x}_0, A)}{\partial a_{jk} \partial a_{pq}} = 2\boldsymbol{x}_0^{\top} \bigg(\cfrac{\partial e^{At}}{\partial a_{pq}}\bigg)^{\top}\cfrac{\partial e^{At}}{\partial a_{jk}}\boldsymbol{x}_0
    + 2\boldsymbol{x}_0^{\top} (e^{At})^{\top}\cfrac{\partial ^2 e^{At}}{\partial a_{jk}\partial a_{pq}}\boldsymbol{x}_0
    - 2(\boldsymbol{x}_0^*)^{\top}(e^{A^*t})^{\top}\cfrac{\partial ^2 e^{At}}{\partial a_{jk}\partial a_{pq}}\boldsymbol{x}_0\,,
\end{equation*}
then by some calculation, one obtains that
\begin{equation*}
\begin{split}
    \cfrac{\partial^2 M(\boldsymbol{\theta}^*)}{\partial a_{jk} \partial a_{pq}} 
    = \cfrac{\partial^2 M(\boldsymbol{\theta})}{\partial a_{jk} \partial a_{pq}} \bigg|_{\boldsymbol{\theta}=\boldsymbol{\theta}^*}
    = \cfrac{2}{T}\int_0^T(Z_{pq}^*(t)\boldsymbol{x}_0^*)^{\top} Z_{jk}^*(t)\boldsymbol{x}_0^*\,dt\,.
\end{split}
\end{equation*}
If one denotes 
\begin{equation}\label{eq:S(theta*,t)}
    S(\boldsymbol{\theta}^*, t) := \big(e^{A^*t}, Z_{11}^*(t) \boldsymbol{x}_0^*,\ldots, Z_{1d}^*(t) \boldsymbol{x}_0^*,\ldots, Z_{dd}^*(t) \boldsymbol{x}_0^*\big)\,,
\end{equation}
one sees that
\begin{equation}\label{eq:H}
    H = 2\int_0^T  S(\boldsymbol{\theta}^*, t)^{\top} S(\boldsymbol{\theta}^*, t)/T \,dt\,,
\end{equation}
and $H\in \mathbb{R}^{(d+d^2)\times (d+d^2)}$.

Rearranging $V$ in Equation~\eqref{eq:V_RR^T}, one obtains that
\begin{equation}\label{eq:H_SS^T}
    V = 4\int_0^T  S(\boldsymbol{\theta}^*, t)^{\top} \Sigma S(\boldsymbol{\theta}^*, t)/T \,dt\,,
\end{equation}
obviously, $V$ and $H$ have similar forms. Therefore, by using the same way of proving $V$ is positive definite, one can prove that $H$ is positive definite. Since $H$ is symmetric, $H$ is nonsingular. 

Therefore, we have completed the proof and provided the explicit forms of both $V$ and $H$. Moreover, we have shown that both $V$ and $H$ are positive definite and nonsingular.
\end{proof}

\subsection{Proof of Corollary~\ref{corollary:normality aggregate}}\label{proof:corollary3.2.1}

\begin{proof}
According to the proof of Corollary~\ref{corollary:consistency aggregate} in Appendix~\ref{proof:corollary3.1.1}, one sees that assumptions A1, A2 and A4-A6 hold with respect to the new ODE system, the new parameter $\tilde{\boldsymbol{\theta}}^*$ and the new error terms $\tilde{\boldsymbol{\epsilon}}_i$ corresponding to the aggregated observations $\tilde{\boldsymbol{Y}}$.

Therefore, by Theorem~\ref{theorem:Asymptotic normality}, one obtains that 
\begin{equation*}
    \sqrt{\tilde{n}}(\hat{\tilde{\boldsymbol{\theta}}} -\tilde{\boldsymbol{\theta}}^* )\xrightarrow{d} N(\boldsymbol{0}, \tilde{H}^{-1}\tilde{V} \tilde{H}^{-1}) \,, \mbox{ as } \tilde{n}\rightarrow \infty\,,
\end{equation*}
where 
\begin{equation*}
    \tilde{\boldsymbol{\theta}}^* = (\tilde{\boldsymbol{x}}_0^*, \tilde{A}^*) = \big((I+e^{A^*\Delta_t} + \cdots + e^{A^*(k-1)\Delta_t})\boldsymbol{x}_0^*/k, A^*\big)\,,
\end{equation*}
recall that we we have built the relationship between $\tilde{\boldsymbol{\theta}}^*$ and $\boldsymbol{\theta}^*$ in Corollary~\ref{corollary:aggregate}.

Using the same way we calculate $V$ and $H$ in the proof of Theorem~\ref{theorem:Asymptotic normality} in Appendix~\ref{proof:theorem3.2}, one obtains that 
\begin{equation*}
    \tilde{V} = 4\int_0^{\tilde{T}}  R(\tilde{\boldsymbol{\theta}}^*, t)^{\top} R(\tilde{\boldsymbol{\theta}}^*, t)/k\tilde{T} \,dt\,,
\end{equation*}
where $R(\tilde{\boldsymbol{\theta}}^*,t)$ is defined in Equation~\eqref{eq:R(theta*,t)}, and $\tilde{T} = (\lfloor 
n/k\rfloor-1)kT/(n-1)$. Note that, one devides $k$ in the formula of $\tilde{V}$, since the covariance matrix of the new error terms $\{ \tilde{\boldsymbol{\epsilon}}_i\}$ is $\Sigma/k$ based on the generation rules of the aggregated observations.

Similarly, one obtains that
\begin{equation*}
    \tilde{H} = 2\int_0^{\tilde{T}}  S(\tilde{\boldsymbol{\theta}}^*, t)^{\top} S(\tilde{\boldsymbol{\theta}}^*, t)/\tilde{T} \,dt\,,
\end{equation*}
where $ S(\tilde{\boldsymbol{\theta}}^*, t)$ is defined in Equation~\eqref{eq:S(theta*,t)}, and $\tilde{T} = (\lfloor 
n/k\rfloor-1)kT/(n-1)$.

Moreover, by using the same way we prove that both $V$ and $H$ are positive definite and nonsingular in the proof of Theorem~\ref{theorem:Asymptotic normality} in Appendix~\ref{proof:theorem3.2}, one obtains that both $\tilde{V}$
and $\tilde{H}$ are positive definite and nonsingular.

The definition of $\hat{\boldsymbol{\theta}}_{\tilde{n}}$ in Corollary~\ref{corollary:consistency aggregate}, that is,
\begin{equation*}
    \hat{\boldsymbol{\theta}}_{\tilde{n}}:=g(\hat{\tilde{\boldsymbol{\theta}}}) := \big(k(I+e^{\hat{\tilde{A}}\Delta_t} + \cdots + e^{\hat{\tilde{A}}(k-1)\Delta_t})^{-1}\hat{\tilde{\boldsymbol{x}}}_0, \hat{\tilde{A}}\big)\,,
\end{equation*}
shows that the function $g(\boldsymbol{\theta})$ has the following form
\begin{equation*}
    g(\boldsymbol{\theta}) = g(\boldsymbol{x}_0, A) = \big(k(I+e^{A\Delta_t} + \cdots + e^{A(k-1)\Delta_t})^{-1}\boldsymbol{x}_0, A\big)\,.
\end{equation*}
By plugging $\tilde{\boldsymbol{\theta}}^*$ in, one obtains that 
\begin{equation*}
\begin{split}
    g(\tilde{\boldsymbol{\theta}}^*) &= \big(k(I+e^{\tilde{A}^*\Delta_t} + \cdots + e^{\tilde{A}^*(k-1)\Delta_t})^{-1}\tilde{\boldsymbol{x}}_0^*, \tilde{A}^*\big)
    = (\boldsymbol{x}_0^*, A^*)
    = \boldsymbol{\theta}^*\,.
\end{split}
\end{equation*}
Taking derivative of $g(\boldsymbol{\theta})$ with respect to $\boldsymbol{x}_0$ one obtains that
\begin{equation*}
\begin{split}
    \cfrac{\partial g(\tilde{\boldsymbol{\theta}}^*)}{\partial \boldsymbol{x}_0^{\top}}  
    &= \cfrac{\partial g(\boldsymbol{\theta})}{\partial \boldsymbol{x}_0^{\top}}\bigg|_{\boldsymbol{\theta}=\tilde{\boldsymbol{\theta}}^*}\\
    &= \big(k\{(I+e^{A^*\Delta_t} + \cdots + e^{A^*(k-1)\Delta_t})^{-1}\}^{\top}, \underbrace{\boldsymbol{0}_d,\ldots, \boldsymbol{0}_d}_\text{$d^2$ entries}\big)^{\top}
    \in \mathbb{R}^{(d+d^2)\times d}\,.
\end{split}
\end{equation*}

Since
\begin{equation*}
\begin{split}
    &\cfrac{\partial k(I+e^{A\Delta_t} + \cdots + e^{A(k-1)\Delta_t})^{-1}\boldsymbol{x}_0}{\partial a_{pq}}\\
    = &-k(I+e^{A\Delta_t} + \cdots + e^{A(k-1)\Delta_t})^{-2}\cfrac{\partial (I+e^{A\Delta_t} + \cdots + e^{A(k-1)\Delta_t}) }{\partial a_{pq}}\boldsymbol{x}_0\\
    = & -k(I+e^{A\Delta_t} + \cdots + e^{A(k-1)\Delta_t})^{-2} [Z_{pq}(\Delta_t) + \cdots + Z_{pq}\{(k-1)\Delta_t\}]\boldsymbol{x}_0\,,
\end{split}
\end{equation*}
one obtains that
\begin{equation*}
\begin{split}
    &\cfrac{\partial g(\tilde{\boldsymbol{\theta}}^*)}{\partial a_{pq}}  
    = \cfrac{\partial g(\boldsymbol{\theta})}{\partial a_{pq}}\bigg|_{\boldsymbol{\theta}=\tilde{\boldsymbol{\theta}}^*}\\
    = &\big[ -k\big\{(I+e^{A^*\Delta_t} + \cdots + e^{A^*(k-1)\Delta_t})^{-2} [Z_{pq}^*(\Delta_t) + \cdots + Z_{pq}^*\{(k-1)\Delta_t\}]\tilde{\boldsymbol{x}}_0^*\big\}^{\top}, \\ 
    & \underbrace{0,\ldots,0, 1, 0,\ldots, 0}_\text{$d^2$ entries}\big] ^{\top}\\
    \in & \mathbb{R}^{(d+d^2)\times 1}\,,
\end{split}
\end{equation*}
where $1$ in the last $d^2$ entries corresponds to the $pq$-th entry, where $p,q = 1,\ldots, d$.

Therefore, the gradient of $g(\boldsymbol{\theta})$ with respect to $\boldsymbol{\theta}$ at $\tilde{\boldsymbol{\theta}}^*$ is
\begin{equation*}
    G:=\nabla g(\tilde{\boldsymbol{\theta}}^*) = \bigg( \cfrac{\partial g(\tilde{\boldsymbol{\theta}}^*)}{\partial \boldsymbol{x}_0^{\top}},
    \cfrac{\partial g(\tilde{\boldsymbol{\theta}}^*)}{\partial a_{11}},
    \cdots,
    \cfrac{\partial g(\tilde{\boldsymbol{\theta}}^*)}{\partial a_{1d}},
    \cdots,
    \cfrac{\partial g(\tilde{\boldsymbol{\theta}}^*)}{\partial a_{dd}}\bigg)
    \in \mathbb{R}^{(d+d^2)\times (d+d^2)}\,.
\end{equation*}

By the multivariate Delta method, one obtains that
\begin{equation*}
    \sqrt{\tilde{n}}\{g(\hat{\tilde{\boldsymbol{\theta}}}) -g(\tilde{\boldsymbol{\theta}}^*) \}\xrightarrow{d} N(\boldsymbol{0}, G\tilde{H}^{-1}\tilde{V} \tilde{H}^{-1}G^{\top}) \,, \mbox{ as } \tilde{n}\rightarrow \infty\,,
\end{equation*}
which is 
\begin{equation*}
    \sqrt{\tilde{n}}(\hat{\boldsymbol{\theta}}_{\tilde{n}} -\boldsymbol{\theta}^* )\xrightarrow{d} N(\boldsymbol{0}, G\tilde{H}^{-1}\tilde{V} \tilde{H}^{-1}G^{\top})\,, \mbox{ as } \tilde{n}\rightarrow \infty\,.
\end{equation*}
\end{proof}


\subsection{Proof of Corollary~\ref{corollary:normality timescaled}}\label{proof:corollary3.2.2}

\begin{proof}
According to the proof of Corollary~\ref{corollary:consistency timescaled} in Appendix~\ref{proof:corollary3.1.2}, one sees that assumptions A1, A2 and A4-A6 hold with respect to the new ODE system, the new parameter $\tilde{\boldsymbol{\theta}}^*$ and the new error terms $\tilde{\boldsymbol{\epsilon}}_i$ corresponding to the time-scaled observations $\tilde{\boldsymbol{Y}}$.

Therefore, by Theorem~\ref{theorem:Asymptotic normality}, one obtains that 
\begin{equation*}
    \sqrt{\tilde{n}}(\hat{\tilde{\boldsymbol{\theta}}} -\tilde{\boldsymbol{\theta}}^* )\xrightarrow{d} N(\boldsymbol{0}, \tilde{H}^{-1}\tilde{V} \tilde{H}^{-1}) \,, \mbox{ as } \tilde{n}\rightarrow \infty\,,
\end{equation*}
where 
\begin{equation*}
    \tilde{\boldsymbol{\theta}}^* = (\tilde{\boldsymbol{x}}_0^*, \tilde{A}^*) = (\boldsymbol{x}_0^*, A^*/k)\,,
\end{equation*}
recall that we we have built the relationship between $\tilde{\boldsymbol{\theta}}^*$ and $\boldsymbol{\theta}^*$ in Corollary~\ref{corollary:timescaled}.

Using the same way we calculate $V$ and $H$ in the proof of Theorem~\ref{theorem:Asymptotic normality} in Appendix~\ref{proof:theorem3.2}, one obtains that 
\begin{equation*}
    \tilde{V} = 4\int_0^{kT}  R(\tilde{\boldsymbol{\theta}}^*, t)^{\top} R(\tilde{\boldsymbol{\theta}}^*, t)/(kT) \,dt\,,
\end{equation*}
where $R(\tilde{\boldsymbol{\theta}}^*,t)$ is defined in Equation~\eqref{eq:R(theta*,t)}.

Similarly, one obtains that
\begin{equation*}
    \tilde{H} = 2\int_0^{kT}  S(\tilde{\boldsymbol{\theta}}^*, t)^{\top} S(\tilde{\boldsymbol{\theta}}^*, t)/(kT) \,dt\,,
\end{equation*}
where $ S(\tilde{\boldsymbol{\theta}}^*, t)$ is defined in Equation~\eqref{eq:S(theta*,t)}.

Moreover, by using the same way we prove that both $V$ and $H$ are positive definite and nonsingular in the proof of Theorem~\ref{theorem:Asymptotic normality} in Appendix~\ref{proof:theorem3.2}, one obtains that both $\tilde{V}$
and $\tilde{H}$ are positive definite and nonsingular.

The definition of $\hat{\boldsymbol{\theta}}_{\tilde{n}}$ in Corollary~\ref{corollary:consistency timescaled}, that is,
\begin{equation*}
    \hat{\boldsymbol{\theta}}_{\tilde{n}}:=g(\hat{\tilde{\boldsymbol{\theta}}}) := (\hat{\tilde{\boldsymbol{x}}}_0, k\hat{\tilde{A}})\,,
\end{equation*}
shows that the function $g(\boldsymbol{\theta})$ has the following form
\begin{equation*}
    g(\boldsymbol{\theta}) = g(\boldsymbol{x}_0, A )
    = (\boldsymbol{x}_0, kA)\,.
\end{equation*}
By plugging $\tilde{\boldsymbol{\theta}}^*$ in, one obtains that 
\begin{equation*}
    g(\tilde{\boldsymbol{\theta}}^*) 
    = (\tilde{\boldsymbol{x}}_0^*, k\tilde{A}^*\big)
    = (\boldsymbol{x}_0^*, A^*)
    = \boldsymbol{\theta}^*\,.
\end{equation*}

Following the same way we calculate the gradient of $g(\boldsymbol{\theta})$ with respect to $\boldsymbol{\theta}$ at $\tilde{\boldsymbol{\theta}}^*$ in the proof of Corollary~\ref{corollary:normality aggregate} in Appendix~\ref{proof:corollary3.2.1}, one obtains that
\begin{equation*}
     G:=\nabla g(\tilde{\boldsymbol{\theta}}^*) = \cfrac{\partial g(\tilde{\boldsymbol{\theta}}^*)}{\partial \boldsymbol{\theta}} =
    \cfrac{\partial g(\boldsymbol{\theta})}{\partial \boldsymbol{\theta}}\bigg |_{\boldsymbol{\theta} = \tilde{\boldsymbol{\theta}}^*}
    =\begin{bmatrix}
    I_d & \boldsymbol{0}_d & \boldsymbol{0}_d & \cdots & \boldsymbol{0}_d\\
    \boldsymbol{0}_d^{\top} & k & 0 & \cdots & 0\\
    \boldsymbol{0}_d^{\top} & 0 & k & \cdots & 0\\
    \vdots & \vdots & \vdots & \ddots & 0\\
    \boldsymbol{0}_d^{\top} & 0 & 0 & \cdots & k
    \end{bmatrix}
    \in \mathbb{R}^{(d+d^2) \times (d+d^2)}\,.
\end{equation*}

By the multivariate Delta method, one obtains that
\begin{equation*}
    \sqrt{\tilde{n}}\{g(\hat{\tilde{\boldsymbol{\theta}}}) -g(\tilde{\boldsymbol{\theta}}^*) \}\xrightarrow{d} N(\boldsymbol{0}, G\tilde{H}^{-1}\tilde{V} \tilde{H}^{-1}G^{\top}) \,, \mbox{ as } \tilde{n}\rightarrow \infty\,,
\end{equation*}
which is 
\begin{equation*}
    \sqrt{\tilde{n}}(\hat{\boldsymbol{\theta}}_{\tilde{n}} -\boldsymbol{\theta}^* )\xrightarrow{d} N(\boldsymbol{0}, G\tilde{H}^{-1}\tilde{V} \tilde{H}^{-1}G^{\top})\,, \mbox{ as } \tilde{n}\rightarrow \infty\,.
\end{equation*}
\end{proof}

\vskip 0.2in
\bibliography{references}

\end{document}